\theoremstyle{plain}
\newtheorem{theorem}{Theorem}
\newtheorem{corollary}{Corollary}
\newtheorem{lemma}{Lemma}
\newtheorem{remark}{Remark}
\newtheorem{definition}{Definition}
\newtheorem{assumption}{Assumption}
\newtheorem{proposition}{Proposition}
\def\supp{\text{supp}}
\begin{document}

\begin{frontmatter}


\title{Regret Lower Bound and Optimal Algorithm for High-Dimensional Contextual Linear Bandit}
\runtitle{High-Dimensional Contextual Linear Bandit}


\begin{aug}

\author{\fnms{Ke} \snm{Li}\ead[label=e1]{kel6@illinois.edu}}
\address{Department of Statistics, University of Illinois at Urbana-Champaign \\
\printead{e1}}
\author{\fnms{Yun} \snm{Yang}\ead[label=e2]{yy84@illinois.edu}}
\address{Department of Statistics, University of Illinois at Urbana-Champaign \\
\printead{e2}}
\and
\author{\fnms{Naveen N.} \snm{Narisetty}
\ead[label=e3]{naveen@illinois.edu}}
\address{Department of Statistics, University of Illinois at Urbana-Champaign \\
\printead{e3}}

\runauthor{Ke Li et al.}
\affiliation{University of Illinois at Urbana-Champaign}

\end{aug}

\begin{abstract}
In this paper, we consider the multi-armed bandit problem with high-dimensional features. First, we prove a minimax lower bound, $\mathcal{O}\big((\log d)^{\frac{\alpha+1}{2}}T^{\frac{1-\alpha}{2}}+\log T\big)$, for the cumulative regret, in terms of horizon $T$, dimension $d$ and a margin parameter $\alpha\in[0,1]$, which controls the separation between the optimal and the sub-optimal arms. This new lower bound unifies existing regret bound results that have different dependencies on T due to the use of different values of margin parameter $\alpha$ explicitly implied by their assumptions. Second, we propose a simple and computationally efficient algorithm inspired by the general Upper Confidence Bound (UCB) strategy that achieves a regret upper bound matching the lower bound. The proposed algorithm uses a properly centered $\ell_1$-ball as the confidence set in contrast to the commonly used ellipsoid confidence set. In addition, the algorithm does not require any forced sampling step and is thereby adaptive to the practically unknown margin parameter. Simulations and a real data analysis are conducted to compare the proposed method with existing ones in the literature.
\end{abstract}

\begin{keyword}[class=MSC]
\kwd[Primary ]{62L05}
\end{keyword}

\begin{keyword}
\kwd{contextual linear bandit}
\kwd{high-dimension}
\kwd{minimax regret}
\kwd{sparsity}
\kwd{upper confidence bound}
\end{keyword}
\tableofcontents
\end{frontmatter}


\section{Introduction}

In the big data era, the abundance of personalized information enables decision-makers to make individualized decisions for improving the long term reward by incorporating this contextual information. For example, internet marketing companies may utilize users' searching history and demographics to display personalized online advertisements to improve the conversion rate \citep{abe}. In personalized medicine, doctors assign treatments tailored to the individual patient based on the context of the patient's own medical records and genetic information such as biomarkers \citep{bastani15}. In these examples, data are often collected sequentially and decision-makers need to pick the best action to maximize the long term reward based on the current predict response in a sequential fashion. Mathematically, this problem can be formulated as a contextual bandit problem \citep{abe, chu}, where an agent sees a $d$-dimensional feature vector and has to choose among $K$ actions (arms) at each of the $T$ rounds to maximize the cumulative reward.
When the expected reward is a linear function of the features, this problem is known as the linear bandit problem, or the multi-armed bandit problem with linear payoff functions \citep{abe, auer}. Under this setting, \citet{dani}, \citet{chu} and \citet{abbasi11} showed a polynomial dependence of the cumulative regret on dimension $d$ and time horizon $T$. Precisely, \citet{chu} proved a regret upper bound scaling as $\mathcal{O}(\sqrt{dT})$, while \citet{dani} and \citet{abbasi11} proved regret upper bounds scaling as $\mathcal{O}(d\sqrt{T})$.

We focus on the high-dimensional regime where the dimension $d$ of the feature vector can be comparable, or even much larger than the total number of rounds $T$ that plays the role of ``sample size" in the statistical perspective. The high-dimensionality of the features and the dependence between the samples induced by the bandit policy make the high dimensional linear bandit problem very challenging both for methodological development and theoretical analysis. In particular, the ordinary least squares (OLS), a traditional statistical method for linear regression and its variants serve as the cornerstones for most linear bandit algorithms, they require a substantial number of samples to accurately estimate the parameters, incurring the polynomial-$d$ dependence of the regret. In the statistical literature, it is well-known that imposing extra lower-dimensional structures such as sparsity on the model improves the minimax risk, or the sample complexity of the best learning algorithm. For example, under the sparsity assumption that the response, or reward, only depends on a small subset of all features with size $s_0\ll d$, the minimax risk reduces from $\sqrt{d/T}$ to $\sqrt{s_0\log d/T}$, where $s_0$ now plays the role of the ``effective dimension" and the extra $\log d$ term is due to the uncertainty in feature selection. Unfortunately, sparsity generally does not help improve the regret in linear bandit. Roughly speaking, this failure of sparsity adaptation is due to the exploitation nature of a good bandit problem that tends to choose an optimal arm as often as possible, which prevents the exploration of different directions in the feature space and results in an ill-conditioned design matrix. Interestingly, we find that sparsity adaptation is still possible in the high-dimensional linear bandit when features associated with different arms have a sufficient separation. In particular, we show that under the stochastic assumption that feature vectors are random whose population-level covariance matrices are well-conditioned (see Assumption \ref{Assump:stochastic} for a precise definition), the randomness in the features prevents the selected arm to collapse and leads to a regret bound that scales logarithmically in the feature dimension $d$. Moreover, this randomness in the feature implicitly encourages explorations in the feature space, due to which no \emph{forced exploration} is needed. A simple and computationally efficient algorithm combining the Upper Confidence Bound (UCB) technique \citep{auer02a, auer} with the LASSO \citep{lasso} estimator induces sparsity in the estimated regression parameter and enjoys the improved regret bound.

In this paper, we make a number of contributions to the multi-armed bandit literature. Theoretically, we introduce a relaxation of the margin condition (Assumption \ref{Assump:reward}\ref{Assump:margin}) from \citet{wang18} and \citet{bastani15}, which precisely captures the hardness of the linear bandit problem via a margin parameter $\alpha \in [0,+\infty]$ that controls the ``separation" between optimal and sub-optimal arms. With this additional assumption, we prove a regret lower bound of $\mathcal{O}(\log d+\log T)$ for $\alpha = 1$ and $\mathcal{O}((\log d)^{\frac{\alpha+1}{2}} T^{\frac{1-\alpha}{2}})$ for $\alpha \in [0,1)$. The dependence of our lower bound on the margin parameter $\alpha$ unifies both the polynomial and the logarithmic regret bound dependencies on $T$ in the literature. More details are provided in the discussion of related works in Section \ref{Sec:literature}. Methodologically, we propose the use of a properly centered $\ell_1$-ball as the confidence set in contrast to the commonly used ellipsoid confidence sets in the UCB algorithm. We show that the $\ell_1$-ball confidence set captures the uncertainty and implicitly manages the trade-off between exploration versus exploitation. We also prove that the algorithm is optimal up to a logarithmic factor in $T$.

Technically, our proof of the upper bound is based on non-asymptotic analysis for the LASSO estimator for dependent data based on a novel proof strategy for verifying the \textit{restricted eigenvalue condition} for the sample covariance matrix under dependent data. In particular, our analysis uses an anti-concentration technique to show that the randomness in the feature vectors facilitates an automatic feature space exploration and prevents feature collapse. Computationally, optimizing the expected reward jointly over the $\ell_1$-ball confidence set and the covariate set is equivalent to finding the arm that maximizes the estimated reward plus a term proportional to the $\ell_\infty$-norm (dual norm to the $\ell_1$-norm) of the corresponding feature, which is easy to implement and computationally efficient. 

The rest of the paper is organized as follows. In Section \ref{Sec:formulation}, we review the background and discuss our assumptions. After that, we present the regret lower bound for policies. In section \ref{Sec:algorithm}, we propose $\ell_1$-confidence ball based algorithm for the high-dimensional linear bandit problem. We present a novel non-asymptotic analysis of the LASSO estimator under the bandit setting, and provide an regret upper bound for the proposed algorithm. Section \ref{Sec:experiment} reports experimental results comparing with some existing bandit algorithms. Detailed proofs of theorems are deferred to the appendix section. 

\section{Problem Formulation}\label{Sec:formulation}

Let $T$ be the total time number of steps, which is allowed to be unknown, and $K$ be the number of possible actions (arms). At each round $t=1,\ldots,T$, for each arm $a\in[K]$, the learner observes one of $K$ feature vectors $X_{a,t}\in\mathbb R^d$. We consider the high-dimensional regime where the dimension $d$ of the feature space can be comparable or even much larger than the time horizon $T$. We adopt a standard random-design framework by assuming that for each arm $a\in [K]$, the observed sequence of feature vectors $\{X_{a,t}\}_{t\ge 0}$ are i.i.d. sub-Gaussian random vectors drawn from an unknown distribution $\mathbb{P}_{a}$ in $\mathbb{R}^d$. The distributions $\mathbb{P}_a$'s for $a\in[K]$ are allowed to have dependence across different arms. More precisely, recall the following notion of sub-Gaussian random variables. 
\begin{definition}[Sub-Gaussian random variable] \label{Def:1}
 A real-valued random variable $z$ is $\sigma$-sub-Gaussian if $\mathbb{E}[e^{tz}]\le e^{\sigma^2t^2/2}$ for every $t\in \mathbb{R}$. This definition implies $\mathbb{E}[z]=0$ and $Var[z]\le \sigma^2$.
\end{definition}

Let $\mathcal{D}_t=\{X_{a,t}\}_{a=1}^K$ denote the covariate set consisting of feature vectors corresponding to all the arms at time $t\in[T]$. The feature vectors are allowed to be dependent across arms but are assumed to be independent across different time points $t$. The decision-maker has access to $K$ arms and each arm yields a random reward, the expected value of which follows a linear function of the associated feature. All arms share the same unknown parameter $\beta_\ast\in \mathbb{R}^d$ in the linear reward function $\langle \,\cdot\,,\,\beta_\ast\rangle$.
In our setup, if the learner pulls arm $a\in [K]$ at time $t$, then the following random reward is observed $Y_{a,t} :=\langle X_{a,t},\beta_\ast\rangle +\epsilon_{a,t}$, where $\epsilon_{a,t}$ are independent $\sigma$-sub-Gaussian random variables that are also independent of the sequence $\{X_{a,t}\}_{t\ge0}$ for all $a\in [K]$. Define $\sigma$-algebra $\mathcal{F}_t=\sigma(X_{a,1},\epsilon_{a,1},\dots, X_{a,t-1},\epsilon_{a,t-1}, X_{a,t}: \forall a\in[K])$. Then the sequence of $\{\mathcal{F}_t\}_{t=0}^{\infty}$ is a \textit{filtration}, and the errors  $\{\epsilon_{a,t}\}_{t=1}^{\infty}$ for each arm $a\in[K]$ forms a \textit{martingale difference sequence} relative to this filtration.

Our goal is to design a sequential decision-making policy $\pi$ that learns the parameter $\beta_\ast$ over time in order to maximize the expected cumulative reward over the time horizon. Let $\hat{a}_t\in [K]$ denote the arm chosen by the policy $\pi$ at time $t\in [T]$. Then, an admissible policy $\pi$ is a sequence of random variables $\hat{a}_1, \hat{a}_2,\cdots$ taking values in the set $\{1, \cdots, K\}$ such that $\hat{a}_t$ is measurable with respect to the $\sigma$-algebra $\mathcal{F}^{+}_{t}$ generated by the previous feature vectors from each arm and observed rewards of the chosen arms $\{X_{a,s}, Y_{\hat{a}_s, s}: s=1,\cdots, t-1; a=1,\cdots, K\}$ and by the current feature vectors $\{X_{1,t},\cdots, X_{K,t}\}$,
\begin{align*}
    \mathcal{F}^{+}_{t} = \sigma\left(X_{a,1}, Y_{\hat{a}_1, 1}, \cdots, X_{a,t-1}, Y_{\hat{a}_{t-1}, t-1}, X_{a,t}: \forall a\in[K]\right).
\end{align*}
To characterize the quality of the policy $\pi$, we compare it with the oracle policy $\pi^{\ast}$ that uses the knowledge of $\beta_\ast$ to choose the best arm $a_t^{\ast}=\arg \max_a \langle X_{a,t}, \beta_\ast\rangle$ maximizing the expected reward at each round $t\in[T]$. Note that under our random-design assumption, $a_t^\ast$ is also random.  To summarize, any policy incurs at each time $t\in[T]$ an expected regret 
\begin{equation}\label{eq:1}
    r_t=\mathbb{E}\left[\max_{a\in[K]} \langle X_{a,t}, \beta_\ast\rangle-\langle X_{\hat{a}_t,t},\beta_\ast\rangle\right],
\end{equation}
where the expectation is taken with respect to the randomness in the feature vector $\{X_{a,t}\}$ and the stochastic reward through $\{\epsilon_{a,t}\}$.
Our goal is to seek a policy that minimizes the expected cumulative regret $R_T=\sum_{t=1}^T r_t$. 

In the high-dimensional regime, the regression coefficient $\beta_\ast$, which is a high-dimensional parameter vector, is assumed to admit a sparse structure, namely, the number of nonzero entries in $\beta_\ast$ is much smaller than the ambient dimension $d$. We denote $S=\{j: \beta_{\ast,j}\neq 0\}$ as the unknown index set of the nonzero entries of true parameter $\beta_\ast$ that collects all influential feature components. Let $s_0=|S|$ be the number of the nonzero entries in $\beta_\ast$, which satisfies $s_0\ll d$. For each $t\in [T]$, let $X_t\in \mathbb{R}^{t\times d}$ be the time $t$-observed design matrix whose rows correspond to all the selected feature vectors $X_{\hat{a}_s,s}$ for $s=1,\ldots, t$ after $\hat{a}_s$ being pulled, and vector $Y_t=(Y_1,\ldots, Y_t)^T\in\mathbb R^{t}$ collects the observed rewards. 

\subsection{Assumptions}\label{Sec:Assum}

Let $\hat{\Sigma}_t=X_t^TX_t/t$ denote the sample covariance matrix at the end of round $t\in [T]$. For each matrix $A\in\mathbb R^{n\times d}$ and index set $I\subset[d]$, we use $A_I$ to denote the submatrix of $A$ that collects all columns whose indices are in $I$.
Let $\mathbb S^{d\times d}$ denote the cone of all $d\times d$ positive semidefinite matrices. We now define the \emph{sparse eigenvalue condition} for a matrix in $\mathbb S^{d\times d}$.

\begin{definition}[Sparse eigenvalues]\label{Def:sparse_eigenvalue}
 For a $d\times d$ matrix $A\in \mathbb S^{d\times d}$ and $1 \leq m \leq d$, define $\rho_{\min}(m, A)$ and $\rho_{\max}(m, A)$ as the minimum and maximum $m$-sparse eigenvalues of $A$ if 
 \begin{align*}
     \rho_{\min}(m, A) \leq \frac{\Delta^T A \Delta}{\|\Delta\|_2^2} \leq \rho_{\max}(m, A) ~for~any~\Delta \neq 0~with~ \|\Delta\|_0\leq m.
 \end{align*}
\end{definition}
Sparse eigenvalue condition is an important requirement in high-dimensional estimation problems \citep{oliveira, Rudelson}. In particular, it often serves as a bridge for proving the \emph{restricted eigenvalue condition} (c.f.~Definition \ref{Def:restricted_eigenvalue} In Section~\ref{sec:sketched_proof_upper_bound}), which is nearly necessary for accurately learning the unknown parameter $\beta_\ast$ according to high-dimensional statistics literature \citet{highdim}. 

We now state two key assumptions for our theoretical analysis.

\begin{assumption}[On the true reward]\ \label{Assump:reward}
\begin{enumerate}[label={(\alph*)}]
    \item \label{Assump:sparsity} Sparsity condition: There exist positive constants $b$ and $s_0$ such that \\ $\|\beta_\ast\|_1\le b$ and $\|\beta_\ast\|_0\le s_0$. 
    \item \label{Assump:margin} Margin condition: There exist positive constants $\Delta_\ast$, $C_1$ and $\alpha\in[0, + \infty]$, such that for $h\in \left[C_1\sqrt{\frac{\log d}{T}},\Delta_\ast \right]$, $P(\langle X_{a^*_t,t},\beta_\ast\rangle \le \max_{b\neq a^*_t}\langle X_{b,t},\beta_\ast\rangle +h)\le \frac{1}{2}(\frac{h}{\Delta_\ast})^{\alpha}$.
\end{enumerate}
\end{assumption}

The first part of the assumption requires boundedness and sparsity of the true parameter $\beta_\ast$ for its estimability, which are standard requirements in the literature \citep{abbasi11, bastani15}. 

The second part of the assumption controls the probability of the features falling into an $h$-neighborhood of the decision boundary. As $\alpha$ increases, the margin condition becomes stronger since the feature vectors are less likely to fall close to the decision boundary. As a result, it will be easier for a bandit policy to distinguish the optimal arm from sub-optimal arms. In particular, in the most extreme case with $\alpha=+\infty$, there is a deterministic positive gap between the rewards of the optimal arm and sub-optimal arms, making the bandit problem easiest. On the contrary, the margin condition becomes weaker as $\alpha$ decreases as it is satisfied by a large class of distributions. For example, with $\alpha=0$, the feature vectors can have arbitrary distributions around the decision boundary and do not need to exhibit any separation, making the bandit problem hardest. In particular, our condition under $\alpha=0$ corresponds to the setup considered in \citet{abbasi11}, where at each round $t$, the decision-maker is given an arbitrary set $\mathcal{D}_t$ of feature vectors with no separation assumptions on the feature vectors. As a consequence, the cumulative regret of their algorithm scales $\mathcal{O}(\sqrt{T})$ in horizon $T$.

The margin condition with $\alpha=1$ has been assumed by \citet{golden}, \citet{wang18} and \citet{bastani15} and will be satisfied when the density of  $\langle X_{a,t},\beta_\ast\rangle$ is upper bounded uniformly for $\forall a \in[K]$. 
Assumption~\ref{Assump:reward}\ref{Assump:margin} is a probabilistic relaxation of the usual gap assumption made for problem dependent bounds in the multi-armed bandit literature. For example, \citet{abbasi11} assume that there exists some gap $\Delta>0$ between the rewards of the best arm and the ``second best" arm in the covariate set $\mathcal{D}_t$ such that $ \min_{b\neq a^\ast_t}\langle X_{a^\ast_t,t} - X_{b,t}, \beta_\ast\rangle > \Delta$, which corresponds to our margin condition with $\alpha=+\infty$ and $\Delta_{\ast} = \Delta$. Assumption~\ref{Assump:reward}\ref{Assump:margin} relaxes this by allowing the gap $\langle X_{a^*_t,t},\beta_\ast\rangle-\max_{b\neq a^*_t}\langle X_{b,t},\beta_\ast\rangle$ to be arbitrarily close to zero. 
This assumption resembles the usual margin condition in the classification literature \citet{audibert2007fast}. The probability of features falling into an $h$-neighborhood of the decision boundary, $P(\{\langle X_{a^*_t,t},\beta_\ast\rangle\leq \langle X_{b,t},\beta_\ast\rangle+h,\; \forall b\neq a^*_t\})$, diminishes to zero as $h\to 0_+$, where $a^\ast_t$ is the optimal arm at time $t$. In our regret analysis, we will choose a diminishing  sequence of $h$ values $\{h_t:\,t\in[T]\}$ to control the probability of pulling sub-optimal arms for deriving the regret bound in Theorem \ref{Thm:2}. 
The constant $\Delta_\ast$ in this assumption is related to the effective number of candidates for optimal arms. For example, if only $m$ many arms have strictly positive probability to be optimal, then under the assumption that the probability density function of $\langle X_{a,t},\beta_\ast\rangle$ is bounded for each $a\in[K]$, a simple union argument verifies Assumption~\ref{Assump:margin} with $1/\Delta_\ast=O(m)$.

We provide a concrete example of distributions for which the margin condition holds with different values of $\alpha$. We consider a 2-armed bandit problem with the covariate set $\mathcal{D}_t=\{X_t, X_t + Z_t\} \subseteq \mathbb{R}^d$ in each round $t$, where $X_t$ can follow any distribution and $Z_t=(\zeta_t, 0, \cdots, 0)^T \in \mathbb{R}^d$ has a random variable $\zeta_t$ at the first entry and zero otherwise. In addition, we assume that the first entry of the parameter vector $\beta_{\ast}$ is $1$. The margin condition for the 2-armed bandit problem can be expressed as
\begin{align*}
    P(|\langle Z_t,\beta_\ast\rangle| \leq h) = P(|\zeta_t| \leq h)\leq \frac{1}{2}(\frac{h}{\Delta_\ast})^{\alpha}.
\end{align*}
Then, for any random variable $\zeta_t$ with distribution satisfying the above inequality around $0$, the margin condition will hold with $\alpha$. For example, $\zeta_t$ can be drawn from a random signed Beta distribution, i.e., $\zeta_t\sim \varepsilon\, \mathrm{Beta}(\alpha, 1)$, where $\varepsilon$ equals to $\pm 1$ with equal probabilities. In this example, parameter $\alpha>0$ corresponds to the margin parameter. This 2-armed bandit problem is also applicable in real world applications, in which the competing two candidates are very similar and only a few features has marginal difference between them. In such cases, the margin condition can be viewed as an assumption on the difference, e.g., $Z_t$, between the feature vectors and the parameter $\alpha$ characterizes the level of similarity among different feature vectors.

\begin{assumption}[On the features] \
\label{Assump:stochastic}
\begin{enumerate}[label={(\alph*)}]
    \item \label{Assump:boundedx} Boundedness: For some positive constant $x_{\max}$, $\|X_{a,t}\|_{\infty}\le x_{\max}$ for all $a\in [K]$ and $t\in [T]$.
    \item \label{Assump:anti} Anti-concentration: There exists a positive constant $\zeta$ such that for each $a\in [K]$, $t\in [T]$, $v\in \mathbb R^d$ and $h\in \mathbb{R}_+$, $P(\langle X_{a,t},v\rangle^2 \le h\|v\|_2^2)\le \zeta h$.
    \item \label{Assump:eigenvalue} Sparse eigenvalues: There exist constants $\Lambda_0\ge\lambda_0>0$ and $C^*>0$ such that for each $a\in[K]$, the minimum and maximum $C^\ast s_0$-sparse eigenvalues of the matrix $\Sigma_a :=\mathbb{E}[X_{a,t}X_{a,t}^T]$ are bounded below and above by  $\lambda_0^2$ and $\Lambda_0^2$, respectively. In addition, there exist positive constants $\Lambda_1^2\geq \phi_0^2>0$ such that the minimum and maximum $C^\ast s_0$-sparse eigenvalues of $\mathbb{E}[X_{a^*_t}X_{a^*_t}^T|\Gamma_t]$ are bounded below and above by $\phi_0^2$ and $\Lambda_1^2$, respectively. The event $\Gamma_t$ is defined as $\Gamma_t:=\left\{\langle X_{a^*_t,t},\beta_*\rangle \ge \max_{b\neq a^*_t}\langle X_{b,t},\beta_*\rangle +\Delta_*\right\}$.
\end{enumerate}
\end{assumption}
Assumption~\ref{Assump:stochastic}\ref{Assump:boundedx} together with Assumption~\ref{Assump:reward}\ref{Assump:sparsity} ensures that the maximum regret at each time is bounded, since $|\langle X_{a,t},\beta_\ast\rangle| \le x_{\max}b$ by Cauchy-Schwarz inequality. 
Assumption~\ref{Assump:stochastic}\ref{Assump:anti} is an anti-concentration condition that plays a critical role in controlling the estimation accuracy for the regression coefficient $\beta_\ast$.
In particular, it ensures directions of random feature vectors $X_{a,t}$ from each arm to be uniformly scattered so that the sample covariance matrix $\hat{\Sigma}_t$ will not concentrated in a single direction such as the direction of $\beta_\ast$.
This anti-concentration assumption is mild. For example, it is implied by the boundedness of the probability density function of $\langle X_{a,t},v\rangle$ for each $v\in\mathbb R^d$. Moreover, the parameter $\zeta$ only appears in the regret upper bound via a burn-in term and does not affect the leading term. More details can be found in the discussion after Theorem \ref{Thm:2}.

Assumption~\ref{Assump:stochastic}\ref{Assump:eigenvalue} is made to ensure that the design matrix corresponding to each arm is well-behaved so that the sample covariance matrix $\hat\Sigma_t$ satisfies the restricted eigenvalue condition (Definition \ref{Def:restricted_eigenvalue}) with a high probability (c.f.~Proposition~\ref{Prop:2} in Section \ref{Sec:sketched_proof}). The restricted eigenvalue condition is required for the $\ell_1$-error bound analysis of LASSO estimator (Proposition \ref{Prop:1}) and can be implied by the sparse eigenvalue condition (Definition \ref{Def:sparse_eigenvalue}) through the transfer principle (Lemma \ref{lemma:transfer}). Details of the proof can be found in Section \ref{sec:sketched_proof_upper_bound}. Here, $C^\ast$ is a sufficiently large constant that could depend on $K$ and can scale as $\mathcal{O}(K\log K)$ in the worst case. Similar assumptions are commonly adopted in existing work in the high dimensional bandit setting, e.g. \citet{bastani15} and more generally in the high-dimensional statistics literature, e.g. \citet{candes2005} and \citet{wainwright}. This assumption is critical for controlling the estimation accuracy for the regression coefficient $\beta_\ast$ and in turn the regret analysis. 

When samples are independent, several existing results in the literature (e.g. \cite{candes2005} and \cite{wainwright}) show a similar condition, \textit{Restricted Isometry Property} (RIP), on the sample covariance matrix $\hat\Sigma_t$. However, these results on RIP and the corresponding proofs are not applicable in our case as the independence between samples is violated in the bandit setting. In a nutshell, our proof shows that each vector in the restricted cone $\mathbb C(I)=\{v\in\mathbb R^d:\,\|v_{I^c}\|_1\leq 3\|v_I\|_1\}$ with $I=S$ can be well approximated by a $C^\ast s_0$-sparse vector in $\mathbb R^d$ (c.f.~Lemma~\ref{lemma:5} in the appendix), from which we can construct a covering set of this restricted cone with $C^\ast s_0$-sparse vectors of controlled cardinality. Then, we apply concentration inequalities after properly decoupling the dependence structure induced by the bandit policy (c.f.~the proof of Proposition~\ref{Prop:2} in the appendix) to obtain a lower bound on the quadratic form $v^T \hat \Sigma_t v$ for each sparse vector $v\in\mathbb R^d$ in the covering set. Finally, we apply a union bound to prove a lower bound on $v^T \hat \Sigma_t v$ uniformly over all $v\in\mathbb R^d$ belonging to the restricted cone $\mathbb C(S)$, which implies the restricted eigenvalue condition for $\hat \Sigma_t$. Our actual proof is even more delicate than the proof outline described here and guarantees that the restricted eigenvalue does not depend on $K$.  The description after Proposition~\ref{Prop:2} illustrates the high level idea of the actual proof.

To conclude this subsection, we provide a simple example where all our assumptions are satisfied with $(\zeta,\Delta_\ast)$ not depending on the dimension $d$. For each arm, suppose the feature vector follows a truncated multivariate normal distribution truncated on the set $\|X_{a,t}\|_\infty\le 1$ with different mean vectors $\mu_a$ and the same identity matrix as the covariance matrix. Under this setup, the inner product $\langle X_{a,t},v\rangle$ follows a truncated normal distribution with bounded density function, so the first part of Assumption \ref{Assump:stochastic} is satisfied. Assumption \ref{Assump:reward}\ref{Assump:margin} with $\alpha\in[0,1]$ can be verified similarly as the order statistics of $\{\langle X_{a,t},\beta_\ast\rangle : a\in[K]\}$ have bounded density functions. For truncated normal distributions, the optimal region $\Gamma_t$ is well-spread, so the eigenvalue condition (Assumption \ref{Assump:stochastic}\ref{Assump:anti} and \ref{Assump:stochastic}\ref{Assump:eigenvalue}) is also satisfied. In the same example, non-identity covariance matrix can also be allowed if the eigenvalues of the covariance matrix are lower and upper bounded by some absolute positive constants.



\subsection{Regret Lower Bound}\label{Sec:lower_bound}

In this section we provide a regret lower bound for any policy $\pi$ in the linear bandit environment under the assumptions made in Section \ref{Sec:Assum}. We also provide a sketched proof for Theorem \ref{Thm:3} later in Section \ref{sec:sketched_proof_lower_bound}.


\begin{theorem}\label{Thm:3}
Suppose the margin parameter $\alpha \in [0,1]$. Let $\mathcal{P}_{\alpha}$ be the class of distributions of $(X_{a,t}, Y_{a,t}:a\in[d])$, where the feature vector $X_{a,t}$ is drawn from $\mathbb{P}_{a}$ and the reward $Y_{a,t}=\langle X_{a,t}, \beta_{\ast}\rangle + \epsilon_{a,t}$ for $a\in[K]$ satisfies Assumptions 1--2 with parameter $\alpha$ in Assumption~\ref{Assump:stochastic}\ref{Assump:margin}. Then for large enough horizon $T$,
\begin{equation}
  \inf_{\pi} \sup_{P_{X,Y} \in \mathcal{P}_{\alpha}}  R_{T}\left(\pi, \pi^{*}\right) \geq C_L \left((\log d)^{\frac{\alpha+1}{2}}T^{\frac{1-\alpha}{2}}+\log T\right),
\end{equation}
where $C_L$ is a constant independent of $T$ and $d.$
\end{theorem}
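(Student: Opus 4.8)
The plan is to lower bound the minimax regret by exhibiting a family of statistically indistinguishable instances and then converting indistinguishability into unavoidable regret through the margin condition. I would establish the two summands by essentially separate constructions and combine them via $\max(A,B)\ge (A+B)/2$: a high-dimensional packing producing the $(\log d)^{(\alpha+1)/2}T^{(1-\alpha)/2}$ term, and a dimension-free two-armed construction producing the $\log T$ term.

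For the main term I would fix the margin scale $\Delta_\ast=\Theta(1)$ and sparsity $s_0=O(1)$, and use a packing $\{\beta^{(j)}\}$ of cardinality $M$ that places a constant-magnitude signal on different coordinates, e.g. $\beta^{(j)}=\gamma e_j$ with $\gamma=\Theta(1)$, so that $\log M\asymp \log d$. The feature law would be engineered so that the induced gap $\langle X_{a_t^\ast,t},\beta_\ast\rangle-\max_{b\neq a_t^\ast}\langle X_{b,t},\beta_\ast\rangle$ has exactly the boundary behavior $P(\mathrm{gap}\le h)\asymp (h/\Delta_\ast)^{\alpha}$ required by Assumption~\ref{Assump:reward}\ref{Assump:margin}, with the signed-Beta template of Section~\ref{Sec:Assum} as the building block. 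The crux is a per-round bound: if no policy can resolve the active direction to accuracy better than $\epsilon_t$ from the first $t$ observations, it must misidentify the optimal arm whenever the gap is $\lesssim \epsilon_t$, so the margin law forces $r_t\gtrsim \int_0^{\epsilon_t} h\,dP(\mathrm{gap}\le h)\asymp \epsilon_t^{\,\alpha+1}/\Delta_\ast^{\alpha}$. Certifying the estimation floor $\epsilon_t\gtrsim \sqrt{(\log d)/t}$ by a peeling multiple-hypothesis argument — on each dyadic block the members $\beta^{(j)}$ stay indistinguishable until the accumulated information reaches order $\log d$, which Fano's inequality turns into a constant misidentification probability — and summing yields $R_T\gtrsim (\log d)^{(\alpha+1)/2}\sum_{t\le T}t^{-(\alpha+1)/2}\asymp (\log d)^{(\alpha+1)/2}T^{(1-\alpha)/2}$ for $\alpha\in[0,1)$.

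The main obstacle is controlling the information accumulated under an adaptively chosen arm sequence. Since $\{X_{a,s},Y_{\hat a_s,s}\}$ is collected according to the policy rather than i.i.d., a fixed-design KL bound is unavailable; instead I would apply the divergence decomposition along the filtration $\mathcal{F}^{+}_{t}$, where the policy-dependent likelihood factors are shared by the two instances and cancel, leaving $\mathrm{KL}(P^{(j)}\Vert P^{(k)})=\tfrac{1}{2\sigma^2}\sum_{s\le t}\mathbb{E}_{j}\langle X_{\hat a_s,s},\beta^{(j)}-\beta^{(k)}\rangle^{2}\lesssim t\gamma^{2}$, which drives the Fano step. Intertwined with this is a second delicate point: constructing a single feature law that simultaneously realizes the prescribed margin exponent $\alpha$ and the two-sided anti-concentration and sparse-eigenvalue bounds of Assumption~\ref{Assump:stochastic}, so that every packing member genuinely lies in $\mathcal{P}_\alpha$; verifying the conditional sparse-eigenvalue bound given the event $\Gamma_t$ in Assumption~\ref{Assump:stochastic}\ref{Assump:eigenvalue} is the most technical part of this verification.

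Finally, the $\log T$ term follows from a standard dimension-free construction: two arms separated by a constant gap $\Delta_\ast$, for which a change-of-measure argument of Lai--Robbins type shows that any low-regret policy must pull the sub-optimal arm $\Omega(\log T)$ times, giving $R_T\gtrsim \log T$; this also covers the boundary $\alpha=1$, where the first construction contributes only $\log d$. Taking the larger of the two lower bounds and using $\max(A,B)\ge (A+B)/2$ yields the stated bound.
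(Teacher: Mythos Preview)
Your proposal has a genuine gap in the main construction. You set the signal strength $\gamma=\Theta(1)$ and then try to certify a per-round estimation floor $\epsilon_t\gtrsim\sqrt{(\log d)/t}$ via Fano. But your own KL computation gives $\mathrm{KL}(P^{(j)}\Vert P^{(k)})\lesssim t\gamma^{2}=\Theta(t)$, and Fano's inequality is only informative when this is $O(\log d)$; once $t\gg\log d$ the packing members $\beta^{(j)}=\gamma e_j$ become statistically distinguishable and the misidentification bound collapses. The ``peeling over dyadic blocks'' you invoke cannot rescue this: the minimax regret is $\sup_{P}\sum_{t}r_t(P)$, so you need a \emph{single} instance (or prior) in $\mathcal{P}_\alpha$ that is hard over the entire horizon, and $\beta_\ast$ cannot be changed from block to block. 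Consequently your per-round inequality $r_t\gtrsim(\log d/t)^{(\alpha+1)/2}$ is only justified for $t\lesssim\log d$, and summing over those rounds alone yields a bound far weaker than $(\log d)^{(\alpha+1)/2}T^{(1-\alpha)/2}$.

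The paper repairs exactly this point by making the signal horizon-dependent: it takes $\theta\asymp\sqrt{(\log d)/T}$, so that the accumulated KL after \emph{all} $T$ rounds is still $O(\log d)$ and Fano gives a constant misidentification probability uniformly in $t\le T$. The margin exponent $\alpha$ is then injected not through a signed-Beta feature law but through an auxiliary discrete coordinate $X_0\in\{-1,0,1\}$ with $P(X_0=0)\asymp\theta^{\alpha}$: when $X_0\neq 0$ the gap is $\Theta(1)$ and the optimal arm is obvious, while when $X_0=0$ the gap is $\Theta(\theta)$ and the optimal arm depends on the unknown active coordinate. This yields a \emph{constant} per-round regret $\asymp P(X_0=0)\cdot\theta\asymp\theta^{\alpha+1}$, and multiplying by $T$ gives $T\,(\log d/T)^{(\alpha+1)/2}=(\log d)^{(\alpha+1)/2}T^{(1-\alpha)/2}$. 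Note that the lower endpoint $h\ge C_1\sqrt{(\log d)/T}$ in Assumption~\ref{Assump:reward}\ref{Assump:margin} is precisely what allows this $T$-dependent construction to remain in $\mathcal{P}_\alpha$. Your treatment of the $\log T$ summand via a dimension-free two-arm change-of-measure argument is fine and matches what the paper does by deferring to the Goldenshluger--Zeevi lower bound.
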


Our Theorem \ref{Thm:3} provides a more general result compared to existing results in the literature considering both the dimension $d$ and horizon $T$ by leveraging the margin condition (Assumption \ref{Assump:reward}\ref{Assump:margin}). The theorem shows a poly-logarithmic dependence on dimension $d$. In addition, it unifies the polynomial and logarithmic dependence on horizon $T$ through the margin parameter $\alpha$, which describes the hardness of distinguishing the optimal arm from the sub-optimal arms. In particular, when $\alpha=1$, the lower bound scales as $\Omega(\log d + \log T)$, which is logarithmic in both $d$ and $T$; when $\alpha=0$, the lower bound scales as $\Omega\left(\sqrt{T\log d}\right)$, which is poly-logarithmic in $d$ and polynomial in $T$.

For comparison, \citet{golden09} prove a lower bound of $\Omega(T^{\frac{1-\alpha}{2}})$ for one-dimensional one-armed linear bandit problem, where they assume a margin condition corresponding to Assumption \ref{Assump:reward}\ref{Assump:margin} with $\alpha\in (0, 1]$. In addition, they propose a policy with forced sampling, which achieves optimal regret in horizon $T$, i.e., $\mathcal{O}(T^{\frac{1-\alpha}{2}})$, when $\alpha\in(0, 1)$. Later, \citet{golden} prove a lower bound of $\Omega(\log T)$ for low-dimensional linear bandit problem, where they also assume a margin condition which corresponds to the $\alpha=1$ case in Assumption \ref{Assump:reward}\ref{Assump:margin}. Meanwhile, they propose the OLS-bandit algorithm with $\mathcal{O}(d^3\log T)$ regret upper bound, which is sub-optimal in $d$ according to Theorem \ref{Thm:3}. \citet{paat2010} also prove a regret lower bound of $\Omega(d\sqrt{T})$ for low-dimensional linear bandit problem, where the covariate set $\mathcal{D}_t$ is a compact set of infinitely many feature vectors, e.g., the unit sphere $S^{d-1} \subseteq \mathbb{R}^d$. Later, \citet{chu} prove a lower bound of $\Omega(\sqrt{Td})$ for low-dimensional contextual bandit problem, where no margin condition is assumed and the regret lower bound therein corresponds to the worst case when $\alpha=0$, and propose the LinUCB algorithm with $\tilde{\mathcal{O}}(\sqrt{dT})$ regret upper bound, which is near-optimal according to Theorem \ref{Thm:3}. Our lower bound result unifies the existing results through the parameter $\alpha$ and applies to the high-dimensional linear bandit problems. More details of the comparison are deferred to Section \ref{Sec:literature}.

\begin{remark}
The problem formulation considered in \citet{golden} and \citet{bastani15} is slightly different from ours, where they assume that at each round, there is one common feature vector shared by all arms and each arm has its own parameter vector. Their formulation can be mathematically reparametrized into our formulation. In particular, they assume that each of the $K$ arms has its own parameter vector $\beta_{\ast,a}\in \mathbb{R}^d$ for $a\in[K]$. The common feature vector $X_t$ in each round is drawn i.i.d. from a distribution. Then, this formulation can be embedded into our formulation. Specifically, the feature vector $X_{a,t}$ for the $a$-th arm is a $Kd$-dimensional vector defined as $X_{a,t}:=(0,\cdots, 0, X_t^T, 0, \cdots, 0)^T$, where the $a$-th block of $X_{a,t}$ is $X_{t}$ and zero elsewhere. The common parameter $\beta_{\ast}:=(\beta_{\ast,1}^T, \cdots, \beta_{\ast,K}^T)^T$ in our setting is a concatenation of all vectors $\beta_{\ast,a}$ for $a\in[K]$. Due to the reparametrization, the regret lower bound result, i.e., $\Omega(\log T)$, proved in \citet{golden} can also applied in our formulation.
\end{remark}



\section{$\ell_1$-Confidence Ball Algorithm for High-dimensional Linear Bandit}
\label{Sec:algorithm}

In this section we propose a conceptually simple and computationally efficient bandit algorithm. The proposed algorithm is nearly optimal---it matches the regret lower bound when margin parameter $\alpha$ belongs to $[0, 1)$ and is optimal up to a factor of $\log T$ when $\alpha=1$.

\subsection{Optimism in the Face of Uncertainty}
\label{Sec:Confidence_set}

In this section, we motivate our $\ell_1$-confidence ball based method under the general framework of optimism in the face of uncertainty (OFU) introduced in \citet{auer} and \citet{abbasi11}. More precisely, suppose that at the beginning of each round, say $t+1$, we can construct a confidence set $\mathcal C_{t+1}\subseteq \mathbb{R}^d$ for the parameter $\beta_\ast$ using the past selected features $X_{\hat{a}_1,1},\ldots, X_{\hat{a}_{t},t}$ and observed rewards $Y_1, \ldots, Y_{t}$ up to round $t$, so that $\beta_\ast$ belongs to $\mathcal C_{t+1}$ with high probability.
Then a OFU-based algorithm chooses the action-estimate pair 
\begin{equation}\label{eq:2}
    (X_{\hat{a}_{t+1},t+1},\tilde{\beta}_{t+1})=\underset{(x,\beta)\in \mathcal{D}_{t+1}\times \mathcal{C}_{t+1}}{\arg\max} \langle x,\beta\rangle,
\end{equation}
which jointly maximizes the expected reward within the covariate set $\mathcal{D}_{t+1}=\{X_{a,t+1}\}_{a=1}^K$ and the confidence set $\mathcal{C}_{t+1}$ of $\beta_{\ast}$. The shape and size of the confidence set $\mathcal C_{t+1}$ reflects our uncertainty on the unknown parameter $\beta_\ast$.
In high-dimensional linear bandits, we consider to use the LASSO \citep{lasso} estimator as the center of $\mathcal C_{t+1}$, since it adapts the sparsity structure of the parameter. As a remark, other penalized estimators such as MCP \citep{zhang2010nearly} and SCAD \citep{fan2001variable} can also be considered as the center for the confidence set, though need more sophisticated regret analysis.

At the beginning of round $t+1$, we form the LASSO estimator $\hat\beta_t$ for $\beta_\ast$ by viewing the observations as coming from the linear model $Y=X\beta_\ast+\epsilon$, with design matrix $X=X_t\in \mathbb{R}^{t\times d}$, response vector $Y=Y_{t}\in\mathbb{R}^{t}$, both defined in Section~\ref{Sec:formulation}, and noise vector $\epsilon\in \mathbb{R}^{t}$ whose entries are independent $\sigma$-sub-Gaussian random variables. Given a regularization parameter $\lambda\ge 0$, the LASSO estimator is the solution of
\begin{equation}\label{eq:4}
        \hat{\beta}_t\in \arg\min_{\beta}\{\mathcal{L}_t(\beta)+\lambda\|\beta\|_1\},
\end{equation}
where $\mathcal{L}_t$ denotes the empirical loss function at round $t$. In our regression framework, the loss function is the least squares objective $\mathcal{L}_t(\beta)=\frac{1}{2t}\|Y-X\beta\|_2^2$. Proposition~\ref{Prop:1} in Section \ref{sec:sketched_proof_upper_bound} suggests a choice  of the regularization parameter as $\lambda=\lambda_t=2\sigma x_{\max}\sqrt{(2\log t+2\log d)/t}$, where constant $x_{\max}$ (as well as the $\phi_0$ in the following display) is defined in Section~\ref{Sec:Assum}.


Having determined the center of the confidence set $\mathcal C_t$, it remains to determine its shape. Our proposal of the $\ell_1$ ball as the shape of the confidence set is motivated by both theoretical and computational considerations: (i) Theoretically, the shape of the confidence set is important because the cumulative regret can be roughly characterized by the sum of the ``volumes" of the confidence sets $\mathcal C_t$ over the time horizon. In the low-dimensional regime where $d\ll T$,  since all norms in the Euclidean space are equivalent (up to some constant depending on the dimension), the shape of the ball induced by a norm has limited impact on the regret. In contrast, as the dimension $d$ grows, unit balls under different norms have drastically different volumes making the choice of the shape crucial. For example, the volume ratio between unit $\ell_1$-ball and unit $\ell_2$-ball is $O(d^{-d/2})$, indicating the importance of using the right ``norm" for constructing $\mathcal C_t$. As we will show in Corollary \ref{Cor:confidence_set} in the appendix, the use of $\ell_1$-ball improves the dependence of the regret on the feature dimension from $d$ to $s_0$ modulo $\log d$ terms where recall that $s_0$ is the sparsity level of the model. (ii) Computationally, we want to maintain the convexity of the optimization problem~\eqref{eq:2} by requiring the confidence set $\mathcal C_t$ to be convex. These two factors together motivate us to use a properly centered $\ell_1$-ball for $\mathcal C_t$, since $q=1$ is the smallest number maintains the convexity of the $\ell_q$-ball over $q\geq 0$. 

Corollary~\ref{Cor:confidence_set} in the appendix provides a high probability upper bound to the $\ell_1$-distance $\|\hat\beta_t-\beta_\ast\|_1$ between $\hat\beta_t$ and $\beta_\ast$, motivating us to use the following confidence set 
\begin{equation}\label{eq:5}
    \mathcal{C}_{t+1}=\left\{\beta\in \mathbb{R}^d: \|\hat{\beta}_t-\beta\|_1\le \tau_{t} \right\},
\end{equation}
where $\tau_{t}  = \frac{384s_0 \sigma x_{\max}}{\phi_0^2}\sqrt{\frac{2\log t+2\log d}{t}}.$

Note that the size of the confidence set scales poly-logarithmically in feature dimension $d$ (i.e. $\mathcal{O}(\sqrt{\log d})$), which improves upon the $\mathcal{O}(\sqrt{d})$ size of the confidence ellipsoid in \citet{abbasi11} that has a polynomial dependence on dimension $d$. This is because we have leveraged on the sparsity structure of $\beta_\ast$ captured by the LASSO estimator.

\subsection{$\ell_1$-Confidence Ball Based Algorithm}

In this subsection, we formally describe our algorithm for the high-dimensional linear bandit problem, which is summarized in Algorithm~\ref{alg:l_1_ball} below.
The algorithm takes as inputs an initial regularization parameter $\lambda_0$ and an initial diameter of the confidence set $\tau_0$. Recall from the previous subsection that the $\ell_1$-confidence ball at time $t$ is $\mathcal{C}_t=\{\beta: \|\beta-\hat{\beta}_{t-1}\|_1\le \tau_{t-1}\}$, where $\hat{\beta}_{t-1}$ is the LASSO estimator given by~\eqref{eq:4}, and $\tau_{t-1}$ is given in~\eqref{eq:5}. The action selection rule~\eqref{eq:2} of the algorithm can be reformulated as
\begin{align}\label{eq:3}
    X_{\hat{a}_t,t}& =\arg\max_{x\in \mathcal{D}_t} \big\{\max_{\beta\in \mathcal{C}_t}\langle x, \beta\rangle\big\} 
    = \arg\max_{x\in \mathcal{D}_t} \big\{\max_{\|u\|_1\le \tau_{t-1}}\langle x, \hat{\beta}_{t-1}+u\rangle\big\} \nonumber \\
    &=\arg\max_{x\in \mathcal{D}_t} \big\{\langle x, \hat{\beta}_{t-1}\rangle + \tau_{t-1}\|x\|_{\infty}\big\}.\\[-5ex] \nonumber
\end{align}

The criterion function in the second line in equation~\eqref{eq:3} is composed of a point estimate of the expected reward plus the confidence width of $x$ through its $\ell_\infty$ norm, making the optimization simple and the algorithm computationally efficient. The confidence width, also known as the exploration bonus, encourages the learner to visit new areas in the feature space to attain an optimal exploration versus exploitation trade-off. Computation-wise, the OLS-bandit algorithm of \citet{golden} and the LASSO-bandit algorithm of \citet{bastani15} require a pre-processing step to select a subset of arms based on forced-sample estimation; the OFUL-LS algorithm of \citet{abbasi11} needs optimization over an ellipsoid that involves matrix inversion, which makes it computationally expensive.

\begin{remark} 
Our proposed algorithm reduces to a greedy algorithm when all $X_{a,t}$'s have the same $\ell_\infty$-norm. However,
in cases when $X_{a,t}$'s have very different magnitudes across arms, their sup-norms are able to reflect the prediction uncertainty in the point estimates $\langle X_{a,t}, \hat{\beta}_{t-1}\rangle$'s. As a consequence, our algorithm encourages explorations on those arms with high uncertainty while the greedy algorithm completely ignores this information and tends to exhibit poor performance.
\end{remark}


\begin{algorithm}
   \caption{$\ell_1$-Confidence Ball Based Algorithm}
   \label{alg:l_1_ball}
\begin{algorithmic}[1]
   \Require Initial regularization parameter $\lambda_0$, and initial diameter $\tau_0$ for the confidence set.
   \State Initialize $\hat{\beta}_0$ by $0$ in $\mathbb{R}^d$.
   \For{$t=1$ {\bfseries to} $T$}
   \State Observe $\mathcal{D}_t=\{X_{a,t}\}_{a=1}^K$.
   \State Obtain $\hat{\beta}_{t-1}$ by solving equation~\eqref{eq:4}, i.e.  $\hat{\beta}_{t-1}\in \arg\min_{\beta}\{\mathcal{L}_{t-1}(\beta)+\lambda_{t-1}\|\beta\|_1\}$.
   \State Construct confidence set $\mathcal{C}_t=\{\beta\in \mathbb{R}^d:\|\hat{\beta}_{t-1}-\beta\|_1\le \tau_{t-1}\}$.
   \State Select feature $X_{\hat{a}_t,t}$ from equation~\eqref{eq:3}, and observe $Y_t=\langle X_{\hat{a}_t,t}, \beta_\ast\rangle +\epsilon_t$.
   \State Set $\lambda_t=\lambda_0\sqrt{\frac{\log d+\log t}{t}}$ and $\tau_t=\tau_0\sqrt{\frac{\log d+\log t}{t}}$.
   \EndFor
\end{algorithmic}
\end{algorithm}

\subsection{Regret Analysis of Proposed Algorithm}\label{Sec:theory}


In this subsection, we provide our theoretical regret analysis. The following theorem provides an upper bound on the cumulative regret for the proposed $\ell_1$-confidence ball based algorithm. 




%

\begin{theorem} \label{Thm:2}
Suppose that Assumptions 1--2 hold and $K\ge 2$, $d\ge 1$, $\lambda_0=2\sqrt{2}\sigma x_{\max}$ and $\tau_0=\frac{384\sqrt{2}s_0 \sigma x_{\max}}{\phi_0^2}$. The expected cumulative regret of our algorithm at time horizon $T$ can be bounded as
\begin{align}\label{eq:6}
    R_T \leq & I_1^{\alpha} + \overbrace{C_3 x_{\max}b{\Lambda_0^2} (K\log K) s_0\log d}^{I_2}  \nonumber\\
     &~~~ + \overbrace{C_4 \sigma^2x_{\max}^5 b s_0^2\zeta^2K^2\log d/{\Delta_\ast^2}}^{I_3} + \overbrace{C_5x_{\max}b\log T}^{I_4},
\end{align}
where
\begin{align*}
    I_1^{\alpha} = \left\{\begin{array}{ll}
        C_2\frac{s_0^{\alpha+1}\sigma^{\alpha+1}x_{\max}^{2(\alpha+1)}}{\Delta_\ast^{\alpha}\phi_0^{2(\alpha+1)}}(\log d)^{\frac{\alpha+1}{2}} T^{\frac{1-\alpha}{2}}, & \text{when}~~\alpha\in[0,1), \\
        C_2\frac{s_0^2\sigma^2x_{\max}^4}{\Delta_\ast\phi_0^4}[\log d+\log T]\log T, &  \text{when}~~\alpha=1,\\
        C_2\frac{s_0^{2}\sigma^{2}x_{\max}^{4}}{(\alpha-1)\Delta_\ast \phi_0^{4}}\log d, & \text{when}~~\alpha\in(1, +\infty), \\
        C_2\frac{s_0^{2}\sigma^{2}x_{\max}^{4}}{\Delta_\ast \phi_0^{4}}\log d, & \text{when}~~\alpha=+\infty,
    \end{array}\right.
\end{align*}
which is the leading term of the regret bound that depends on the margin parameter $\alpha$ defined in Assumption~\ref{Assump:reward}\ref{Assump:margin}, and 
$C_2, C_3, C_4$, $C_5$ are positive constants not depending on $T$, $d$ or $\alpha$. 
\end{theorem}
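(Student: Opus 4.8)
The plan is to run the standard \emph{optimism in the face of uncertainty} analysis, but to upgrade the crude per-round bound into the $\alpha$-dependent rate $I_1^\alpha$ by exploiting the margin condition (Assumption~\ref{Assump:reward}\ref{Assump:margin}). Throughout I would condition on the information $\mathcal F^+_{t-1}$ accumulated through the end of round $t-1$, with respect to which the LASSO estimator $\hat\beta_{t-1}$, the radius $\tau_{t-1}$, and hence the confidence set $\mathcal C_t$ are all measurable, while the fresh covariate set $\mathcal D_t=\{X_{a,t}\}_a$ is independent of $\mathcal F^+_{t-1}$. Let the good event be $\mathcal E_t=\{\beta_\ast\in\mathcal C_t\}=\{\|\hat\beta_{t-1}-\beta_\ast\|_1\le\tau_{t-1}\}$, which is $\mathcal F^+_{t-1}$-measurable. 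First I would record the optimism inequality: by H\"older's inequality, on $\mathcal E_t$ one has $|\langle X_{a,t},\beta_\ast-\hat\beta_{t-1}\rangle|\le\tau_{t-1}\|X_{a,t}\|_\infty$ for every arm $a$, so the selection rule~\eqref{eq:3} guarantees that the index $U_a:=\langle X_{a,t},\hat\beta_{t-1}\rangle+\tau_{t-1}\|X_{a,t}\|_\infty$ maximized by $\hat a_t$ obeys $U_{\hat a_t}\ge U_{a^\ast_t}\ge\langle X_{a^\ast_t,t},\beta_\ast\rangle$. Combining with $U_{\hat a_t}\le\langle X_{\hat a_t,t},\beta_\ast\rangle+2\tau_{t-1}\|X_{\hat a_t,t}\|_\infty$ and Assumption~\ref{Assump:stochastic}\ref{Assump:boundedx} yields the per-round bound $\langle X_{a^\ast_t,t},\beta_\ast\rangle-\langle X_{\hat a_t,t},\beta_\ast\rangle\le2\tau_{t-1}x_{\max}$ on $\mathcal E_t$. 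The key observation is that this regret is moreover \emph{zero} unless the optimal gap $h_t:=\langle X_{a^\ast_t,t},\beta_\ast\rangle-\max_{b\ne a^\ast_t}\langle X_{b,t},\beta_\ast\rangle$ satisfies $h_t\le2\tau_{t-1}x_{\max}$, since a larger gap forces $U_{a^\ast_t}>U_b$ for every suboptimal $b$.

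Next I would assemble the expected regret by splitting $r_t=\mathbb E[(\cdots)\mathbbm{1}_{\mathcal E_t}]+\mathbb E[(\cdots)\mathbbm{1}_{\mathcal E_t^c}]$. On $\mathcal E_t$, taking the conditional expectation given $\mathcal F^+_{t-1}$ and applying the margin condition to the independent draw $\mathcal D_t$ with $h=2\tau_{t-1}x_{\max}$ gives
\[
\mathbb E\big[(\mathrm{regret}_t)\,\mathbbm{1}_{\mathcal E_t}\mid\mathcal F^+_{t-1}\big]\le 2\tau_{t-1}x_{\max}\cdot\tfrac12\big(2\tau_{t-1}x_{\max}/\Delta_\ast\big)^\alpha,
\]
valid once $2\tau_{t-1}x_{\max}\in[C_1\sqrt{(\log d)/T},\,\Delta_\ast]$; the lower endpoint can be arranged by the choice of $\tau_0$, while the upper endpoint $2\tau_{t-1}x_{\max}\le\Delta_\ast$ fails only during an initial burn-in. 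On $\mathcal E_t^c$ the regret is at most $2x_{\max}b$ by Assumptions~\ref{Assump:reward}\ref{Assump:sparsity} and~\ref{Assump:stochastic}\ref{Assump:boundedx}, and Corollary~\ref{Cor:confidence_set} bounds $\mathbb P(\mathcal E_t^c)$; summing this contribution produces the $\log T$ term $I_4$. The burn-in, during which either the restricted eigenvalue guarantee of Proposition~\ref{Prop:2} has not yet activated or $2\tau_{t-1}x_{\max}>\Delta_\ast$, is absorbed by the crude bound $2x_{\max}b$ per round; its length is dictated by the sample complexity of Proposition~\ref{Prop:2} (the source of the $K\log K$ and $\Lambda_0^2$ factors in $I_2$) and by the anti-concentration constant $\zeta$ together with $\Delta_\ast$ (the source of $I_3$).

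It then remains to collect the leading term. Substituting $\tau_{t-1}\asymp\frac{s_0\sigma x_{\max}}{\phi_0^2}\sqrt{(\log d+\log t)/t}$ into the margin bound and summing over the non-burn-in rounds gives
\[
I_1^\alpha\asymp\frac{1}{\Delta_\ast^\alpha}\Big(\frac{s_0\sigma x_{\max}^2}{\phi_0^2}\Big)^{\alpha+1}\sum_{t}\Big(\frac{\log d+\log t}{t}\Big)^{\frac{\alpha+1}{2}}.
\]
Estimating $\sum_{t\le T}t^{-(\alpha+1)/2}(\log(dt))^{(\alpha+1)/2}$ by an integral splits into the four branches of $I_1^\alpha$: exponent $(\alpha+1)/2<1$ gives $T^{(1-\alpha)/2}(\log d)^{(\alpha+1)/2}$ for $\alpha\in[0,1)$; the borderline $(\alpha+1)/2=1$ gives $[\log d+\log T]\log T$ for $\alpha=1$; and $(\alpha+1)/2>1$ yields a convergent series $\asymp(\alpha-1)^{-1}\log d$ for $\alpha\in(1,\infty)$ and $\asymp\log d$ for $\alpha=+\infty$.

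The hard part is step two: making the high-probability confidence-set guarantee of Corollary~\ref{Cor:confidence_set}---which itself rests on verifying the restricted eigenvalue condition for the \emph{dependent} design $\hat\Sigma_t$ via Proposition~\ref{Prop:2} and the transfer principle---interface cleanly with the expected-regret decomposition. In particular, one must apply the margin condition \emph{conditionally} on $\mathcal F^+_{t-1}$, so that the fresh, independent covariates $\mathcal D_t$ genuinely satisfy Assumption~\ref{Assump:reward}\ref{Assump:margin} while $\tau_{t-1}$ is held frozen, and one must track the two distinct burn-in mechanisms (eigenvalue activation versus $2\tau_{t-1}x_{\max}\le\Delta_\ast$) separately so that their combined length reproduces $I_2+I_3$ rather than an inflated bound.
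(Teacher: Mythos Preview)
Your proposal is correct and follows essentially the same route as the paper: split the horizon into a burn-in phase of length $T''$ (giving $I_2+I_3$ via the crude bound $2bx_{\max}$ per round), a bad-event contribution whose probability is controlled by Corollary~\ref{Cor:confidence_set} (giving $I_4$), and a main phase where optimism caps the instantaneous regret at $\xi_t=2x_{\max}\tau_{t-1}$, the margin condition bounds the probability this regret is nonzero by $(\xi_t/\Delta_\ast)^\alpha$, and summing $\xi_t^{\alpha+1}/\Delta_\ast^\alpha$ produces the four branches of $I_1^\alpha$. The only cosmetic difference is that the paper conditions on the restricted-eigenvalue event $A_t=\{\hat\Sigma_t\in\mathcal C(\mathrm{supp}(\beta_\ast),\phi_\ast)\}$ rather than directly on your $\mathcal E_t=\{\beta_\ast\in\mathcal C_t\}$; since $A_t$ implies $\mathcal E_t$ up to an additional $2/t$ slack via Proposition~\ref{Prop:1}, the two decompositions are interchangeable and your version is arguably cleaner.
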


The regret bound provided by Theorem \ref{Thm:2} for our algorithm shows that the regret of the algorithm grows poly-logarithmically in $d$, i.e., $R_T=\mathcal{O}((\log d)^{\frac{\alpha+1}{2}})$, when $\alpha\in[0,1)$; logarithmically in $d$, i.e., $\mathcal{O}(\log d)$, when $\alpha\in[1, +\infty]$. Meanwhile, the cumulative regret grows polynomially in $T$, i.e., $R_T=\mathcal{O}(T^{\frac{1-\alpha}{2}})$, when $\alpha \in [0, 1)$; poly-logarithmically in $T$, i.e., $R_T=\mathcal{O}((\log T)^2)$, when $\alpha=1$; and logarithmically in $T$, i.e., $R_T=\mathcal{O}(\log T)$, when $\alpha\in (1, +\infty]$. In addition, for $\alpha\in (1,+\infty)$, the term $I_{1}^{\alpha}$ will increase as $\alpha\rightarrow 1_{+}$ and scale as $\mathcal{O}(\log d \log T)$ when $\alpha=1$ in the extreme case, which fills the gap between $\alpha=1$ and $\alpha\in(1,+\infty)$ cases.
Comparing the regret upper bound for our proposed method with the minimax regret lower bound established in Section \ref{Sec:lower_bound}, we can see that our method is optimal up to a factor of $\log T$ when $\alpha \in [0,1]$. 

Intuitively, 
$I_2+I_3$ on the right-hand side of display~\eqref{eq:6} describes the regret caused by the ``burn-in'' period of exploring the space of the feature space which does not contribute to the asymptotic regret growth, while the last term $I_4$ is the cumulative regret when $\mathcal{C}_{t+1}$ does not contain $\beta_\ast$ in Corollary \ref{Cor:confidence_set}. In the regret bound, constant $\Delta_\ast$ (defined in Assumption~\ref{Assump:reward}\ref{Assump:margin}) plays the role of the gap parameter as appeared in a typical problem-dependent regret bound \citep{abbasi11}. The first term is due to the risk of selecting sub-optimal arms when the features fall near the decision boundary, which is controlled by the margin condition (Assumption~\ref{Assump:reward}\ref{Assump:margin}). From the result of Theorem \ref{Thm:2}, we can see that when $\alpha$ is larger, the feature vectors are less likely to fall into the close neighborhood of the decision boundary and the bandit environment will become easier to learn. In particular, for the $\alpha=+\infty$ case when there is a deterministic positive gap $\Delta_{\ast}$ between the rewards of the optimal arm and sub-optimal arms, the regret bound for the algorithm scales polynomially in both dimension $d$ and horizon $T$, i.e., $\mathcal{O}(\log d + \log T)$. We discuss how our regret bound compares with and provides a unified view of the existing results in Section \ref{Sec:literature}.

\subsection{Comparisons with Existing Literature}\label{Sec:literature}

\subsubsection{Problem Formulation}
The stochastic linear bandit problem was first introduced by \citet{auer}, and was subsequently studied by \citet{dani}, \citet{paat2010} and \citet{chu}. Later, \citet{abbasi11} and \citet{abbasi12} proposed the OFUL algorithm for both low-dimensional and high-dimensional settings. The model parametrization considered in \citet{abbasi11}, \citet{dani} and \citet{paat2010} is the same as ours. However, there is one major difference between the two formulations that the covariate set in their problem consists of infinitely many feature vectors, e.g., the unit sphere $S^{d-1}\subseteq\mathbb{R}^d$, while the covariate set $\mathcal{D}_t$ in our paper consists of finitely many feature vectors drawn from some underlying distribution. Moreover, the covariate set in \citet{dani} and \citet{paat2010} does not change over time, therefore the optimal arm will be the same in different rounds. In contrast, the optimal arm in our setting varies along the time due to the randomness in the feature vectors. In addition, the feature space can be explored implicitly in our setting and no explicit exploration phase is needed as in \citet{paat2010}.

In another parallel strand of linear bandit research, \citet{golden} proposed an OLS-bandit algorithm for low-dimensional multi-armed bandit problems, where the formulation is different from ours. In particular, they consider the $K$-armed bandit problem with bounded linear response, where feature vector at each time is common to all arms and parameters are arm-specific. After that, \citet{bastani15} proposed a Lasso-bandit algorithm for the high-dimensional linear bandit problem which requires forced sampling and prior knowledge on their gap parameter. Later, in the same setting as \citet{bastani15}, \citet{wang18} introduced an MCP-bandit algorithm which also requires forced sampling and knowledge of gap parameter. 
As discussed in the remarks after Theorem 1, our problem formulation is also different from that in \citet{bastani15}. The formulation in our paper is meaningful on its own in real world applications. In some real applications, many baseline features are available which have the same effect in the sense of sharing the same regression coefficient $\beta_{\ast}$ across different arms. In such cases, our formulation can help estimate the parameters associated with the common features more accurately by using all the data. However, the formulation in \citet{bastani15} does not incorporate such structure, which will cause loss of information. For example, in clinical trials or advertisement problems, we can map the interactions between the features of patients (or customers) and treatments (or advertisements) as a feature vectors $X_{a,t}$ for each arm, with one common parameter vector $\beta_{\ast}$ across arms. In these applications, the parameters corresponding to the main effect of patients (or customers) such as age and gender are the same for all arms. Our formulation facilitates sample-efficient estimation of these parameters by combining data from multiple arms. In comparison, the parameters under the formulation in \citet{bastani15} are estimated separately for different arms. In addition, the difference between the two formulations has been considered in \citet{kim2019}, where they observe that our formulation is advantageous in such practical applications. In particular, when the number of arms is large, it is not practical to apply the formulation in \citet{bastani15} due to the large number of parameters. When the set of arms changes over time such as in online advertisement, it is also not feasible to assign a different parameter for every new incoming item as in \citet{bastani15}.

Apart from the above benefits, the motivation for our setting comes from applications such as online advertisements, where there are finitely many available products (arms) for each incoming customer. In addition, available products are usually different for different customers and drawn from some underlying distribution, where each arm is a category of products that is associated with a distribution over the set of products.

\subsubsection{Regret Bound Analysis}
In terms of the regret bound analysis, \citet{paat2010} proved a lower bound of $\Omega(d\sqrt{T})$ for both cumulative regret and Bayes risk in low-dimensional linear bandit problem, where the covariate set $\mathcal{D}_t$ is a compact set of infinitely many feature vectors, e.g., the unit sphere $S^{d-1}\subseteq \mathbb{R}^d$. They also proposed an algorithm with matching regret upper bound that alternates between exploration and exploitation phases. \citet{chu} proposed the LinUCB algorithm, which has the regret upper bound $\mathcal{O}(\sqrt{Td\log(T \log(T)/\delta})$ with probability $1-\delta$, and proved a regret lower bound of $\Omega(\sqrt{dT})$ for the low-dimensional contextual bandit problem with linear payoff, which corresponds to our $\alpha=0$ case. \citet{abbasi11} proved that the expected cumulative regret of the OFUL algorithm scales as $\mathcal{O}(d\sqrt{T})$ in both low-dimensional and high-dimensional settings. In the above literature, the feature vectors are allowed to be chosen arbitrarily by an adversary and the analysis is based on the worst case, leading to the polynomial dependence on dimension $d$. In contrast, \citet{golden} proved a cumulative regret of $\mathcal{O}(d^3\log T)$ for their proposed OLS-bandit algorithm and a regret lower bound of $\Omega(\log T)$ corresponding to the margin condition (Assumption~\ref{Assump:reward}\ref{Assump:margin}) with $\alpha=1$. However, the cumulative regret of the above algorithms scales polynomially in dimension $d$, which can be sub-optimal in the high-dimensional setting. The Lasso-bandit algorithm in \citet{bastani15} and the MCP-bandit algorithm in \citet{wang18} are shown to have an improved regret upper bound $\mathcal{O}(\log d)$ in dimension $d$ under the margin condition with $\alpha=1$, but require forced sampling, which could be costly in some practical cases such as medical and marketing applications. In comparison, our proposed algorithm (Algorithm \ref{alg:l_1_ball}) does not require any forced sampling or knowledge of the gap parameter. Meanwhile, our theoretical results on regret upper bound can be applied to any margin parameter $\alpha\in[0,+\infty]$. On the other hand, the setting of \citet{wang18} and  \citet{bastani15} corresponds only to the case of $\alpha=1$, for which our algorithm achieves the same regret bound dependence on $d$, i.e., $\mathcal{O}(\log d)$.

\subsection{Proof Sketch for the Main Results}\label{Sec:sketched_proof}

In this section, we outline the proofs for Theorem \ref{Thm:3} and Theorem \ref{Thm:2}.

\subsubsection{Proof Sketch for Theorem \ref{Thm:3}}
\label{sec:sketched_proof_lower_bound}
In this section, we provide a sketch of the proof for Theorem \ref{Thm:3}. In \citet{golden}, the author already proved a lower bound of $\Omega(\log T)$ under $\alpha=1$ case. In this paper, we prove a lower bound on dimension $d$ and also the polynomial dependence on horizon $T$ when margin condition parameter $\alpha<1$. In order to prove the lower bound result, we follow the standard proof technique by designing a class $\mathcal{P}_{\mathcal{M}}$ of distributions, where $\mathcal{M}$ denotes the parameter space for the regression coefficient vector $\beta$ in the linear model $Y_{a,t}=\langle X_{a,t}, \beta\rangle +\epsilon_{a,t}$, and then reducing the worst case of cumulative regret in $\mathcal{P}_{\mathcal{M}}$ to the average case by introducing a prior distribution over the class $\mathcal{P}_{\mathcal{M}}$. 
The novelty of our proof comes from the design of the class $\mathcal{P}_{\mathcal{M}}$, which is a hard-to-learn subset of $\mathcal{P}_{\alpha}$ that yields the lower bound. Specifically, $\mathcal{P}_{\mathcal{M}}$ is composed of bandit problems with two arms, the first of which is $X^{(1)}=(X_0,X_1, \ldots, X_d)$ and the second is $X^{(2)} = (0, -X_1,\ldots, -X_d)$. Here $X_0$ is a random variable taking values in $\{-1, 0, 1\}$, while $X_i$ for $i\in\{1,\cdots,d\}$ follows truncated normal distribution of $N(0,1)$. We take $\mathcal{M}=\{\beta=(\beta_0, \beta_1, \cdots, \beta_d)\in \mathbb{R}^{d+1}: \beta_0=1,\beta_u=\theta, \beta_j=0 \; \mathrm{for}~j \neq u, u\in[d]\}$ as a set of true parameter $\beta^{\ast}$, where $\theta$ is a tuning parameter controlling the signal strength of the second largest component of $\beta$. Here we assume that $\beta_0=1$ is known and does not need to be estimated, and the other nonzero entry $\beta_u=c\sqrt{\frac{\log d}{T}}$ with a small constant $c$. The choice of $\mathcal{M}$ is to make sure that assumptions in Section \ref{Sec:Assum} are satisfied and the magnitude of $\beta_u$ is small so that it is hard for policy to distinguish it from $0$. Moreover, we specify a uniform prior distribution on the parameter vector $\beta_{\ast}$ over the set $\mathcal{M}$. According to Bayesian decision rule, we prove that the average cumulative regret can be lower bounded by the regret incurred by the Bayesian optimal policy. By following the common proof technique of reducing the problem of obtaining the estimation lower bound to a multiple hypothesis testing problem, we show that learning $\beta$ from space $\mathcal{M}$ reduces to a variable selection problem that causes the poly-logarithmic term in $d$ and the polynomial term in $T$ in our new lower bound. Specifically, the multiple hypothesis testing problem is to determine the index $u$ of the nonzero entry $\beta_u$ from $\{1,\cdots,d\}$ and the optimal Bayes rule is $\hat u_{t-1} =\arg\max_{j\in [d]}|\hat\beta_{j,t-1}|$, where $\hat{\beta}_{t-1}$ is defined according to the Bayesian policy. Finally, by applying Fano's Lemma (Lemma \ref{lem:fano}), we derive the lower bound for the probability of error $P(\hat{u}_{t-1}\neq u)$ in multiple hypothesis testing problem that leads to the regret lower bound.

\subsubsection{Proof Sketch for Theorem \ref{Thm:2}}
\label{sec:sketched_proof_upper_bound}
In this section, we provide the proof strategy for Theorem \ref{Thm:2}. In order to prove Theorem \ref{Thm:2}, we need to prove in Corollary \ref{Cor:confidence_set} that the $\ell_1$-confidence set $\mathcal{C}_{t}$ in \eqref{eq:5} contains $\beta_{\ast}$ with high probability for properly selected regularization parameters. Since we choose LASSO estimator as the center of the confidence set $\mathcal{C}_{t}$, we need to analyze the error bound of the LASSO estimator. This is challenging since the observed data are highly dependent from each other due to bandit policy, thus we cannot directly apply the standard error analysis of the LASSO estimator. In particular, the restricted eigenvalue condition for $\hat{\Sigma}_t$, which is required for the analysis of LASSO estimator, is hard to prove for dependent data. In this paper, we introduce a convergence result (Proposition \ref{Prop:1}) on LASSO estimator for dependent data and a novel method to prove the eigenvalue condition (Proposition \ref{Prop:2}). 
Finally, we derive the cumulative regret taking into account of the regret incurred under different circumstances.

Firstly, we provide an $\ell_1$-error bound for the LASSO estimator. Specifically, Proposition~\ref{Prop:1} shows that the LASSO estimator falls into an $\mathcal{O}(\sqrt{\log d / t})$ neighborhood around the true parameter $\beta_{\ast}$ with high probability.
Before the statement of Proposition \ref{Prop:1}, we introduce the definition of \emph{restricted eigenvalue condition} for positive semidefinite matrices.

\begin{definition}[Restricted eigenvalue condition]\label{Def:restricted_eigenvalue}
 For any set of indices $I\subseteq [d]$ and a positive constant $\phi$, define the set 
\begin{align*}
    \mathcal{C}(I,\phi)=\{M\in \mathbb S^{d\times d}\;|\;\forall v\in \mathbb{R}^d~s.t.~\|v_{I^c}\|_1\le 3\|v_I\|_1,~\text{we have}~\|v\|_1^2\le |I|(v^TMv)/\phi^2\}.
\end{align*}
 A matrix that belongs to this set is said to satisfy the restricted eigenvalue condition over index set $I$ with parameter $\phi$.
\end{definition}

On the event that the sample covariance matrix $\hat{\Sigma}_t$ at time $t$ belongs to $\mathcal{C}(S,\phi)$ for some suitable constant $\phi$, Proposition~\ref{Prop:1} extends the usual LASSO analysis for i.i.d.~observations to our bandit setting where observations are dependent due to the sequential decision policy.

\begin{proposition}\label{Prop:1}
Let $X_i$ denote the $i^{th}$ row of $X$ and $Y(i)$ denote the $i^{th}$ entry of $Y$. The sequence $\{X_i,i=1,\cdots,t\}$ forms an adapted sequence of observations, i.e., $X_i$ depends on past features and their rewards $\{X_{i'}. Y(i')\}_{i'=1}^{i-1}$. Also, assume that all $X_i$ satisfy $\|X_i\|_{\infty}\le x_{\max}$ and the regularization parameter $\lambda=2\sigma x_{\max}\sqrt{(2\log t+2\log d)/t}$. Then for any $\phi>0$, we have
\begin{align*}
    & P\left[\|\hat{\beta}-\beta_\ast\|_1\le  \frac{6s_0 \sigma x_{\max}}{\phi^2}\sqrt{\frac{2\log t+2\log d}{t}} \right] \\
    \geq& 1-\frac{2}{t}-P[\hat{\Sigma}(X)\notin \mathcal{C}(supp(\beta_\ast),\phi)],
\end{align*}
where $\hat\beta$ is the LASSO estimator by solving equation~\eqref{eq:4} with $X,Y$ and regularization parameter $\lambda$.
\end{proposition}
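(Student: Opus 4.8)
The plan is to adapt the classical $\ell_1$-oracle inequality for the LASSO to the adapted-sequence setting, isolating the single place where the bandit-induced dependence enters. Write $\Delta=\hat\beta-\beta_\ast$ and $S=\supp(\beta_\ast)$ with $|S|=s_0$. First I would invoke optimality of $\hat\beta$: since $\hat\beta$ minimizes $\mathcal{L}_t(\beta)+\lambda\|\beta\|_1$ with $\mathcal{L}_t(\beta)=\frac{1}{2t}\|Y-X\beta\|_2^2$, comparing the objective at $\hat\beta$ and at $\beta_\ast$ and substituting $Y=X\beta_\ast+\epsilon$ yields the basic inequality
\[
\frac{1}{2t}\|X\Delta\|_2^2\le \frac{1}{t}\,\epsilon^\top X\Delta+\lambda\big(\|\beta_\ast\|_1-\|\hat\beta\|_1\big).
\]
By H\"older's inequality the stochastic term is at most $\frac{1}{t}\|X^\top\epsilon\|_\infty\|\Delta\|_1$, so the whole argument hinges on controlling $\frac{1}{t}\|X^\top\epsilon\|_\infty$.

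The main obstacle, and the only step where the dependence matters, is precisely this concentration of $\frac{1}{t}\|X^\top\epsilon\|_\infty$. In an i.i.d.\ design this is a maximum of $d$ sums of independent mean-zero variables and Hoeffding's inequality applies directly; here $X_i$ is $\mathcal{F}_i$-measurable and hence depends on past rewards, so independence fails. The key observation is that for each fixed coordinate $j\in[d]$ the partial sums $\sum_{i=1}^n X_{ij}\epsilon_i$ form a martingale relative to $\{\mathcal{F}_i\}$, because $\mathbb{E}[X_{ij}\epsilon_i\mid\mathcal{F}_i]=X_{ij}\,\mathbb{E}[\epsilon_i\mid\mathcal{F}_i]=0$ since $X_{ij}$ is $\mathcal{F}_i$-measurable and $\epsilon_i$ is conditionally mean-zero $\sigma$-sub-Gaussian. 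As $|X_{ij}|\le x_{\max}$, each increment is conditionally $(x_{\max}\sigma)$-sub-Gaussian, so a martingale (Azuma--Hoeffding type) tail bound gives $P\big(|\tfrac{1}{t}(X^\top\epsilon)_j|>u\big)\le 2\exp\!\big(-tu^2/(2x_{\max}^2\sigma^2)\big)$. Taking $u=\lambda/2=\sigma x_{\max}\sqrt{(2\log t+2\log d)/t}$ makes the exponent equal to $\log t+\log d$, and a union bound over the $d$ coordinates yields $P\big(\tfrac{1}{t}\|X^\top\epsilon\|_\infty>\lambda/2\big)\le 2d\,e^{-\log t-\log d}=2/t$; this is exactly the source of the $2/t$ term in the claimed probability.

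On the event $\mathcal{E}=\{\tfrac{1}{t}\|X^\top\epsilon\|_\infty\le\lambda/2\}$ the remaining steps are deterministic and mirror the standard analysis. Bounding the stochastic term by $\frac{\lambda}{2}\|\Delta\|_1$ and using $\|\beta_\ast\|_1-\|\hat\beta\|_1\le \|\Delta_S\|_1-\|\Delta_{S^c}\|_1$ in the basic inequality gives $0\le\frac{1}{2t}\|X\Delta\|_2^2\le \frac{3\lambda}{2}\|\Delta_S\|_1-\frac{\lambda}{2}\|\Delta_{S^c}\|_1$, so nonnegativity of the left-hand side forces the cone membership $\|\Delta_{S^c}\|_1\le 3\|\Delta_S\|_1$, i.e.\ $\Delta$ lies in the cone of Definition~\ref{Def:restricted_eigenvalue}. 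Then, on the further event $\hat\Sigma_t\in\mathcal{C}(S,\phi)$, the restricted eigenvalue condition yields $\|\Delta\|_1^2\le s_0(\Delta^\top\hat\Sigma_t\Delta)/\phi^2=s_0\|X\Delta\|_2^2/(t\phi^2)$; combining this with $\frac{1}{t}\|X\Delta\|_2^2\le 3\lambda\|\Delta_S\|_1\le 3\lambda\|\Delta\|_1$ and cancelling one factor of $\|\Delta\|_1$ gives $\|\Delta\|_1\le 3s_0\lambda/\phi^2$. Substituting $\lambda=2\sigma x_{\max}\sqrt{(2\log t+2\log d)/t}$ reproduces the stated bound $\frac{6s_0\sigma x_{\max}}{\phi^2}\sqrt{(2\log t+2\log d)/t}$.

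Finally I would assemble the probability by a union bound: the derivation succeeds on $\mathcal{E}\cap\{\hat\Sigma_t\in\mathcal{C}(S,\phi)\}$, whose complement has probability at most $P(\mathcal{E}^c)+P[\hat\Sigma(X)\notin\mathcal{C}(\supp(\beta_\ast),\phi)]\le \tfrac{2}{t}+P[\hat\Sigma(X)\notin\mathcal{C}(\supp(\beta_\ast),\phi)]$, giving exactly the claimed lower bound. I expect the martingale concentration step to be the crux: the classical LASSO geometry (basic inequality, cone condition, restricted eigenvalue step) is untouched, and the only substantive modification needed to handle the bandit dependence is replacing the independent-sum Hoeffding bound by its martingale-difference analogue, which the conditional sub-Gaussianity of the errors makes available.
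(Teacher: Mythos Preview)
Your proposal is correct and follows essentially the same route as the paper's proof: the paper packages the martingale concentration of $\tfrac{1}{t}\|X^\top\epsilon\|_\infty$ into a lemma (their Lemma~\ref{lemma:2}, derived from a sub-Gaussian martingale-difference bound), then derives the basic inequality and cone condition exactly as you do (their Lemma~\ref{lemma:3}), and finishes with the restricted-eigenvalue step and the same union bound to produce the $2/t$ and $P[\hat\Sigma\notin\mathcal{C}(S,\phi)]$ terms. Your identification of the martingale concentration as the only place where the adapted dependence enters matches the paper's treatment precisely.
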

Proposition \ref{Prop:1} is a more general version of the LASSO oracle inequality. It is an adaption of Proposition 1 in \citet{bastani15}, where we plug-in specific values for $\lambda$ and the $\ell_1$-error bound for $\hat{\beta}$. For self-containedness, we also include a proof for Proposition \ref{Prop:1} in the appendix. It allows for the adapted sequence of observations and errors that are $\sigma$-sub-Guassian conditional on all past observations. Moreover, note that the performance of the LASSO estimator dependents on the choice of parameter $\phi$ and the structure of the sample covariance matrix. This is one of the reasons why multi-armed bandit algorithms such as \citet{bastani15} has forced sampling step to ensure that the sample covariance matrix $\hat{\Sigma}_t$ is positive definite in some directions with high probability. 

Secondly, we prove the restricted eigenvalue condition for $\hat{\Sigma}_t$ based on dependent data. Proposition~\ref{Prop:2} shows that with the selection rule induced by our method and the construction of the confidence set, the sample covariance matrix will satisfy the restricted eigenvalue condition with high probability.

\begin{proposition} \label{Prop:2}
Suppose we construct the confidence set $\mathcal{C}_{t+1}=\{\beta\in \mathbb{R}^d: \|\hat{\beta}_t-\beta\|_1\le \frac{6s_0 \sigma x_{\max}}{\phi^2}\sqrt{\frac{2\log t+2\log d}{t}}\}$ with some constant $\phi>0$ that is large enough. Then if the time horizon $t$ exceeds a certain threshold (i.e. $t\ge \mathcal{O}(\log T+s_0K \log K\log d+\frac{\sigma^2 x_{\max}^4s_0^2\zeta^2K^2\log d}{\Delta_*^2})$), the sample covariance matrix $\hat{\Sigma}_t$ are guaranteed to satisfy the compatibility condition with high probability, i.e.
\begin{equation*}
    P[\hat{\Sigma}_t\in \mathcal{C}(supp(\beta_\ast),\phi_\ast)]\ge 1-\mathcal{O}(1/t)-\mathcal{O}(e^{-t+s_0\log K\log d}),
\end{equation*}
where $\phi_\ast=\frac{\phi_0}{8}$.
\end{proposition}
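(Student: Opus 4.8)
The plan is to reduce the restricted eigenvalue condition over the infinite restricted cone $\mathbb{C}(S)=\{v\in\mathbb{R}^d:\|v_{S^c}\|_1\le 3\|v_S\|_1\}$ to a uniform lower bound on the quadratic form $v^\top\hat\Sigma_t v$ over a finite net of $C^\ast s_0$-sparse unit vectors, and then to establish that uniform bound by a concentration argument that neutralizes the dependence induced by the bandit policy. First I would invoke the sparse-approximation result (Lemma~\ref{lemma:5}) together with the transfer principle (Lemma~\ref{lemma:transfer}) to show that every $v\in\mathbb{C}(S)$ is well approximated by a $C^\ast s_0$-sparse vector, so that it suffices to prove $v^\top\hat\Sigma_t v\ge (\phi_0^2/2)\|v\|_2^2$ for all $v$ in a minimal $\epsilon$-net of the $C^\ast s_0$-sparse unit sphere. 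Such a net has cardinality at most $\exp(c\,C^\ast s_0\log d)$, and since $C^\ast=\mathcal{O}(K\log K)$, the logarithm of this cardinality is exactly the $s_0K\log K\log d$ contribution to the stated threshold on $t$; translating $v^\top\hat\Sigma_t v\ge(\phi_0^2/2)\|v\|_2^2$ back to the compatibility form of Definition~\ref{Def:restricted_eigenvalue} over $\mathbb{C}(S)$ is where the constant degrades from $\phi_0$ to $\phi_\ast=\phi_0/8$.

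The crux is lower bounding $v^\top\hat\Sigma_t v=\tfrac{1}{t}\sum_{s=1}^t\langle X_{\hat a_s,s},v\rangle^2$ for a fixed sparse $v$, where the summands are dependent because $\hat a_s$ is a function of the entire past. I would route all randomness through the current-round margin event $\Gamma_s=\{\langle X_{a_s^\ast,s},\beta_\ast\rangle\ge\max_{b\neq a_s^\ast}\langle X_{b,s},\beta_\ast\rangle+\Delta_\ast\}$, which depends only on the current covariate set $\mathcal{D}_s$ and is therefore i.i.d.\ across $s$. On the good event that $\beta_\ast\in\mathcal{C}_s$ and once $s$ is large enough that the exploration bonus obeys $\tau_{s-1}x_{\max}<\Delta_\ast/2$, the selection rule~\eqref{eq:3} forces $\hat a_s=a_s^\ast$ whenever $\Gamma_s$ occurs; restricting the sum to such rounds gives $v^\top\hat\Sigma_t v\ge\tfrac{1}{t}\sum_{s:\,\Gamma_s}\langle X_{a_s^\ast,s},v\rangle^2$, and these retained summands are i.i.d.\ draws from the conditional law of $X_{a_s^\ast}$ given $\Gamma_s$, whose $C^\ast s_0$-sparse eigenvalues are bounded below by $\phi_0^2$ by Assumption~\ref{Assump:stochastic}\ref{Assump:eigenvalue}. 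The burn-in needed for $\beta_\ast\in\mathcal{C}_s$ to hold reliably and for $\tau_{s-1}$ to drop below $\Delta_\ast/2$ is what produces the $\sigma^2x_{\max}^4s_0^2\zeta^2K^2\log d/\Delta_\ast^2$ term: during this initial phase the confidence set is bootstrapped from a cruder, $K$- and $\zeta$-dependent lower bound on $v^\top\hat\Sigma_s v$ obtained by a union of the anti-concentration inequality (Assumption~\ref{Assump:stochastic}\ref{Assump:anti}) over all $K$ arms, which is precisely where $\zeta$ and $K$ enter the threshold but not the final constant $\phi_\ast$.

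It then remains to control two independent fluctuations. A Chernoff bound shows that the number of rounds with $\Gamma_s$ is at least a constant fraction of $t$ with probability $1-\mathcal{O}(e^{-ct})$, since $p_0=P(\Gamma_s)>0$ is bounded away from zero. For each fixed net point $v$, the retained terms $\langle X_{a_s^\ast,s},v\rangle^2$ have conditional mean at least $\phi_0^2\|v\|_2^2$ and are bounded, so Bernstein's inequality (with the lower tail of $\langle X,v\rangle^2$ controlled by Assumption~\ref{Assump:stochastic}\ref{Assump:anti}) yields $\tfrac{1}{t}\sum_{s:\,\Gamma_s}\langle X_{a_s^\ast,s},v\rangle^2\ge(\phi_0^2/2)\|v\|_2^2$ with probability $1-\exp(-c't)$. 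A union bound over the net upgrades this to a uniform statement; the leftover $\mathcal{O}(e^{-t+s_0\log K\log d})$ probability in the proposition is exactly this union-bound remainder, while the $\mathcal{O}(1/t)$ term is inherited from the failure probability of $\beta_\ast\in\mathcal{C}_s$ in Proposition~\ref{Prop:1} and Corollary~\ref{Cor:confidence_set}. Passing back through the net and the transfer principle then delivers the restricted eigenvalue condition with $\phi_\ast=\phi_0/8$. The main obstacle is the decoupling step: because $\hat a_s$ is measurable with respect to the current features, naive conditioning collapses the quadratic form to a deterministic quantity and destroys independence. The resolution is to channel the randomness through the i.i.d.\ margin event $\Gamma_s$ and, to break the self-reference that $\mathcal{C}_s$ depends on past data while its validity rests on the very condition being proved, to run an induction on $t$ that alternates between Proposition~\ref{Prop:1} (which certifies $\beta_\ast\in\mathcal{C}_s$ once the condition holds at time $s-1$) and the present concentration argument (which re-establishes the condition at time $t$).
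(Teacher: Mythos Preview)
Your proposal is correct and follows essentially the same two-stage bootstrap as the paper: first a crude $K$-dependent restricted-eigenvalue bound via anti-concentration across all arms (the paper's Proposition~\ref{Prop:3}), then Lemma~\ref{lemma:10} showing optimal-arm selection on $\Gamma_s$ once the confidence radius drops below $\Delta_\ast/2$, and finally concentration of the i.i.d.\ conditional summands $\langle X_{a_s^\ast,s},v\rangle^2\mid\Gamma_s$ with a net union bound.

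One detail to correct: you use a single $C^\ast s_0$-sparse net with $C^\ast=\mathcal{O}(K\log K)$ for both stages, which would give a tail of $e^{-t+s_0K\log K\log d}$ rather than the claimed $e^{-t+s_0\log K\log d}$. The paper uses two different nets. In the crude step the eigenvalue floor is only $\Theta(1/K)$, so the transfer principle needs $m=Cs_0$ with $C=\Theta(K\log K)$ to absorb the $\mathcal{O}(\log K)$ diagonal bound on $\hat\Sigma_t$; this is what puts $s_0K\log K\log d$ into the threshold $T''$. In the refined step the floor is $\phi_0^2/16$, independent of $K$, so the transfer principle only needs $m=C's_0$ with $C'=\mathcal{O}(\log K)$, and the union bound over this smaller net yields the sharper $s_0\log K\log d$ exponent. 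Also, citing Corollary~\ref{Cor:confidence_set} for the $\mathcal{O}(1/t)$ term is circular since that corollary rests on Proposition~\ref{Prop:2}; the $\mathcal{O}(1/t)$ comes directly from Proposition~\ref{Prop:1} applied with the crude $\phi=1/\sqrt{32\zeta K}$ of Proposition~\ref{Prop:3} (the event $\mathcal{G}_{3,t}$ in the paper), so no induction is needed---it is a one-shot two-stage argument.
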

The detailed proof of Proposition \ref{Prop:2} is given in Section \ref{sec:prop2}. Here we only provide some intuition about how we prove Proposition \ref{Prop:2}. Firstly, we provide a crude result of the restricted eigenvalue condition for the sample covariance matrix in Proposition~\ref{Prop:3}, i.e., the restricted eigenvalue condition constant is of order $1/K$ where $K$ is the number of arms. The proof of Proposition~\ref{Prop:3} consists of three main steps:
\begin{enumerate}[label=(\roman*)]
    \item First, we discretize $\mathcal{H}=\{v\in S^{d-1}: \|v\|_0\leq Cs_0\}$, where constant $C=\Theta(\zeta \Lambda_0^2 K\log K)$, with a $\epsilon$-net $\mathcal{N}_{\epsilon}$ (Definition \ref{Def:4}). Then, we prove the restricted eigenvalue condition for $\hat{\Sigma}_t$ over vectors in $\mathcal{N}_{\epsilon}$. In detail, We apply concentration inequalities after properly decoupling the dependence structure induced by the bandit policy to obtain a lower bound of order $1/K$ on the quadratic form $\langle v, \hat \Sigma_t v\rangle$ for all $v\in\mathcal{N}_{\epsilon}$ with high probability. This step is guaranteed by Assumption~\ref{Assump:stochastic}\ref{Assump:anti}, which ensures that the probability of feature vectors locating at the original point is small.
    \item We prove the restricted eigenvalue condition over $\mathcal{H}$ through approximation from $\mathcal{N}_{\epsilon}$.
    \item We prove the eigenvalue condition over the cone $\mathbb C_3(S)=\{v\in\mathbb R^d:\,\|v_{S^c}\|_1\leq 3\|v_S\|_1\}$ (Definition \ref{Def:3}), which implies the restricted eigenvalue condition. Here we apply the Transfer Principle (Lemma \ref{lemma:transfer}) and carefully choose the constant $C$ as discussed earlier in step (i).
\end{enumerate}

Then in Proposition~\ref{Prop:2}, we prove a refined restricted eigenvalue condition based on Proposition~\ref{Prop:3}. The crucial part is to eliminate the dependence on $K$ which is required in Proposition~\ref{Prop:3}. This is challenging since the observed samples are high dependent and in Proposition~\ref{Prop:3}, we need to consider the worst case across $K$ arms. Instead of applying forced sampling to guarantee the eigenvalue condition of $\hat{\Sigma}_t$ as in \citet{bastani15}, we apply a novel technique to prove the result in Proposition \ref{Prop:2}. In particular, with the crude restricted eigenvalue condition in Proposition \ref{Prop:3}, we can show that the optimal arm will be pulled sufficient times, i.e., a positive fraction of time horizon. As a result, the restricted eigenvalue condition in Proposition \ref{Prop:2} is satisfied since Assumption~\ref{Assump:stochastic}\ref{Assump:eigenvalue}, which ensures that the feature vectors are randomly diverse, i.e., the minimum eigenvalue of the population covariance matrix is positive. Intuitively, this can guarantee that the space of feature vectors can be automatically explored without forced sampling. The main proof of Proposition \ref{Prop:2} can be summarized in two steps:
\begin{enumerate}[label=(\roman*)]
    \item In Lemma \ref{lemma:10}, we show that based on Proposition~\ref{Prop:3}, the algorithm will pull the optimal arm at a positive fraction of times after some time point. In particular, Assumption \ref{Assump:reward}\ref{Assump:margin} and \ref{Assump:stochastic}\ref{Assump:eigenvalue} guarantee that $P(\Gamma_t)\geq 1/2$. Under event $\Gamma_t$, there is a positive gap $\Delta_{\ast}$ between the reward of optimal arm and sub-optimal arms. According to Proposition \ref{Prop:1} and Proposition \ref{Prop:3}, for each arm $X_{a,t}$, the optimal reward of $\tilde{\beta}$ within $\mathcal{C}_{t}$ in \eqref{eq:2} is close to the true reward of $X_{a,t}$. Thus, the algorithm will only select the optimal arm under $\Gamma_t$ after some time point according to equation \eqref{eq:2}.
    \item Assumption \ref{Assump:stochastic}\ref{Assump:eigenvalue} guarantees the sparse eigenvalue condition with constant $\phi_0$ for optimal arm under event $\Gamma_t$. We have shown in Lemma \ref{lemma:10} that the optimal arm will be pulled frequently. Then, combining the above results, we prove the restricted eigenvalue condition for $\hat{\Sigma}_t$ with constant $\phi_{\ast}=\phi_0/8$, which does not depend on $K$, even though the observed samples are dependent.
\end{enumerate}

Thirdly, we provide the following corollary which shows that the true parameter $\beta_\ast$ lies in the $\ell_1$-confidence set $\mathcal{C}_{t+1}$ with high probability. Corollary \ref{Cor:confidence_set} follows from a direct application of Proposition \ref{Prop:1} and Proposition \ref{Prop:2}, and will play a crucial role in analyzing the cumulative regret of our algorithm.

\begin{corollary} \label{Cor:confidence_set}
Suppose that Assumptions 1--2 hold and the regularization parameter is chosen as $\lambda_t=2\sigma x_{\max}\sqrt{(2\log t+2\log d)/t}$. Then, there exist absolute constants $C, c_1, c_2>0$, such that for all $t\ge C\,(\log T+s_0K \log  K\log d+\frac{\sigma^2 x_{\max}^4s_0^2\zeta^2 K^2\log d}{\Delta_*^2})$, with probability at least $1-c_1t^{-1}-c_2 e^{-t+s_0\log  K\log d}$, $\beta_\ast$ lies in the confidence set $\mathcal{C}_{t+1}$ defined by Equation \eqref{eq:5}. 

\end{corollary}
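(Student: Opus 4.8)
The plan is to combine the two propositions established above, since the statement is exactly what results from feeding the restricted eigenvalue guarantee of Proposition~\ref{Prop:2} into the conditional LASSO bound of Proposition~\ref{Prop:1}. First I would fix the restricted eigenvalue parameter to the value $\phi_\ast=\phi_0/8$ delivered by Proposition~\ref{Prop:2} and introduce the event $A_t=\{\hat{\Sigma}_t\in\mathcal{C}(\supp(\beta_\ast),\phi_\ast)\}$ on which the sample covariance matrix at the end of round $t$ satisfies the restricted eigenvalue condition with this parameter. Proposition~\ref{Prop:2} guarantees, provided $t$ exceeds the stated threshold $\mathcal{O}\big(\log T+s_0K\log K\log d+\sigma^2 x_{\max}^4 s_0^2\zeta^2 K^2\log d/\Delta_\ast^2\big)$, that $P(A_t^c)\le \mathcal{O}(1/t)+\mathcal{O}(e^{-t+s_0\log K\log d})$; this is the source of both tail terms appearing in the corollary and it is precisely what pins down the constant $C$ in the lower bound on $t$.

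Next I would apply Proposition~\ref{Prop:1} with the choice $\phi=\phi_\ast=\phi_0/8$. Its conclusion reads $P\big[\|\hat\beta_t-\beta_\ast\|_1\le \tfrac{6s_0\sigma x_{\max}}{\phi_\ast^2}\sqrt{(2\log t+2\log d)/t}\big]\ge 1-2/t-P[\hat{\Sigma}_t\notin\mathcal{C}(\supp(\beta_\ast),\phi_\ast)]$, where the right-hand probability is exactly $P(A_t^c)$ already controlled in the previous step. A short arithmetic check shows that substituting $\phi_\ast=\phi_0/8$ turns the error radius into $\tfrac{6\cdot 64\,s_0\sigma x_{\max}}{\phi_0^2}\sqrt{(2\log t+2\log d)/t}=\tau_t$, matching the radius of the confidence set in~\eqref{eq:5} exactly (this is why the constant $384=6\cdot 64$ appears in $\tau_0$). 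Since $\{\|\hat\beta_t-\beta_\ast\|_1\le \tau_t\}$ is by definition the event $\{\beta_\ast\in\mathcal{C}_{t+1}\}$, I obtain $P[\beta_\ast\in\mathcal{C}_{t+1}]\ge 1-2/t-P(A_t^c)$.

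Finally I would collect the tail terms: combining the $2/t$ contributed by Proposition~\ref{Prop:1} with the two tail terms bounding $P(A_t^c)$ from Proposition~\ref{Prop:2}, and absorbing the implied constants, yields the claimed bound $1-c_1 t^{-1}-c_2 e^{-t+s_0\log K\log d}$, while the lower bound on $t$ is inherited verbatim from the hypothesis of Proposition~\ref{Prop:2}.

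I do not expect a genuine obstacle here, since the substantive work has already been carried out in Propositions~\ref{Prop:1} and~\ref{Prop:2} and the corollary is essentially a substitution followed by a union bound. The only points demanding care are bookkeeping ones: verifying that the single value $\phi_\ast=\phi_0/8$ is simultaneously the parameter returned by Proposition~\ref{Prop:2}, the parameter plugged into Proposition~\ref{Prop:1}, and the value implicit in the radius $\tau_t$ of~\eqref{eq:5}; and confirming that both propositions are applied to the same collected data $(X_t,Y_t)$ so that the restricted eigenvalue event and the LASSO estimate refer to one common sample covariance matrix $\hat{\Sigma}_t$. Ensuring the threshold on $t$ in Proposition~\ref{Prop:2} is consistent with (and indeed determines) the threshold in the corollary is the last detail to reconcile.
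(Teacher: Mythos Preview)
Your proposal is correct and matches the paper's own treatment: the paper explicitly states that the corollary ``follows from a direct application of Proposition~\ref{Prop:1} and Proposition~\ref{Prop:2},'' and your argument carries out exactly this substitution with $\phi=\phi_\ast=\phi_0/8$, recovering the radius $\tau_t$ in~\eqref{eq:5} via $6\cdot 64=384$ and collecting the tail probabilities.
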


{The constants $c_1, c_2$ in Corollary \ref{Cor:confidence_set} are absolute constants and can be chosen as $c_1={1}/({2e})$ and $c_2=2e^2$.} The size of the confidence set $\mathcal{C}_{t+1}$ scales as $\mathcal{O}(\sqrt{\frac{\log d +\log t}{t}})$ in dimension $d$ and time $t$. In comparison, the size of the ellipsoid confidence set of \citet{abbasi11} centered at the ridge regression estimator scales as $\mathcal{O}(\sqrt{\frac{d\log t}{t}})$, which has a polynomial dependence on the dimension $d$, and is exponentially larger than the size of our $\ell_1$ confidence set. 

Finally, we apply the result in Corollary \ref{Cor:confidence_set} to compute the cumulative regret of the proposed algorithm. As discussed after Theorem \ref{Thm:2}, for the instant regret incurred in ``burn-in" period or when $C_{t}$ does not contain $\beta_{\ast}$, we simply bound it with worst-case regret $2bx_{\max}$. When $C_t$ contains $\beta_{\ast}$, we consider two complementary cases based on event $\Gamma_{\xi_t}=\{\omega\in \Omega: \langle X_{a^*_t,t}, \theta_*\rangle\ge \langle X_{b,t},\theta_*\rangle +\xi_t, \; \forall b\neq a^*_t\}$ with $\xi_t=2x_{\max}\tau_t$. We show that when $\Gamma_{\xi_t}$ holds, the algorithm will only select optimal arm and incurs no regret. Then according to Assumption~\ref{Assump:reward}\ref{Assump:margin}, we bound the regret incurred when $\Gamma_t$ does not hold. In the end, we sum up the regret from different parts to obtain the cumulative regret.

\section{Experimental Results}
\label{Sec:experiment}

We compare our $\ell_1$-confidence ball based method with (i) the OFUL-LS method proposed in \citet{abbasi11}, (ii) the OLS-bandit introduced in \citet{golden}, and (iii) the LASSO-bandit algorithm in \citet{bastani15} in both synthetic data and real data experiments. The first two methods are not specifically designed for high-dimensional settings. We note that the parametrization of OLS-bandit and LASSO-bandit methods is slightly different from ours, because they treat different arms to have different parameter vectors and one common feature vector. We  apply these methods after converting their parametrization into ours.

\subsection{Synthetic Data}

In the synthetic data experiment, we consider three scenarios for $K$, $d$ and $s_0$: (1) $K=5$, $d=100$, $s_0=5$; (2) $K=5$, $d=1000$, $s_0=5$; (3) $K=50$, $d=20$, $s_0=2$. In each case, a randomly chosen subset of $s_0$ features is predictive of the reward. 
At each time $t$, feature vectors for $K$ arms are i.i.d.~ generated by truncating a Gaussian distribution $N(0,\Sigma)$ with $\Sigma_{ij}=0.5^{|i-j|}$ so that $\|X_{a,t}\|_{\infty}\leq 1$. The error term $\epsilon_{a,t}$ follows a zero mean normal distribution with variance $\sigma^2=1$. For this setting, our margin condition holds with $\alpha = 1$ so that we expect a regret growing logarithmically in $T$. 
For our method, we choose the initial regularization parameter $\lambda_0=0.5$ and diameter $\tau_0=1$ and the method is robust to the choice of these tuning parameters. For LASSO and OLS, we choose the forced sampling parameter $q=1$, the localization parameter $h=5$ for LASSO-bandit and $h=1$ for OLS-bandit. For LASSO-bandit, we further set the initial parameters $\lambda_1=\lambda_{2,0}=0.5$ and for OFUL-LS, we set $\lambda=1$ and $\delta=10^{-4}$. 

Figure \ref{fig:1} compares our method with competitors on the aforementioned synthetic data with a time horizon of $T=2000$. The curves are the average cumulative regrets over $5$ trials. We observe that the proposed $\ell_1$-confidence ball based algorithm outperforms all the three competing algorithms in all the cases. 
In cases (1) and (2) where dimension $d$ is from moderately large to high, OLS-bandit and OFUL-LS algorithms do not perform well since they are not designed for high-dimensional settings and fail to capture the sparsity structure. In case (3) where $K$ is large and the feature vector is low-dimensional, our $\ell_1$-confidence ball algorithm still outperforms the competitors, while the performance of LASSO-bandit is no longer competitive. One possible reason is that since the number of arms is larger, it may need much more exploration in the feature space than it is allowed. LASSO-bandit  does forced sampling only at a limited number of pre-specified times due to which it may not have sufficient exploration to accommodate the large number of arms. In contrast, our algorithm performs an implicit exploration and does not require forced sampling. 

\begin{figure}[h]
    \centering
    \begin{subfigure}[b]{0.3\textwidth}
        \includegraphics[width=\textwidth]{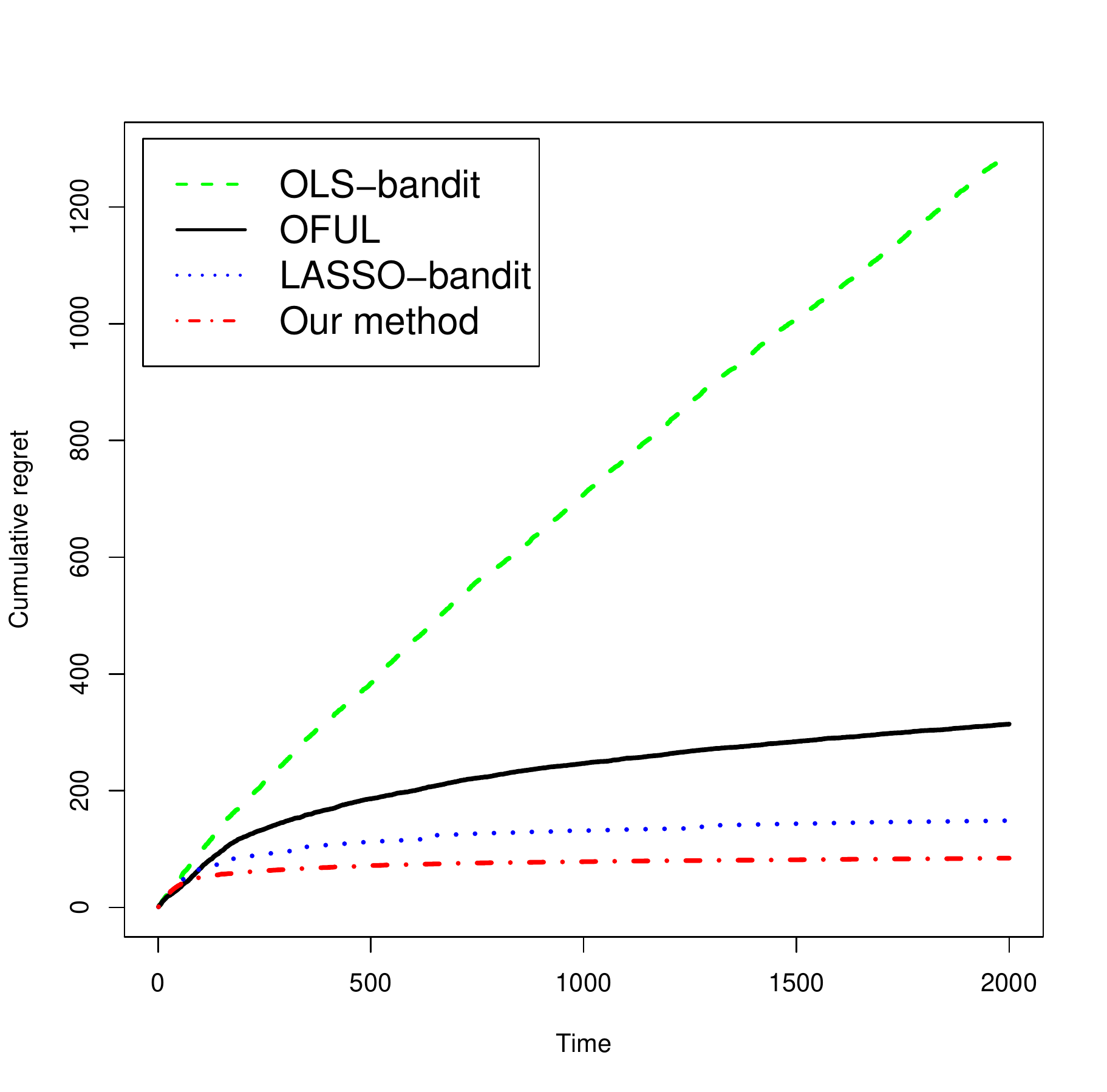}
        \caption{$K=5; d=100; s_0=5$}
        \label{fig:case1_large}
    \end{subfigure}
    ~ 
    \begin{subfigure}[b]{0.3\textwidth}
        \includegraphics[width=\textwidth]{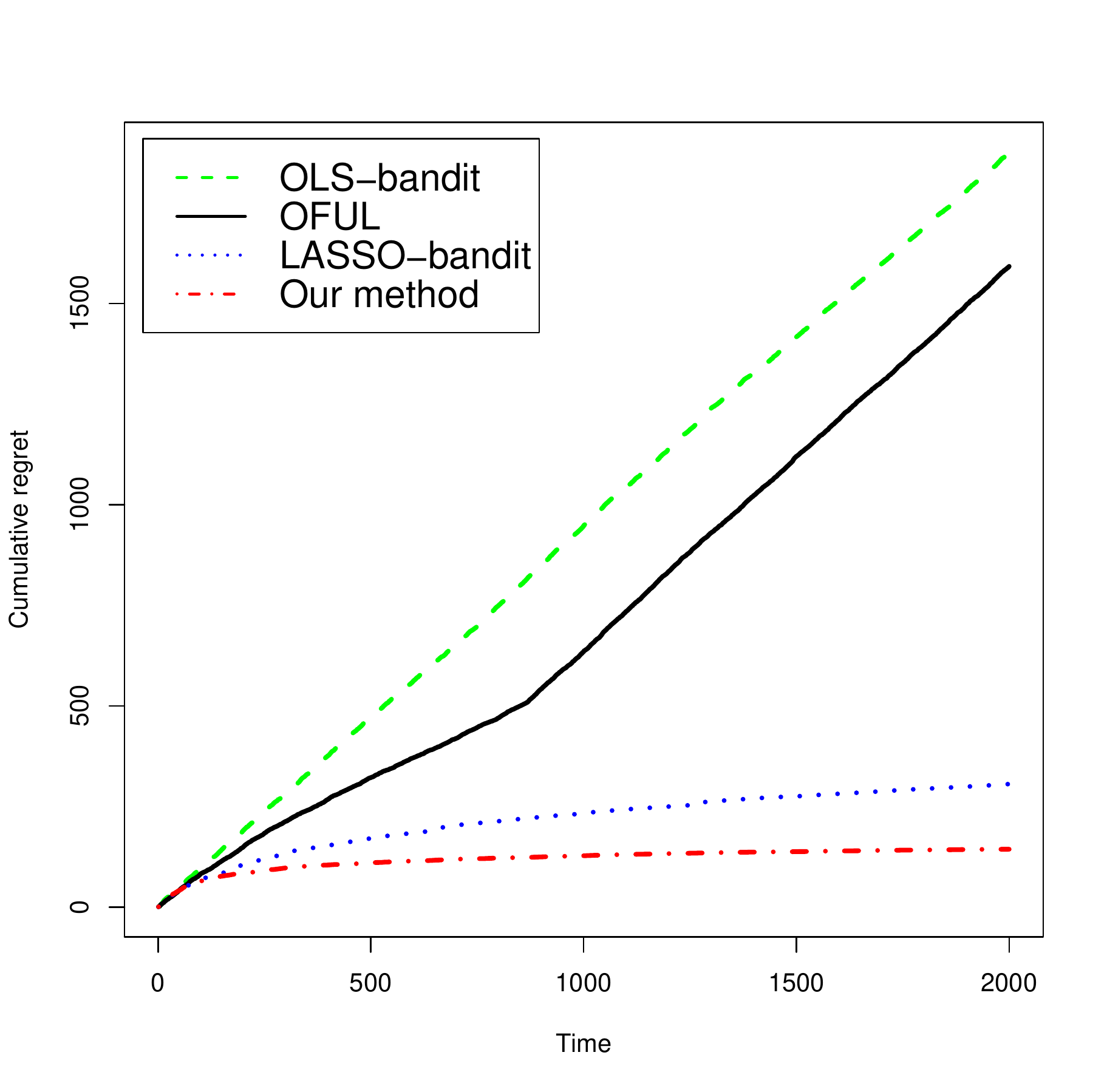}
        \caption{$K=5; d=1000; s_0=5$}
        \label{fig:case2_large}
    \end{subfigure}
    ~ 
    \begin{subfigure}[b]{0.3\textwidth}
        \includegraphics[width=\textwidth]{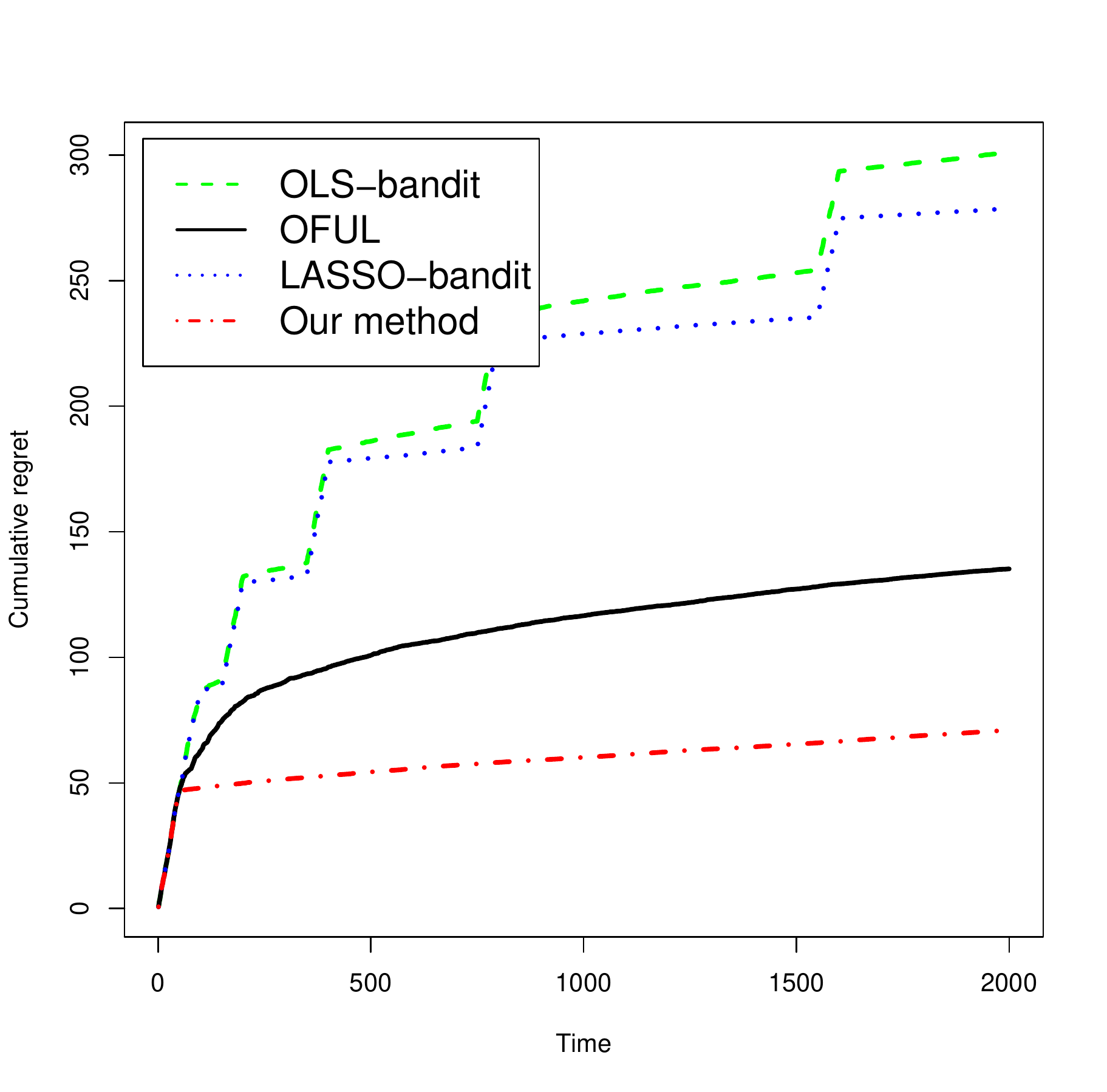}
        \caption{$K=50; d=20; s_0=2$}
        \label{fig:case3_large}
    \end{subfigure}
  \\
  \centering
    \begin{subfigure}[b]{0.3\textwidth}
        \includegraphics[width=\textwidth]{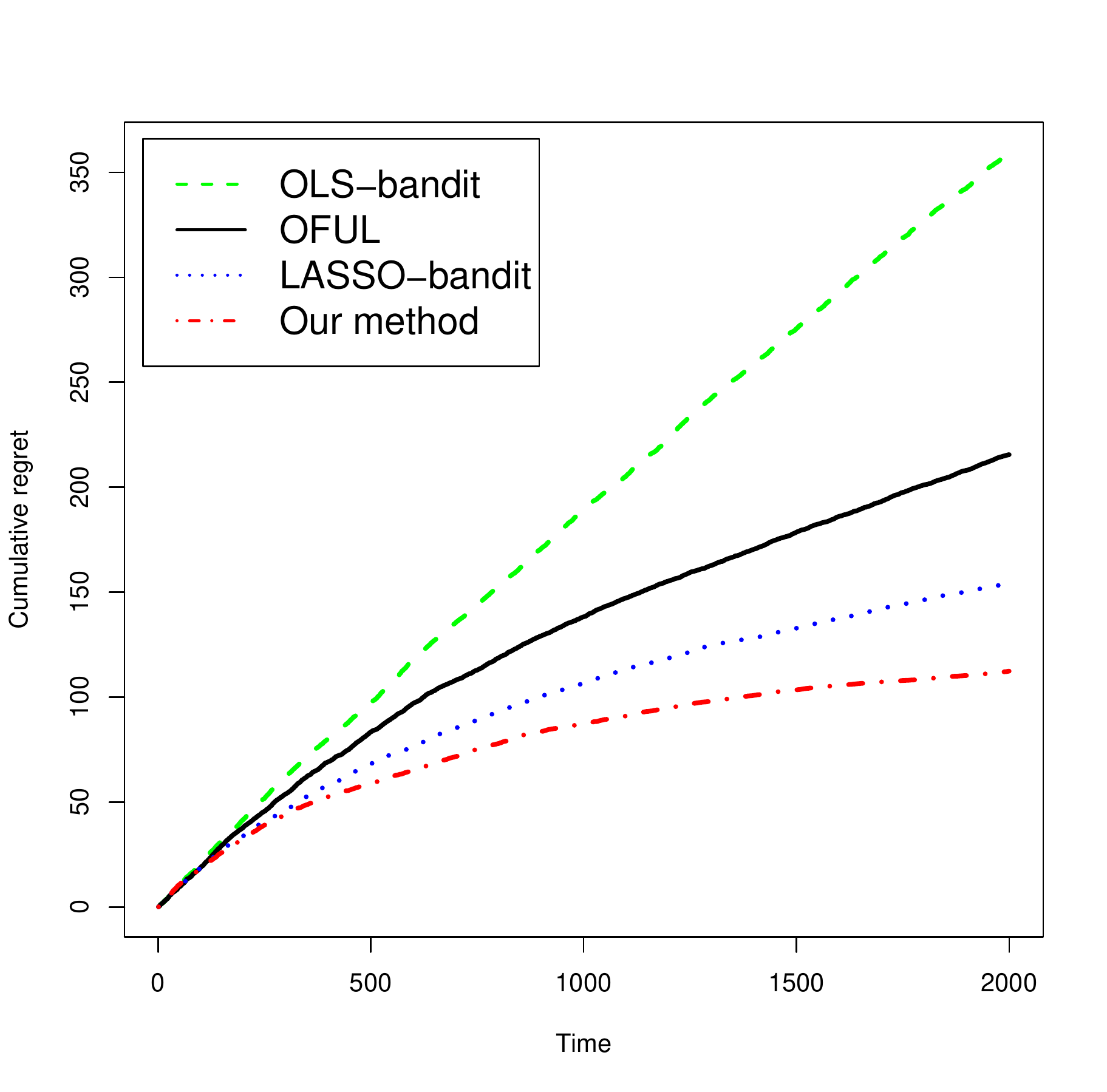}
        \caption{$K=5; d=100; s_0=5$}
        \label{fig:case1_small}
    \end{subfigure}
    ~ 
    \begin{subfigure}[b]{0.3\textwidth}            \includegraphics[width=\textwidth]{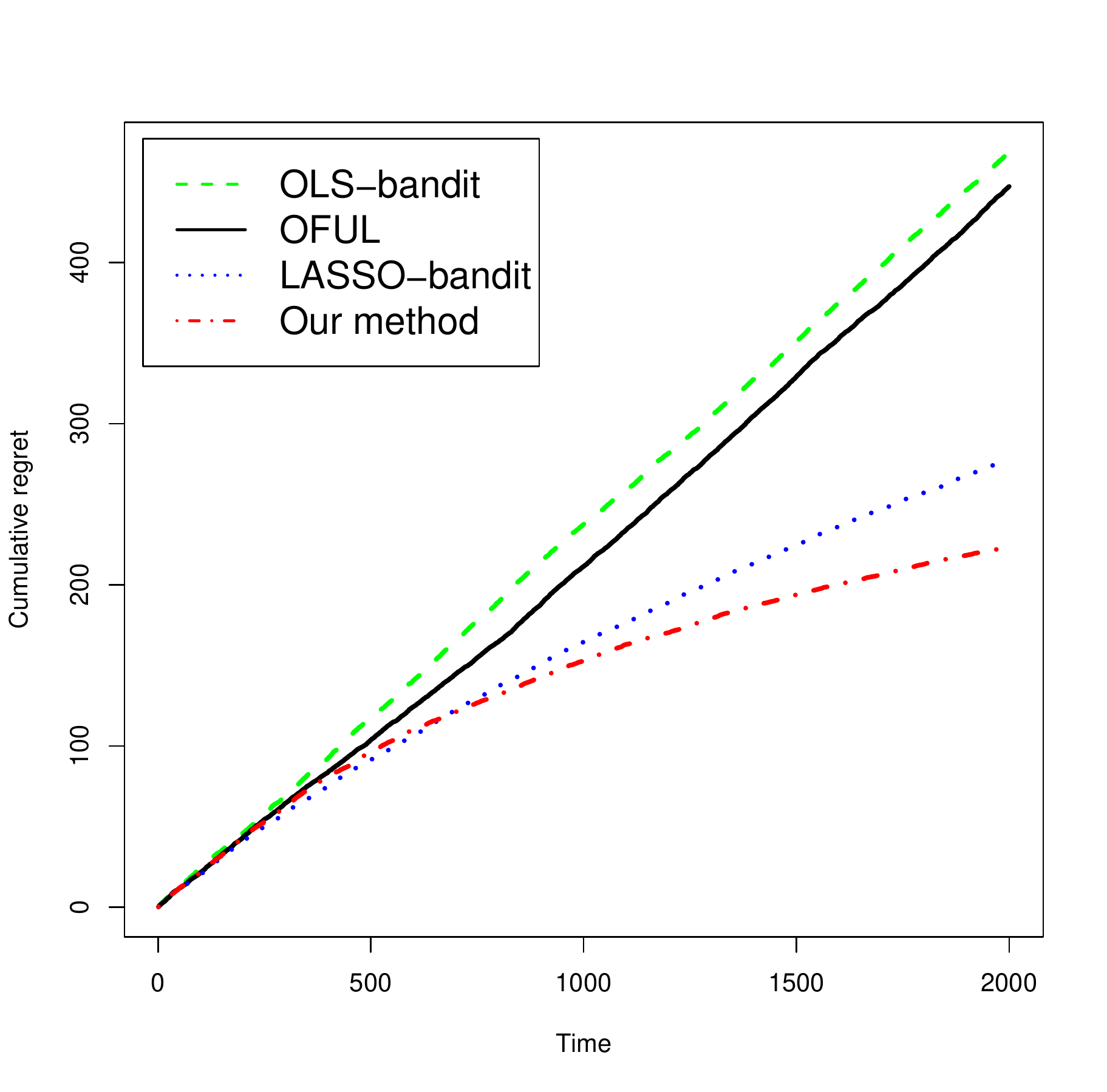}
        \caption{$K=5; d=1000; s_0=5$}
        \label{fig:case2_small}
    \end{subfigure}
    ~ 
    \begin{subfigure}[b]{0.3\textwidth}
        \includegraphics[width=\textwidth]{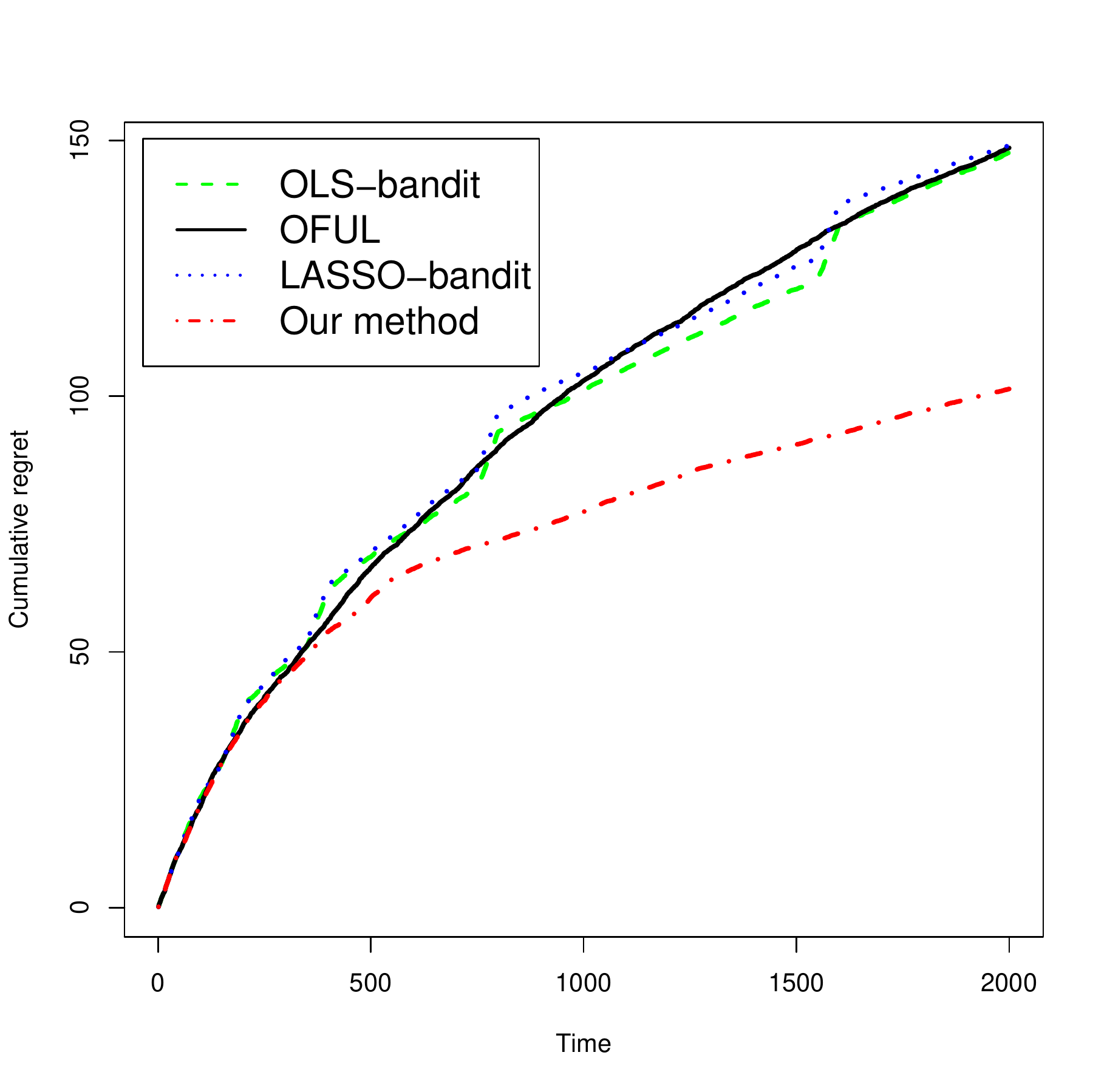}
        \caption{$K=50; d=20; s_0=2$}
        \label{fig:case3_small}
    \end{subfigure}
    \caption{Comparison of the cumulative regret of the $\ell_1$-confidence ball algorithm against competitors. Nonzero components in $\beta_\ast$ are generated from Unif$(0,1)$ in (a)-(c), and Unif$(0,0.2)$ in (d)-(f). Lower value indicates better performance.}\label{fig:1}
\end{figure}

\subsection{Warfarin Dosing Data}
We now consider data from a real experiment from the healthcare context where a physician needs to determine the optimal warfarin dosage for each patient. The warfarin dataset \cite{warfarin} has experimental data on 5528 patients, and contains information on patients' demographics, diagnosis and genetic information. The same dataset is also used by \citet{bastani15}, where they demonstrated the benefits of applying LASSO-bandit algorithms over OFUL-LS, OLS-bandit algorithms, and other low-dimensional methods. 
\citet{bastani15} partitioned the dataset into three arms based on the optimal dosage for each patient. However, their partitions are highly unbalanced with one arm having only $13\%$ and another arm having $54\%$ of the patients. Therefore, in our analysis, we evenly partition the dataset into four arms based on the quantiles of the optimal warfarin dosages in the dataset: (1) Level 1: $(0.0, 19.5]$ mg/week, (2) Level 2: $(19.5, 28.0]$ mg/week, (3): (28.0, 38.5] mg/week, and (4) Level 4: $(38.5, \infty)$ mg/week. The proportion of each arm is nearly $25\%$ after this partition. Following \citet{bastani15}, we construct $94$ patient-specific covariates, including intercept and indicators for categorical variables.

We apply our $\ell_1$-confidence ball based method to the dataset along with other methods. We include a Doctor's policy for comparison, which always assigns a level 2 dose that has the highest percentage ($27.9\%$) of patients in the warfarin data. We also include oracle policies that assign the optimal dose given the true parameter $\beta_\ast$. Similar to \citet{bastani15}, the ``true" parameter vector is estimated using all patient outcomes. We consider two versions of optimal policy: Linear Oracle that estimates $\beta_\ast$ using linear regression, and Logit Oracle that estimates $\beta_\ast$ using multinomial logistic regression (with four categories).

For the real dataset, the expected regret \eqref{eq:1} is not computable, since we do not know the true parameter vector $\beta_\ast$. Instead, we use a surrogate measure which calculates the misclassification rate of assigning optimal dosage to patients. The misclassification rate is calculated as the number of incorrect decisions divided by the number of patients. The lower the expected regret, the lower the misclassification rate. We consider $10$ random permutations of patients and take the average of the misclassification rate of $10$ permutations. Figure \ref{fig:warfarin} illustrates the average fraction of incorrect dosing decisions under different policies. We observe that our method outperforms other competitors except for the oracle polices. Especially, when the number of observations is smaller than 1000, the performance of our method is clearly better than other algorithms, which indicates the benefits of our method when the sample size is relatively small compared to the dimension.

\begin{figure}[h]
\vskip 0.2in
\begin{center}
\centerline{\includegraphics[width=0.8\textwidth]{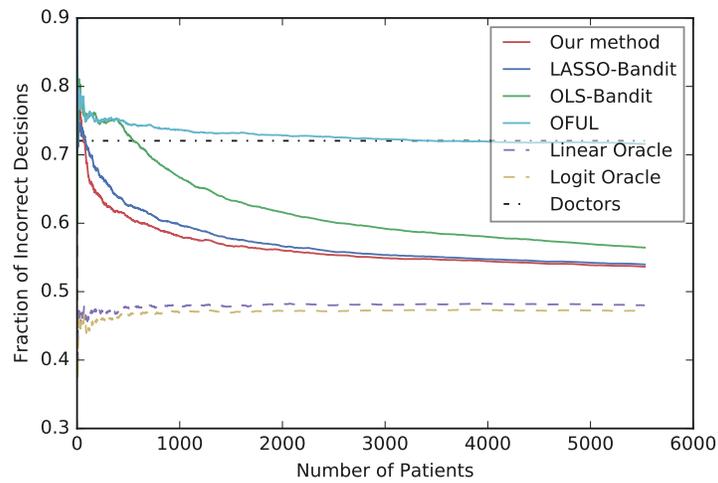}}
\caption{Comparison of the fraction of incorrect dosing decisions under the oracle, LASSO Bandit, OLS Bandit, OFUL,  the $\ell_1$-confidence ball algorithm, and doctor policies in the warfarin data. Lower value indicates better performance. \vspace{4ex}}
\label{fig:warfarin}
\end{center}
\vskip -0.2in
\end{figure}

\newpage
\appendix
\section{Proof of Theorem \ref{Thm:3}}

We will apply the Bayesian decision theory to show the lower bound of the cumulative regret up to horizon time $T$. To prove the $\log T$ term in Theorem \ref{Thm:3}, we can follow the procedure in \citet{golden}. For simplicity, we will focus on the terms in lower bound involving the dimension $d$.

Suppose we have two arms, the first of which is $X^{(1)}=(X_0,X_1, \ldots, X_d)$ and the second is $X^{(2)} = (0, -X_1,\ldots, -X_d)$. Here the two arms are not independent from each other. Entry $X_0$ in the arm vector follows a discrete distribution which will be specified later. The other entries $\{X_1,\ldots, X_d\}$ are i.i.d. truncated normal distribution of $N(0,1)$. For simplicity, we first consider the standard normal case, later the proof can be easily applied to the truncated normal case. The common parameter vector is $\beta$ is from a set $\mathcal{M}=\{\beta\in \mathbb{R}^{d+1}: \beta_0=1,\beta_u=\theta, \beta_j=0 \; \mathrm{for}~j \neq u, u\in[d]\}$. The sparsity of parameters in set $\mathcal{M}$ is then $s_0=2$. Within the set $\mathcal{M}$, the first entry $\beta_0$ of the parameter vector is assumed to be known, so we do not need to estimate it. The remaining $d$ entries of the parameter vector has exactly one nonzero entry with value $\theta$. 
We assume that the parameter vector $\beta$ is uniformly distributed within the set $\mathcal{M}$. In order to guarantee that the assumptions in section \ref{Sec:Assum} hold for the configuration of the parameters in $\mathcal{M}$ and data, we define $\beta_{\min}:=\sqrt{\frac{\log d}{T}}$ and $\theta:=c\beta_{\min}$, where the constant $c$ is sufficiently small and will be defined momentarily, and 
\begin{align}
    \begin{cases}P(X_0 = 0) = C\beta_{\min}^{\alpha} \\
    P(X_0 = a) = \frac{1}{2}(1- C\beta_{\min}^{\alpha}), \quad \text{when}\; a = \pm 1
    \end{cases}
\end{align}
Define $\mathcal{P}_{\alpha}$ as the bandit environment where Assumptions~\ref{Assump:reward}--\ref{Assump:stochastic} hold with constant $\alpha$ in Assumption~\ref{Assump:reward}\ref{Assump:margin}, and $\mathcal{P}_{\mathcal{M}}$ be the set of environments defined as above with $\beta$ from $\mathcal{M}$. We have $\mathcal{P}_{\mathcal{M}}\subseteq \mathcal{P}_{\alpha}$ according to the configuration of $\mathcal{P}_{\mathcal{M}}$. Moreover, we define $\mathcal{F}_{t-1}^+=\sigma(X^{(1)}_{1},X^{(2)}_{1}, Y^{(\pi_1)}_1,\ldots, X^{(1)}_{t-1},X^{(2)}_{t-1}, Y^{(\pi_t)}_{t-1},X^{(1)}_t,X^{(2)}_t)$ as a $\sigma$-algebra.
Then for any policy $\pi$, the supreme of the cumulative regret of $\pi$ at horizon $T$ within $\mathcal{P}_{\alpha}$ can be lower bounded by:
\begin{align}
    & \sup \left\{R_{T}\left(\pi, \pi^{*}\right) : P_{X,Y} \in \mathcal{P}_{\alpha}\right\} \nonumber \\ 
    \geq & \sup _{P_{X,Y}\in\mathcal{P}_{\mathcal{M}}} \mathbb{E} \sum_{t=1}^{T}|\beta^T\Delta_{x,t}|\left[I\left\{\beta^T\Delta_{x,t} \geq 0, \pi_{t}=2\right\}+I\left\{\beta^T\Delta_{x,t}<0, \pi_{t}=1\right\}\right] \nonumber \\ 
    \geq & \mathbb{E}\bigg\{ \sum_{t=1}^{T}\big(\mathbb{E}_{\beta}\left[\beta^T\Delta_{x,t} I\left\{\beta^T\Delta_{x,t} \geq 0\right\} | \mathcal{F}_{t-1}^{+}\right] I\left\{\pi_{t}=2\right\} \nonumber \\
    &~~~~~~~~~~~-\mathbb{E}_{\beta}\left[\beta^T\Delta_{x,t} I\left\{\beta^T\Delta_{x,t}<0\right\} | \mathcal{F}_{t-1}^{+}\right] I\left\{\pi_{t}=1\right\}\big)\bigg\},
\end{align}
where $P_{X,Y}:=\left(P_{X^{(1)}, Y^{(1)}}, P_{X^{(2)}, Y^{(2)}}\right)$ is the joint distribution of the feature vectors and the response variables, and $\Delta_{x,t}:=X^{(1)}_t-X^{(2)}_t=(\Delta_{0,t},\ldots, \Delta_{d,t})$. Here $\mathbb{E}_{\beta}\left[\cdot|\mathcal{F}_{t-1}^+\right]$ denotes the expectation of $\beta$ in $\mathcal{M}$ conditioned on $\mathcal{F}_{t-1}^+$, which is the $\sigma$-algebra generated by feature vectors up to time $t$ and observed response variables of the chosed arms up to time $t-1$. Then, according to the Bayesian decision rule in \citet{golden}, the optimal policy is $\hat\pi_t=1$ if
\begin{align} \label{eq:decision rule}
     \mathbb{E}_{\beta}\left[\beta^T\Delta_{x,t} I\left\{\beta^T\Delta_{x,t} \geq 0\right\} | \mathcal{F}_{t-1}^{+}\right] & \geq - \mathbb{E}_{\beta}\left[\beta^T\Delta_{x,t} I\left\{\beta^T\Delta_{x,t}<0\right\} | \mathcal{F}_{t-1}^{+}\right] \nonumber \\
    \iff \qquad \mathbb{E}_{\beta}\left[\beta^T\Delta_{x,t} | \mathcal{F}_{t-1}^{+}\right] & \geq 0
\end{align}
and $\hat\pi_t=2$ otherwise. Define $\mathcal{F}_{t-1}=\sigma(X^{(1)}_{1},X^{(2)}_{1}, Y^{(\pi_1)}_1,\ldots, X^{(1)}_{t-1},X^{(2)}_{t-1}, Y^{(\pi_t)}_{t-1})$ as the $\sigma$-algebra generated by feature vectors and observed responses up to time $t-1$. We set $\hat\beta_{t-1}=\mathbb{E}\left[\beta|\mathcal{F}_{t-1}^{+}\right]$. Since $\Delta_{x,t}$ is independent from $\mathcal{F}_{t-1}$ and $\beta$, we have $\mathbb{E}\left[\beta|\mathcal{F}_{t-1}^{+}\right]=\mathbb{E}\left[\beta|\mathcal{F}_{t-1}\right]$. The Bayesian policy is then $\hat\pi =\{\hat\pi_t:t\in[T]\}$, where $\hat\pi_t=I\left\{\hat\beta_{t-1}^T\Delta_{x,t} \geq 0\right\}+2I\left\{\hat\beta_{t-1}^T\Delta_{x,t}< 0\right\}$. If we define $u\in [d]$ as the location of the nonzero entry $\theta$ in parameter vector $\beta$, then according to the distribution of $\beta$ within $\mathcal{M}$, $u$ is uniform distributed in $[d]$. Then, we have
\begin{align} \label{eq:c4}
    & \sup \left\{R_{T}\left(\pi, \pi^{*}\right) : P_{X,Y} \in \mathcal{P}_{\alpha}\right\} \nonumber \\
    \geq & \sup \left\{R_{T}\left(\hat\pi, \pi^{*}\right) : P_{X,Y} \in \mathcal{P}_{\mathcal{M}}\right\} \nonumber \\
    \geq & \sum_{t=1}^T \mathbb{E}_{\beta}\mathbb{E}_{X,Y|\beta}\left[|\beta^T\Delta_{x,t}|I\left\{\mathrm{sign}(\beta^T\Delta_{x,t})\neq \mathrm{sign}(\hat\beta_{t-1}^T\Delta_{x,t})\right\} \right] \nonumber\\
    =& \sum_{t=1}^T \mathbb{E}_{\beta,u}\mathbb{E}\left[|X_{0,t}+2\theta X_{u,t}| I\left\{\mathrm{sign}(\beta^T\Delta_{x,t})\neq \mathrm{sign}(\hat\beta_{t-1}^T\Delta_{x,t})\right\}\right] \nonumber \\
    \geq & \sum_{t=1}^T \mathbb{E}_{\beta,u} P(X_0=0)\mathbb{E}_{X,Y|\beta}\left[|2\theta X_{u,t}|I\left\{\mathrm{sign}(\beta^T\Delta_{x,t})\neq \mathrm{sign}(\hat\beta_{t-1}^T\Delta_{x,t})\right\}\big|X_0=0\right] \nonumber \\
    \geq & \sum_{t=1}^T \mathbb{E}_{\beta,u} P(X_0=0) \frac{1}{4} \mathbb{E}_{X,Y|\beta}\left[|2\theta X_{u,t}|I\left\{\hat\beta_{t-1}^T\Delta_{x,t}<0\right\}\big|X_0=0, \beta^T\Delta_{x,t} \geq 0\right] \nonumber \\
    \geq & \sum_{t=1}^T \mathbb{E}_{\beta,u} \frac{1}{2} Cc \beta_{\min}^{\alpha+1} \mathbb{E}_{X,Y|\beta}\left[I\left\{1<X_{u,t}< 2,\; \hat\beta_{t-1}^T\Delta_{x,t}<0\right\}\big|X_0=0, \beta^T\Delta_{x,t} \geq 0\right] \nonumber \\
    = & \sum_{t=1}^T \frac{1}{2} Cc \beta_{\min}^{\alpha+1} \mathbb{E}_{\beta,u} \left[ P_{X,Y|\beta}\left(1<X_{u,t}< 2,\; \hat\beta_{t-1}^T\Delta_{x,t}<0\right)\right].
\end{align}
The third inequality above is by the conditional expectation with respect to the event $\{X_0=0\}$, and the fourth inequality is by the symmetric distribution of $\beta^T\Delta_{x,t}$ when $X_0=0$. The last inequality is by the fact that when $X_{u,t}\in (1,2)$, then $|\theta X_{u,t}|\geq c\beta_{\min}$.

Then, it suffices to prove the lower bound for the expectation term in the last equation, where
\begin{align} \label{eq:c5}
     & \mathbb{E}_{\beta,u} \left[ P_{X,Y|\beta}\left(1<X_{u,t}< 2,\; \hat\beta_{t-1}^T\Delta_{x,t}<0\right)\right] \nonumber \\
     = & \mathbb{E}_{\beta,u} \left[ P\left(1<X_{u,t}< 2\right) P_{X,Y|\beta}\left(\hat \beta_{u,t-1} X_{u,t} +\hat\beta_{-u,t-1}^TX_{-u,t} < 0\mid 1<X_{u,t}< 2\right)\right] \nonumber \\
     \geq & \tilde{C} P\left(\hat \beta_{u,t-1} X_{u,t} +\hat\beta_{-u,t-1}^TX_{-u,t} < 0 \mid 1 < X_{u,t} <2\right).
\end{align}
Here $\hat\beta_{-u,t-1}\in\mathbb{R}^{d-1}$ is the subvector of $\hat{\beta}_{t-1}$ without the $0$-th and $u$-th entries, and $X_{-u, t} := (X_{1,t}, \cdots, X_{u-1,t}, X_{u+1,t}, \cdots, X_d)^T\in\mathbb{R}^{d-1}$. The probability $\tilde{C} := P(1 < X_{u,t} <2)$ is positive, since $X_{u,t}$ is standard normal by definition.

In order to lower bound the probability in \eqref{eq:c5}, we use the Fano's inequality to reduce the probability to the false discovery rate in a multiple hypothesis testing problem. Specifically, we define a test function $\hat u_{t-1} :=\arg\max_{j\in [d]}|\hat\beta_{j,t-1}|$. We then show that the probability in \eqref{eq:c5} is lower bounded by $P(\hat u_{t-1}\neq u)$. In fact, if event $\{\hat u_{t-1}\neq u\}$ holds, then we have $|\hat\beta_{u,t-1}|^2\le\|\hat\beta_{-u,t-1}\|_2^2 $ by the definition of $\hat u_{t-1}$. In addition, we have
\begin{align}
    & P\left(\hat \beta_{u,t-1} X_{u,t} +\hat\beta_{-u,t-1}^TX_{-u,t} < 0 \mid 1 < X_{u,t} <2,  |\hat\beta_{u,t-1}|^2\le\|\hat\beta_{-u,t-1}\|_2^2 \right) \nonumber \\
    \ge & P\left(|2\hat\beta_{u,t-1}|+ \hat\beta_{-u,t-1}^TX_{-u,t} < 0 \mid 1 < X_{u,t} <2, |\hat\beta_{u,t-1}|^2\le\|\hat\beta_{-u,t-1}\|_2^2\right) \nonumber \\
    = & P\left( 2 + N\left(0, \frac{\|\hat\beta_{-u,t-1}\|_2^2}{|\hat\beta_{u,t-1}|^2}\right)< 0 \mid 1 < X_{u,t} <2, |\hat\beta_{u,t-1}|^2\leq\|\hat\beta_{-u,t-1}\|_2^2\right) \nonumber \\
    \ge & P(N(2,1)< 0) = C'
\end{align}
The last equation is because that $X_{-u,t}$ is independent from $\hat\beta_{t-1}$ and $X_{u,t}$. In addition, the random variable $\hat\beta_{-u,t-1}^TX_{-u,t}$ conditioned on $\hat\beta_{t-1}$ follows normal distribution with mean $0$ and variance $\|\hat\beta_{-u,t-1}\|_2^2$. Then, by applying the above inequalities to eq.~\eqref{eq:c5}, we have
\begin{equation}\label{eq:c7}
    \mathbb{E}_{\beta,u} \left[ P_{X,Y|\beta}\left(1<X_{u,t}< 2,\; \hat\beta_{t-1}^T\Delta_{x,t}<0\right)\right] \ge \tilde{C} C' P(\hat u_{t-1} \neq u).
\end{equation}
It suffices to lower bound $P(\hat u_{t-1} \neq u)$ for each time $t$. We first state a variant of Fano's lower bound in multiple hypothesis testing, the details of which can be found in Chapter 15 of \citet{wainwright}.

\begin{lemma}\label{lem:fano}
Assume that $U$ is uniform on $\mathcal{U}$. For any Markov chain $U\rightarrow (X,Y) \rightarrow \hat U$,
\begin{equation}
    P(\hat U \neq U) \geq 1-\frac{I(X,Y; U) +\log 2}{\log(|\mathcal{U}|)}.
\end{equation}
Here $I(X, Y;U)$ is the mutual information between $(X, Y)$ and $U$, and $|\mathcal{U}|$ is the cardinality of set $\mathcal{U}$.
\end{lemma}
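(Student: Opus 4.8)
The plan is to establish this standard form of Fano's inequality through the classical argument combining the data processing inequality with a chain-rule decomposition of conditional entropy. First I would introduce the error indicator $E=\mathbf{1}\{\hat U\neq U\}$, a binary random variable, and observe that the Markov chain $U\rightarrow (X,Y)\rightarrow \hat U$ together with the data processing inequality yields $I(U;\hat U)\le I(U;X,Y)$. Since $U$ is uniform on $\mathcal U$, we have $H(U)=\log|\mathcal U|$, and hence
\begin{equation*}
    H(U\mid \hat U)=H(U)-I(U;\hat U)\ge \log|\mathcal U|-I(X,Y;U).
\end{equation*}

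The second step is to upper bound $H(U\mid \hat U)$. I would expand the joint conditional entropy $H(U,E\mid \hat U)$ in two ways using the chain rule. On one hand, since $E$ is a deterministic function of the pair $(U,\hat U)$, we have $H(E\mid U,\hat U)=0$, so $H(U,E\mid \hat U)=H(U\mid \hat U)$. On the other hand, $H(U,E\mid \hat U)=H(E\mid \hat U)+H(U\mid E,\hat U)$, where $H(E\mid \hat U)\le H(E)\le \log 2$ because $E$ is binary. For the remaining term, conditioning on the value of $E$ gives $H(U\mid E,\hat U)=P(E=1)\,H(U\mid E=1,\hat U)$, since the event $\{E=0\}$ forces $U=\hat U$ and contributes zero entropy; moreover, on $\{E=1\}$ the variable $U$ ranges over at most $|\mathcal U|-1$ values, so $H(U\mid E=1,\hat U)\le \log|\mathcal U|$. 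Combining these bounds yields
\begin{equation*}
    H(U\mid \hat U)\le \log 2+P(\hat U\neq U)\,\log|\mathcal U|.
\end{equation*}

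Finally, chaining the lower and upper bounds on $H(U\mid \hat U)$ gives $\log|\mathcal U|-I(X,Y;U)\le \log 2+P(\hat U\neq U)\log|\mathcal U|$, and solving for $P(\hat U\neq U)$ produces the claimed inequality. This argument is entirely standard, so I do not anticipate a genuine obstacle; the only points requiring care are ensuring the data processing inequality is applied in the correct direction for the Markov chain $U\rightarrow (X,Y)\rightarrow \hat U$, and that the uniformity of $U$ is used to identify $H(U)$ with $\log|\mathcal U|$. Since the statement is a well-known variant quoted verbatim from \citet{wainwright}, I would also simply cite that reference rather than reproduce the full derivation.
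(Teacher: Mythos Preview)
Your argument is correct and follows the standard route to Fano's inequality; the paper itself does not give a proof of this lemma at all, instead simply citing Chapter~15 of \citet{wainwright}. Your final remark that one could just cite that reference is therefore exactly what the paper does.
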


According to the definition of $\mathcal{P}_{\mathcal{M}}$, $u$ is uniform in $[d]$, therefore $\log(|\mathcal{U}|)=\log d$. Then, we prove an upper bound for $I(X_{\{1:t-1\}},Y_{\{1:t-1\}};u)$ to guarantee that the probability $P(\hat u_{t-1}\neq u)$ in \eqref{eq:c7} is bounded away from $0$. Here $X_{\{1:t-1\}}$ and $Y_{\{1:t-1\}}$ represent the set of feature vectors and response variables up to time $t-1$. Based on the chain rule of the mutual information, we have
\begin{equation}\label{eq:c9}
    I(X, Y; u) = I(X; u) + I(Y; u| X) = I(Y; u|X),
\end{equation}
where the last equality is due to the independence between $X$ and the uniformly distributed $u$. Moreover, according to Chapter 15 of \citet{wainwright}, the conditional mutual information of $Y$ and $u$ conditoned on $X$ can be upper bounded by the Kullback-Leibler divergence, i.e., 
\begin{equation}
    I(Y; u|X) \le \frac{1}{|\mathcal{U}|^2}\sum_{j,k=1}^{|\mathcal{U}|} D(P_{\beta^j|X} \| P_{\beta^k|X}),
\end{equation}
where $P_{\beta^j|X}$ is the distribution of $Y$ conditioned on $X$ and the parameter vector $\beta^j\in\mathcal{M}$ corresponding to $u^j\in\mathcal{U}$. The KL-divergence of between two normal distributions, i.e., $P_{\beta^j|X}=N(X\beta^j,\sigma^2)$ and $P_{\beta^k|X}=N(X\beta^k,\sigma^2)$, can be upper bounded as
\begin{align}
    &D(P_{\beta^j|X} \| P_{\beta^j|X}) \nonumber\\
    =& \frac{1}{2\sigma^2}\|X(\beta^j-\beta^k)\|_2^2 = \frac{1}{2\sigma^2}\|(X_j-X_k)\theta\|_2^2 \nonumber \\
    \leq& \frac{1}{2\sigma^2}\theta^2 4x_{\max}^2 
    = \frac{2x^2_{\min}}{\sigma^2}\cdot\frac{c^2\log d}{T}.
\end{align}
Summing up the mutual information up to time $t-1$, and according to eq.~\eqref{eq:c9}, we have that
\begin{align}
    I(X_{\{1:t-1\}},Y_{\{1:t-1\}};u) 
    = I(Y_{\{1:t-1\}};u)
    \leq \sum_{s=1}^{t-1} \frac{2x^2_{\min}}{\sigma^2}\cdot\frac{c^2\log d}{T}
    \leq \frac{2x_{\max}^2c^2\log d}{\sigma^2}.
\end{align}
Define constant $c:=\frac{\sigma}{2x_{\max}}$. Then, if $\log d \geq 4\log 2$, we have that
\begin{align}\label{eq:c13}
    P(\hat u_{t-1} \neq u) & \geq 1- \frac{I(X_{\{1:t-1\}},Y_{\{1:t-1\}};u) + \log 2}{\log|\mathcal{U}|} \nonumber \\
    & \geq 1- \frac{\frac{1}{2}\log d+ \log 2}{\log d} \geq \frac{1}{4}.
\end{align}
By applying eq.~\eqref{eq:c13} and eq.~\eqref{eq:c7} to eq.~\eqref{eq:c4}, it can be derived that
\begin{align}
    & \sup \left\{R_{T}\left(\pi, \pi^{*}\right) : P_{X,Y} \in \mathcal{P}_{\alpha}\right\} \nonumber \\
   \geq & \sum_{t=1}^T \frac{1}{2} Cc \beta_{\min}^{\alpha+1} \mathbb{E}_{\beta,u} \left[ P_{X,Y|\beta}\left(1<X_{u,t}< 2,\; \hat\beta_{t-1}^T\Delta_{x,t}<0\right)\right] \nonumber \\
   \geq & \sum_{t=1}^T \frac{1}{2}Cc\beta_{\min}^{\alpha+1}\tilde{C} C' P(\hat u_{t-1} \neq u) \nonumber \\
   \geq & \frac{\tilde{C} C'Cc}{8} (\frac{\log d}{T})^{\frac{\alpha+1}{2}}T. \nonumber
\end{align}
Therefore, the supreme cumulative regret incurred by any policy $\pi$ can be lower bounded as
\begin{align}
  \sup \left\{R_{T}\left(\pi, \pi^{*}\right) : P_{X,Y} \in \mathcal{P}_{\alpha}\right\} \geq \begin{cases}
 \tilde{C}_L\left(\log d\right) ,\quad \alpha=1\\
 \tilde{C}_L\left((\log d)^{\frac{\alpha+1}{2}}T^{\frac{1-\alpha}{2}}\right), \quad \alpha\in [0,1),
\end{cases}
\end{align}
where $\tilde{C}_L$ is some postive constant.

Combining the regret lower bound $\Omega(\log T)$ from \citet{golden} on the horizon $T$, we prove the regret lower bound in Theorem \ref{Thm:3}.

\section{Proof of Theorem \ref{Thm:2}}
\label{Sec:proof_upper_bound}

In this section, we provide the proof for Theorem \ref{Thm:2}. As we discussed in Section \ref{sec:sketched_proof_upper_bound}, we first prove Proposition \ref{Prop:1} and Proposition \ref{Prop:2}. Then Corollary \ref{Cor:confidence_set} is an application of the above propositions. Lastly, we prove Theorem \ref{Thm:2} by applying the result in Corollary \ref{Cor:confidence_set}.

\subsection{Proof of Proposition \ref{Prop:1}} \label{sec: prop1}

For consistency with Proposition 1, let $X_i$ be the $i^{th}$ row of $X$ and $Y(i)$ be the $i^{th}$ entry of $Y$. The sequence $\{X_i: i=1,\ldots,t\}$ forms an adapted sequence of observations, i.e. $X_t$ may depend on the history $\{X_s, Y(s)\}_{s=1}^{t-1}$. And let $\epsilon\in \mathbb{R}^t$ be the $\sigma$-sub-Guassian errors.

Before proving Proposition \ref{Prop:1}, we first stat the following lemmas for adapted sequences.

\begin{lemma} \label{lemma:1} (Bernstein Concentration).
Let $\{D_k,\mathcal{G}_k\}_{k=1}^{\infty}$ be a martingale difference sequence, and suppose that $D_k$ is $\sigma$-sub-Guassian in an adapted sense, i.e,. for all $\alpha\in \mathbb{R}$, $\mathbb{E}[e^{\alpha D_k}|\mathcal{G}_{k-1}]\le e^{\alpha^2\sigma^2/2}$ almost surely. Then, for all $t\ge 0$, $P[|\sum_{k=1}^n D_k|\ge t]\le 2\exp[-t^2/2n\sigma^2]$. 
\end{lemma}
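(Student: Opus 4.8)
The plan is to prove this by the standard exponential-moment (Chernoff) method, with the martingale structure entering only through an iterated application of the tower property of conditional expectation. Write $S_n=\sum_{k=1}^n D_k$. For any $\alpha>0$, Markov's inequality applied to the nonnegative random variable $e^{\alpha S_n}$ gives $P[S_n\ge t]\le e^{-\alpha t}\,\mathbb{E}[e^{\alpha S_n}]$, so the entire problem reduces to controlling the moment generating function $\mathbb{E}[e^{\alpha S_n}]$.

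The key step, and the only place the adapted sub-Gaussian hypothesis is used, is a recursive bound on this MGF. Since $S_{n-1}$ is $\mathcal{G}_{n-1}$-measurable, I would condition on $\mathcal{G}_{n-1}$ and factor it out:
\begin{align*}
    \mathbb{E}\big[e^{\alpha S_n}\big]
    =\mathbb{E}\Big[e^{\alpha S_{n-1}}\,\mathbb{E}\big[e^{\alpha D_n}\mid \mathcal{G}_{n-1}\big]\Big]
    \le e^{\alpha^2\sigma^2/2}\,\mathbb{E}\big[e^{\alpha S_{n-1}}\big],
\end{align*}
where the inequality is exactly the assumed conditional sub-Gaussianity $\mathbb{E}[e^{\alpha D_n}\mid\mathcal{G}_{n-1}]\le e^{\alpha^2\sigma^2/2}$. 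Iterating this bound $n$ times (equivalently, a short induction on $n$) yields $\mathbb{E}[e^{\alpha S_n}]\le e^{n\alpha^2\sigma^2/2}$.

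Substituting back into the Markov bound gives $P[S_n\ge t]\le \exp\!\big(-\alpha t+n\alpha^2\sigma^2/2\big)$ for every $\alpha>0$. Optimizing the exponent over $\alpha$ by the choice $\alpha=t/(n\sigma^2)$ produces the one-sided tail bound $P[S_n\ge t]\le \exp\!\big(-t^2/(2n\sigma^2)\big)$. To obtain the two-sided statement, I would apply the identical argument to the martingale difference sequence $\{-D_k,\mathcal{G}_k\}$ (which satisfies the same conditional sub-Gaussian bound since the hypothesis holds for all $\alpha\in\mathbb{R}$), giving $P[S_n\le -t]\le \exp\!\big(-t^2/(2n\sigma^2)\big)$, and then combine the two via a union bound to get the factor of $2$.

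The argument is essentially routine; the only point requiring care is the measurability bookkeeping in the recursive MGF step. One must use that $e^{\alpha S_{n-1}}$ is $\mathcal{G}_{n-1}$-measurable so it can be pulled outside the inner conditional expectation, rather than (incorrectly) invoking independence of the increments, which does not hold here. This is precisely the feature that makes the lemma applicable to the dependent, policy-induced observations in the bandit setting, and it is the reason the result is stated for adapted sequences rather than i.i.d. data.
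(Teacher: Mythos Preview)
Your argument is correct and is exactly the standard Chernoff bound with iterated conditioning that underlies this result. The paper does not give its own proof of this lemma but simply cites Theorem~2.3 of \citet{wainwright} (specialized to $\alpha_*=\alpha_k=0$, $\nu_k=\sigma$), whose proof is precisely the exponential-moment argument you wrote out.
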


Proof of Lemma \ref{lemma:1} follows from Theorem 2.3 of \citet{wainwright} when $\alpha_*=\alpha_k=0$ and $\nu_k=\sigma$ for all $k$.

\begin{lemma} \label{lemma:2}
Define the event
\begin{equation*}
    \mathcal{F}(\lambda_0(\gamma))=\{\max_{r\in [d]}(2|\epsilon^T X^{(r)}|/t)\le \lambda_0(\gamma) \},
\end{equation*}
where $X^{(r)}$ is the $r^{th}$ column of $X$ and $\lambda_0(\gamma)=2\sigma x_{\max}\sqrt{(\gamma^2+2\log d)/t}$. Then we have $P[\mathcal{F}(\lambda_0(\gamma))]\ge 1-2\exp[-\gamma^2/2]$.
\end{lemma}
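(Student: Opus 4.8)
The plan is to treat each column of the design separately, recognize $\epsilon^T X^{(r)}$ as the terminal value of a martingale, apply the adapted Bernstein bound from Lemma~\ref{lemma:1}, and then close with a union bound over the $d$ columns. This is the standard route for controlling the ``noise'' term in a LASSO analysis, adapted here to the dependent bandit design.

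First I would fix a column index $r\in[d]$ and write $\epsilon^T X^{(r)}=\sum_{i=1}^t \epsilon_i\, X_i^{(r)}$, where $X_i^{(r)}$ denotes the $r$-th coordinate of the $i$-th row $X_i$. To invoke Lemma~\ref{lemma:1} I need the summands $D_i:=\epsilon_i X_i^{(r)}$ to form a martingale difference sequence that is sub-Gaussian in the adapted sense. I would set up a filtration $\{\mathcal{G}_i\}$ in which $\mathcal{G}_{i-1}$ already contains the current feature $X_i$ but not the current noise $\epsilon_i$ (this mirrors the $\mathcal{F}^{+}_t$ from the problem formulation, where the learner observes the features before the reward). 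Since the design is adapted, $X_i^{(r)}$ is then $\mathcal{G}_{i-1}$-measurable and can be pulled out of the conditional moment generating function.

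The key estimate is the conditional sub-Gaussianity of $D_i$. Using that $\epsilon_i$ is $\sigma$-sub-Gaussian conditional on $\mathcal{G}_{i-1}$ and that $|X_i^{(r)}|\le x_{\max}$ by the hypothesis $\|X_i\|_\infty\le x_{\max}$, I would bound
\begin{equation*}
\mathbb{E}\big[e^{\alpha D_i}\mid \mathcal{G}_{i-1}\big]=\mathbb{E}\big[e^{\alpha X_i^{(r)}\epsilon_i}\mid \mathcal{G}_{i-1}\big]\le e^{\alpha^2 (X_i^{(r)})^2\sigma^2/2}\le e^{\alpha^2 x_{\max}^2\sigma^2/2},
\end{equation*}
so that $D_i$ is $(\sigma x_{\max})$-sub-Gaussian in the adapted sense. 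Lemma~\ref{lemma:1} applied with parameter $\sigma x_{\max}$ then gives, for every deviation level $s\ge0$, $P[\,|\epsilon^T X^{(r)}|\ge s\,]\le 2\exp(-s^2/(2t\sigma^2 x_{\max}^2))$.

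Finally I would calibrate the deviation. Choosing $s=t\lambda_0(\gamma)/2=\sigma x_{\max}\sqrt{t(\gamma^2+2\log d)}$ makes the exponent equal to $(\gamma^2+2\log d)/2$, so the per-column tail is at most $2\exp(-(\gamma^2+2\log d)/2)=(2/d)\,e^{-\gamma^2/2}$. A union bound over the $d$ columns then yields $P[\max_{r\in[d]} 2|\epsilon^T X^{(r)}|/t>\lambda_0(\gamma)]\le d\cdot (2/d)\,e^{-\gamma^2/2}=2e^{-\gamma^2/2}$, which is exactly the complement of the claimed bound. The only genuinely delicate point is the measurability bookkeeping in the second step: because the rows $X_i$ are generated sequentially by the bandit policy they are dependent, so one must choose the filtration carefully so that $X_i^{(r)}$ is known at time $i-1$ while $\{\epsilon_i\}$ remains a conditionally sub-Gaussian martingale difference sequence. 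Once this is arranged, the concentration estimate and the union bound are entirely routine.
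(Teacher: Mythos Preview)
Your proposal is correct and is exactly the standard argument. The paper does not write out its own proof of Lemma~\ref{lemma:2} but simply cites Lemma~EC.2 of \citet{bastani15}; the proof there proceeds precisely as you outline, using Lemma~\ref{lemma:1} column-by-column with the filtration that makes $X_i^{(r)}$ predictable, followed by a union bound over~$r$.
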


Proof of Lemma \ref{lemma:2} can be found in Lemma EC.2. of \citet{bastani15}.

\begin{lemma}\label{lemma:3}
For any $\lambda_0\in \mathbb{R}^+$, when $\lambda\ge \lambda_0$, on event $\mathcal{F}(\lambda_0)$, we have 
\begin{equation*}
    \|X(\hat{\beta}-\beta_\ast)\|_2^2/t\le 3\lambda\|\hat{\beta}_S-\beta_{*,S}\|_1-\lambda\|\hat{\beta}_{S^c}-\beta_{*,S^c}\|_1.
\end{equation*}
\end{lemma}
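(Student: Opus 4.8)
The plan is to obtain this inequality as a purely deterministic consequence of the first-order optimality of the LASSO objective, valid pointwise on the event $\mathcal{F}(\lambda_0)$. The key realization is that the dependence among the adapted observations $\{X_i\}$ plays no role at this stage: all of the probabilistic content is already isolated in the event $\mathcal{F}(\lambda_0)$, whose probability is controlled separately by Lemma~\ref{lemma:2}. Hence the classical ``basic inequality'' argument carries over to the bandit setting verbatim, provided we argue conditionally on $\mathcal{F}(\lambda_0)$.

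First I would use that $\hat\beta$ minimizes $\mathcal{L}_t(\beta)+\lambda\|\beta\|_1=\frac{1}{2t}\|Y-X\beta\|_2^2+\lambda\|\beta\|_1$, so evaluating at $\hat\beta$ gives a value no larger than at $\beta_\ast$. Substituting $Y=X\beta_\ast+\epsilon$, so that $Y-X\hat\beta=\epsilon-X(\hat\beta-\beta_\ast)$ and $Y-X\beta_\ast=\epsilon$, then expanding the square and cancelling the common $\frac{1}{2t}\|\epsilon\|_2^2$ term, I arrive at
\begin{equation*}
\frac{1}{2t}\|X(\hat\beta-\beta_\ast)\|_2^2 \le \frac{1}{t}\,\epsilon^T X(\hat\beta-\beta_\ast) + \lambda\|\beta_\ast\|_1 - \lambda\|\hat\beta\|_1.
\end{equation*}
Next I would control the cross term by H\"older's inequality, $\frac{1}{t}|\epsilon^T X(\hat\beta-\beta_\ast)|\le \frac{1}{t}\|X^T\epsilon\|_\infty\,\|\hat\beta-\beta_\ast\|_1$, and invoke the definition of $\mathcal{F}(\lambda_0)$ in Lemma~\ref{lemma:2}, which on this event yields $\frac{2}{t}\|X^T\epsilon\|_\infty\le\lambda_0\le\lambda$. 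Writing $\nu:=\hat\beta-\beta_\ast$ and doubling, this gives $\frac{1}{t}\|X\nu\|_2^2 \le \lambda\|\nu\|_1 + 2\lambda\|\beta_\ast\|_1 - 2\lambda\|\hat\beta\|_1$.

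Finally I would split each $\ell_1$-norm across $S=\supp(\beta_\ast)$ and $S^c$. Since $\beta_{\ast,S^c}=0$, we have $\|\nu\|_1=\|\nu_S\|_1+\|\nu_{S^c}\|_1$, $\|\beta_\ast\|_1=\|\beta_{\ast,S}\|_1$, and $\hat\beta_{S^c}=\nu_{S^c}$, while the reverse triangle inequality gives $\|\hat\beta\|_1 \ge \|\beta_{\ast,S}\|_1 - \|\nu_S\|_1 + \|\nu_{S^c}\|_1$. Substituting these, the $\|\beta_{\ast,S}\|_1$ contributions cancel and collecting coefficients leaves $3\lambda\|\nu_S\|_1-\lambda\|\nu_{S^c}\|_1$, which is the claim once I rewrite $\nu_S=\hat\beta_S-\beta_{\ast,S}$ and $\nu_{S^c}=\hat\beta_{S^c}-\beta_{\ast,S^c}$. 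I do not anticipate a genuine analytical obstacle here, as every step is elementary; the only point requiring care is to keep the argument entirely free of any independence assumption, which is automatic because the sole probabilistic input, the event $\mathcal{F}(\lambda_0)$, has been factored out into Lemma~\ref{lemma:2}.
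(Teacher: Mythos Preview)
Your proposal is correct and follows essentially the same approach as the paper: start from the LASSO basic inequality, bound the cross term via H\"older and the event $\mathcal{F}(\lambda_0)$ to get $\frac{1}{t}\|X^T\epsilon\|_\infty\le\lambda/2$, then split the $\ell_1$-norms over $S$ and $S^c$ using the reverse triangle inequality. Your observation that the adapted dependence structure is irrelevant at this deterministic stage is exactly the point.
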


\begin{proof}
According to the definition of the LASSO estimator (\ref{eq:4}), we have 
\begin{equation}\label{eq:7}
    \frac{1}{2t}\|Y-X\hat{\beta}\|_2^2+\lambda\|\hat{\beta}\|_1\le \frac{1}{2t}\|Y-X\beta_\ast\|_2^2+\lambda\|\beta_\ast\|_1.
\end{equation}
Since $\lambda\ge \lambda_0$, then if event $\mathcal{F}(\lambda_0)$ holds, we have $\lambda\ge 2\|\epsilon^TX\|_{\infty}/t$. Thus,
\begin{align}
    \frac1{2t}\|X(\hat{\beta}-\beta_\ast)\|_2^2 & \le \frac{1}{t}\|\epsilon^T X\|_{\infty}\|\hat{\beta}-\beta_\ast\|_1+\lambda(\|\beta_\ast\|_1-\|\hat{\beta}\|_1) \nonumber \\
    & \le \frac{\lambda}{2}(\|\hat{\beta}_S-\beta_{*,S}\|_1+\|\hat{\beta}_{S^c}-\beta_{*,S^c}\|_1) + \lambda(\|\beta_{*,S}\|_1-\|\hat{\beta}\|_1) \nonumber\\
    & \le \frac{\lambda}{2}(\|\hat{\beta}_S-\beta_{*,S}\|_1+\|\hat{\beta}_{S^c}-\beta_{*,S^c}\|_1) \nonumber \\
    &~~~~+ \lambda(\|\hat{\beta}_S-\beta_{*,S}\|_1-\|\hat{\beta}_{S^c}-\beta_{*,S^c}\|_1)\nonumber\\
    & = \frac{3}{2}\lambda\|\hat{\beta}_S-\beta_{*,S}\|_1-\frac{\lambda}{2}\|\hat{\beta}_{S^c}-\beta_{*,S^c}\|_1. \label{eq:8}
\end{align}
\end{proof}

Now we can prove Proposition \ref{Prop:1}. From Lemma \ref{lemma:3}, we have
\begin{equation}\label{eq:9}
    \|\hat{\beta}_{S^c}-\beta_{*,S^c}\|_1\le 3\|\hat{\beta}_S-\beta_{*,S}\|_1.
\end{equation}
Then we choose $\gamma=2\log t$ for event $\mathcal{F}(\lambda_0(\gamma))$. By the definition of $\lambda$, we have $\lambda\ge \lambda_0$. Thus, if both events $\mathcal{F}(\lambda_0(\gamma))$ and $\{\hat{\Sigma}(X)\in \mathcal{C}(supp(\beta_\ast),\phi)\}$ hold, we have 
\begin{align}
    \|\hat{\beta}-\beta_\ast\|_1^2 & \le \frac{s_0}{\phi^2}\|X(\hat{\beta}-\beta_\ast)\|_2^2/t  \le \frac{s_0}{\phi^2}(\frac{2}{t}\|\epsilon^T X\|_{\infty}+2\lambda)\|\hat{\beta}-\beta_\ast\|_1 \nonumber\\
    & \le \frac{3s_0\lambda}{\phi^2}\|\hat{\beta}-\beta_\ast\|_1 = \frac{6s_0 \sigma x_{\max}}{\phi^2}\sqrt{\frac{2\log t+2\log d}{t}} \|\hat{\beta}-\beta_\ast\|_1.\label{eq:10}
\end{align}
Thus,
\begin{align}
    & P\left[\|\hat{\beta}-\beta_\ast\|_1\le \frac{6s_0 \sigma x_{\max}}{\phi^2}\sqrt{\frac{2\log t+2\log d}{t}} \right] \nonumber\\
    \ge & P\left[\mathcal{F}(\lambda_0(\gamma))\right]-P\left[\{\hat{\Sigma}(X)\notin \mathcal{C}(supp(\beta_\ast),\phi)\}\right] \nonumber\\
    \ge & 1-\frac{2}{t}-P\left[\{\hat{\Sigma}(X)\notin \mathcal{C}(supp(\beta_\ast),\phi)\}\right]. 
    \label{eq:11}
\end{align}

\subsection{Proof of Proposition \ref{Prop:2}} \label{sec:prop2}

\begin{definition}\label{Def:3}
 For a constant $\alpha \ge 1$ and index set $I$, define the set 
\begin{equation*}
    \mathbb{C}_{\alpha}(I)=\left\{\Delta\in \mathbb{R}^d: \|\Delta_{I^c}\|_1\le \alpha \|\Delta_I\|_1\right\}.
\end{equation*}
\end{definition}

Our goal is to prove that with high probability, for $\forall v\in \mathbb{C}_3(S)$
\begin{equation*}
    v^T \hat{\Sigma}_t v=\frac{1}{t}\|X_t v\|_2^2=\frac{1}{t}\sum_{s=1}^t \langle X_{\hat{a}_s,s},v\rangle^2 \ge \kappa \|v\|^2_2,
\end{equation*}
for some constant $\kappa>0$. To prove Proposition \ref{Prop:2}, we first state a weaker version about the compatibility condition of the sample covariance matrix.

\begin{proposition}\label{Prop:3}
For the adapted sequence $\{X_{\hat{a}_s,s}: s=1,\ldots,t\}$ induced by the bandit policy, the sample covariance matrix $\hat{\Sigma}_t$ are guaranteed to satisfy the compatibility condition uniformly with high probability, i.e.
\begin{equation*}
    P\left(\hat{\Sigma}_t\in \mathcal{C}(supp(\beta_\ast),\frac{1}{\sqrt{32\zeta K}}) \right)\ge 1-\mathcal{O}(e^{s_0 K\log K \log d}[e^{-c_0t}+e^{-\tilde{C}\log(K) t}] ),
\end{equation*}
where $c_0,\tilde{C}$ are constants. In addition, we can derive a uniform bound for the compatibility condition over $t$ that exceeds a certain threshold (i.e. $t\ge \mathcal{O}(\log T+s_0K\log K \log d)$) with high probability, i.e.
\begin{equation*}
    P\left(\forall t\ge T', \hat{\Sigma}_t\in \mathcal{C}(supp(\beta_\ast),\frac{1}{\sqrt{32\zeta K}})\right)\ge 1-\frac{2}{T},
\end{equation*}
where $T'=\mathcal{O}(\log T+s_0K \log K \log d)$.
\end{proposition}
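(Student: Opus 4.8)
The goal is to lower bound the quadratic form $v^{T}\hat{\Sigma}_t v=\frac{1}{t}\sum_{s=1}^{t}\langle X_{\hat{a}_s,s},v\rangle^{2}$ uniformly over the restricted cone $\mathbb{C}_3(S)$ of Definition~\ref{Def:3}, so that $\hat{\Sigma}_t$ meets the compatibility condition of Definition~\ref{Def:restricted_eigenvalue}. The central difficulty is that the pulled arm $\hat{a}_s$ is $\mathcal{F}^{+}_{s}$-measurable, so the summands $\langle X_{\hat{a}_s,s},v\rangle^{2}$ are neither independent nor policy-free. The plan is to \emph{decouple} the policy dependence at the price of a factor $K$, establish a crude $\Omega(1/K)$ sparse-eigenvalue bound on an $\epsilon$-net, and then push it to the cone via the transfer principle.

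First I would exploit the deterministic inequality $\langle X_{\hat{a}_s,s},v\rangle^{2}\ge \min_{a\in[K]}\langle X_{a,s},v\rangle^{2}$, valid because $\hat{a}_s\in[K]$. The lower-bounding summands $Z_s(v):=\min_{a}\langle X_{a,s},v\rangle^{2}$ are i.i.d.\ across $s$ and, crucially, \emph{independent of the policy}, since $\{X_{a,s}\}_{a}$ is drawn afresh at each round; this is the decoupling step. For fixed $v\in S^{d-1}$, the anti-concentration Assumption~\ref{Assump:stochastic}\ref{Assump:anti} and a union bound over the $K$ arms give $P(Z_s(v)\le h)\le K\zeta h$, so taking $h\asymp 1/(\zeta K)$ yields $\mathbb{E}[Z_s(v)]\gtrsim 1/(\zeta K)$. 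As the $Z_s(v)$ are i.i.d.\ and bounded for sparse $v$, a Bernstein estimate (Lemma~\ref{lemma:1}) gives $\frac{1}{t}\sum_s Z_s(v)\ge \frac{1}{16\zeta K}$ with failure probability of the stated form, the two rates $e^{-c_0 t}$ and $e^{-\tilde{C}\log(K)t}$ arising from the two regimes of the Bernstein tail. For the matching upper bound I would use $\langle X_{\hat{a}_s,s},v\rangle^{2}\le \max_{a}\langle X_{a,s},v\rangle^{2}$ together with a sub-Gaussian maximal inequality, which gives $\mathbb{E}[\max_a\langle X_{a,s},v\rangle^{2}]\lesssim \Lambda_0^{2}\log K$ (a factor $\log K$, not $K$); this is precisely the source of the $\log K$ appearing in the sparsity level $C=\Theta(\zeta\Lambda_0^{2}K\log K)$.

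Next I would make these pointwise bounds uniform over $\mathcal{H}=\{v\in S^{d-1}:\,\|v\|_0\le Cs_0\}$. Covering each of the $\binom{d}{Cs_0}$ coordinate spheres by a constant-radius $\epsilon$-net produces $\mathcal{N}_\epsilon$ with $\log|\mathcal{N}_\epsilon|=O(Cs_0\log(d/\epsilon))=O(s_0K\log K\log d)$, which is exactly the exponent $e^{s_0K\log K\log d}$ in the probability bound. A union bound of the pointwise estimate over $\mathcal{N}_\epsilon$ controls $v^{T}\hat{\Sigma}_t v$ on the net, while the sparse upper bound lets me transfer the estimate from the nearest net point to an arbitrary $v\in\mathcal{H}$ with error $O(\rho_{\max}\epsilon)$; choosing $\epsilon$ a small absolute constant (depending only on $\zeta,\Lambda_0,K$) absorbs half of the lower bound and yields $v^{T}\hat{\Sigma}_t v\gtrsim 1/(\zeta K)$ uniformly over $\mathcal{H}$. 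Feeding this $\ell_2$ estimate over all $Cs_0$-sparse unit vectors, together with the sparse upper bound, into the Transfer Principle (Lemma~\ref{lemma:transfer})---with $C=\Theta(\zeta\Lambda_0^{2}K\log K)$ chosen large enough relative to the ratio of sparse eigenvalues---converts it into the compatibility condition over $\mathbb{C}_3(S)$ with constant $\phi=1/\sqrt{32\zeta K}$, giving the first display. For the uniform-in-$t$ display I would sum the per-round failure probability $e^{Cs_0\log d}(e^{-c_0 t}+e^{-\tilde{C}\log(K)t})$ over $t$: once $t\ge T'=\Theta(\log T+s_0K\log K\log d)$ the net exponent $Cs_0\log d$ is dominated by $c_0t/2$, the tail is geometric, and the extra $\log T$ offset in $T'$ forces $\sum_{t\ge T'}p_t\le 2/T$.

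The main obstacle throughout is the policy-induced dependence among the selected features $X_{\hat{a}_s,s}$, which defeats the classical i.i.d.\ restricted-isometry arguments. The essential idea that resolves it is the decoupling inequality $\langle X_{\hat{a}_s,s},v\rangle^{2}\ge \min_a\langle X_{a,s},v\rangle^{2}$: it trades the dependence for a factor of $K$ but renders the relevant summands i.i.d.\ and policy-free, after which the covering-plus-concentration machinery goes through. The cost of this crude sacrifice is exactly the $O(1/\sqrt{K})$ compatibility constant, which is subsequently removed in Proposition~\ref{Prop:2} by showing that the optimal arm is in fact pulled a constant fraction of the time.
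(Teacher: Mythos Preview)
Your proposal is essentially correct and follows the same three-step architecture as the paper's proof: decouple the policy dependence via the pointwise bounds $\min_a\langle X_{a,s},v\rangle^2\le\langle X_{\hat a_s,s},v\rangle^2\le\max_a\langle X_{a,s},v\rangle^2$, establish sparse eigenvalue bounds on an $\epsilon$-net of $Cs_0$-sparse unit vectors with $C=\Theta(\zeta\Lambda_0^2K\log K)$, and then push to the cone $\mathbb{C}_3(S)$ via the Transfer Principle.

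One inaccuracy worth correcting: the two exponential rates $e^{-c_0t}$ and $e^{-\tilde C\log(K)t}$ do \emph{not} arise from the two regimes of a single Bernstein tail on $Z_s(v)$. They come from two \emph{separate} concentration events that are union-bounded. The rate $e^{-c_0t}$ (with $c_0=1/8$) comes from the lower bound on $v^T\hat\Sigma_t v$ over the net; the paper obtains it by applying Hoeffding to the bounded indicators $\mathbbm{1}\{\min_a\langle X_{a,s},v\rangle^2>h\}$ at level $h=1/(2\zeta K)$, which is slightly cleaner than a direct Bernstein on $Z_s(v)$ since it sidesteps control of the variance. The rate $e^{-\tilde C\log(K)t}$ comes from the \emph{upper} bound on the sparse operator norm, where the sub-exponential concentration of $\max_a\langle X_{a,s},v\rangle^2$ must be run at deviation level $\delta\asymp\Lambda_0^2\log K$ to absorb the $+t\log K$ term produced by the union bound over arms inside the moment generating function; this is also what forces $\epsilon\asymp 1/(\log(K)\Lambda_0^2)$ and hence the $\log K$ factor in $C$. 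The upper bound is needed only to control the perturbation $v^T\hat\Sigma_t v-u^T\hat\Sigma_t u$ when passing from the net to arbitrary $Cs_0$-sparse vectors, not for the lower bound itself.
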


We first provide the outline of the proof for Proposition \ref{Prop:3}.
\begin{enumerate}[label=(\roman*)]
    \item Discretize the unit sphere $S^{d-1}\in\mathbb{R}^d$, and show eigenvalue condition of $\hat\Sigma_t$ on a finite set $\mathcal{N}_0$.
    \item Show eigenvalue condition for all $m$-sparse vectors within sphere $S^{d-1}$.
    \item Transfer eigenvalue condition of $\hat\Sigma_t$ to vectors in $\mathbb{C}_3(S)$, which implies compatibility condition.
\end{enumerate}
In the following, we state a result from \citet{oliveira}, which is used for step (iii) in the proof.

\begin{lemma}[Transfer Principle]\label{lemma:transfer}
Suppose $\hat\Sigma_t$ and $\Sigma$ are matrix with non-negative diagonal entries, and assume $\eta\in(0,1)$, $m\in\{1,\ldots, d\}$ are such that
\begin{equation} \label{eq:restricted_eigen}
    \forall v \in \mathbb{R}^d \text { with }\|v\|_0 \leq m, v^{T} \widehat{\Sigma}_t v \geq(1-\eta) v^{T} \Sigma v.
\end{equation}
Assume $D$ is a diagonal matrix whose elements $D_{j,j}$ are non-negative and satisfy $D_{j,j}\geq [\hat\Sigma_{t}]_{j,j}-(1-\eta)\Sigma_{j,j}$. Then 
\begin{equation}\label{eq:transfer}
    \forall x \in \mathbb{R}^d, x^{T} \widehat{\Sigma}_t x \geq(1-\eta) x^{T} \Sigma x-\frac{\|D^{1 / 2} x\|_1^2}{m-1}.
\end{equation}
\end{lemma}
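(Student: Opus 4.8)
The plan is to strip the statement down to a pure linear‑algebra fact about a single symmetric matrix and then prove it by feeding the sparse quadratic‑form hypothesis sign‑matched restrictions of $x$, combined over a well‑chosen family of $m$‑element index sets. First I would set $M := \widehat{\Sigma}_t - (1-\eta)\Sigma$. Hypothesis \eqref{eq:restricted_eigen} says precisely that $v^{T} M v \ge 0$ for every $v$ with $\|v\|_0 \le m$, while the assumption on $D$ gives $M_{jj} = [\widehat{\Sigma}_t]_{jj} - (1-\eta)\Sigma_{jj} \le D_{jj}$ for each $j$. Subtracting $(1-\eta)x^{T}\Sigma x$ from both sides of \eqref{eq:transfer} shows that the conclusion is equivalent to $x^{T} M x \ge -\|D^{1/2}x\|_1^2/(m-1)$ for all $x\in\mathbb{R}^d$, so it suffices to prove this. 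Writing $w_j := \sqrt{D_{jj}}\,|x_j|$, so that $\|D^{1/2}x\|_1=\sum_j w_j$, the target reads $x^{T} M x \ge -\big(\sum_j w_j\big)^2/(m-1)$.

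The engine is a per‑subset inequality. For any index set $J$ with $|J|=m$, let $x_J$ be $x$ restricted to $J$ (zero off $J$); then $x_J$ is $m$‑sparse, so $0 \le x_J^{T} M x_J = \sum_{j\in J} x_j^2 M_{jj} + \sum_{i\ne j\in J} x_i x_j M_{ij}$, which rearranges to
\[
\sum_{i\ne j\in J} x_i x_j M_{ij} \;\ge\; -\sum_{j\in J} x_j^2 M_{jj} \;\ge\; -\sum_{j\in J} w_j^2 .
\]
I would then combine these inequalities over a distribution on $m$‑subsets. Averaging uniformly over all $\binom{d}{m}$ subsets makes the off‑diagonal coefficients collapse to one common factor and yields only the $\ell_2$‑type bound $x^{T} M x \ge -\tfrac{d-m}{m-1}\|D^{1/2}x\|_2^2$, which is too weak; the sharp $\ell_1$ form must exploit the magnitudes of $x$ adaptively.

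The prototype of the sharp argument is $m=2$: the $2\times 2$ principal minors give $|M_{ij}|\le\sqrt{M_{ii}M_{jj}}$, so with $u_j := |x_j|\sqrt{M_{jj}} \le w_j$ and $\sum_{i\ne j}u_i u_j=\big(\sum_j u_j\big)^2-\sum_j u_j^2$ one obtains
\[
x^{T} M x \;\ge\; 2\sum_j u_j^2 - \Big(\sum_j u_j\Big)^2 \;\ge\; -\Big(\sum_j u_j\Big)^2 \;\ge\; -\Big(\sum_j w_j\Big)^2 ,
\]
which is exactly the claim for $m-1=1$. For general $m$ I would reproduce this gain by an $x$‑adaptive weighting of the $m$‑subsets: coordinates are included in the random support with probability increasing in $w_j$, forcing the heavy coordinates into (almost) every sampled set so that they never contribute a diagonal defect, while the light coordinates are sampled so that the accumulated defect telescopes into $\big(\sum_j w_j\big)^2/(m-1)$ rather than $\sum_j w_j^2$.

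I expect the adaptive sampling design to be the main obstacle. The difficulty is to satisfy two competing requirements simultaneously: the pairwise co‑inclusion probabilities must be nearly equal across pairs, so that $\sum_{i\ne j}x_i x_j M_{ij}$ is reconstructed up to a single global factor, yet the marginal inclusion probabilities must be tilted by $w_j$, so that the diagonal error aggregates to the $\ell_1$ quantity $\big(\sum_j w_j\big)^2$ instead of the $\ell_2$ quantity $\sum_j w_j^2$. Pinning down the exact constant $1/(m-1)$ (rather than the crude $d/(m-1)$ that Cauchy--Schwarz would give) is the delicate point, and it is precisely what forces the heavy coordinates to be handled deterministically and separately from the randomly sampled light ones; that light/heavy split together with the per‑subset inequality above is what finally delivers \eqref{eq:transfer}.
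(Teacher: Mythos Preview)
The paper does not actually prove this lemma; it simply cites Lemma~5.1 of \citet{oliveira}. So there is no in‑paper argument to compare against, only the referenced one.

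Your reduction to the single symmetric matrix $M=\widehat\Sigma_t-(1-\eta)\Sigma$ and the reformulation $x^TMx\ge -\big(\sum_j w_j\big)^2/(m-1)$ with $w_j=\sqrt{D_{jj}}\,|x_j|$ are correct and are exactly the right normalization. The $m=2$ calculation is also fine.

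The gap is in the step you yourself flag as ``the main obstacle.'' Your mechanism is to average the per‑subset inequality $x_J^TMx_J\ge 0$ over a random $m$‑subset $J$ with $x$‑adaptive inclusion probabilities. But that scheme has to satisfy two incompatible constraints: to rebuild $\sum_{i\neq j}x_ix_jM_{ij}$ with a single scalar prefactor you need the pairwise co‑inclusion probability $P(i,j\in J)$ to be the same for every pair, while to turn the diagonal defect into the $\ell_1$ quantity you need the marginals $P(j\in J)$ to be tilted by $w_j$. For a random set of fixed size $m$ these two requirements are in direct tension (equal pairwise co‑inclusion forces equal marginals), and the heavy/light split you sketch does not by itself resolve it; you would still need to control all the cross terms between the deterministic heavy block and the random light block, and you do not indicate how.

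The clean way out, which is what Oliveira does, is to abandon restriction $x_J$ altogether and instead manufacture a random $m$‑sparse vector whose \emph{mean is exactly $x$}. Normalize so that $\sum_j w_j=1$, draw an index $J$ with $P(J=j)=w_j$, and set $V=\mathrm{sign}(x_J)\,e_J/\sqrt{D_{JJ}}$; then $\mathbb E[V]=x$. Let $V^{(1)},\dots,V^{(m)}$ be i.i.d.\ copies and $\bar V=\tfrac1m\sum_k V^{(k)}$, which is $m$‑sparse, so $\mathbb E[\bar V^TM\bar V]\ge 0$. Expanding by independence,
\[
0\le \mathbb E[\bar V^TM\bar V]=\frac{m-1}{m}\,x^TMx+\frac{1}{m}\,\mathbb E[V^TMV],\qquad \mathbb E[V^TMV]=\sum_j w_j\,\frac{M_{jj}}{D_{jj}}\le 1,
\]
using $M_{jj}\le D_{jj}$. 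This gives $x^TMx\ge -1/(m-1)$, i.e.\ \eqref{eq:transfer} after undoing the normalization. The point is that the ``adaptive weighting'' lives in the law of a $1$‑sparse unbiased estimator of $x$, not in the choice of an $m$‑element support for restricting $x$; independence across the $m$ copies is what makes the off‑diagonal part collapse to exactly $x^TMx$ without any co‑inclusion bookkeeping.
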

The proof of Lemma \ref{lemma:transfer} can be found in Lemma 5.1 of \citet{oliveira}.

According to Lemma \ref{lemma:transfer}, it suffices to prove the eigenvalue condition \eqref{eq:restricted_eigen} for $m$-sparse vectors with sufficient large $m=Cs_0$, where constant $C$ will be specified later. To prove this, we first introduce $\epsilon$-net on unit sphere for approximation.

\begin{definition}\label{Def:4}
 (Nets, covering numbers). Let $(X,d)$ be a metric space and let $\epsilon>0$. A subset $\mathcal{N}_{\epsilon}$ of $X$ is called an $\epsilon$-net of $X$ if every point $x\in X$ can be approximated to within $\epsilon$ by some point $y\in \mathcal{N}_{\epsilon}$, i.e. so that $d(x,y)\le \epsilon$. The minimal cardinality of an $\epsilon$-net of $X$, if finite, is denoted $\mathcal{N}(X,\epsilon)$ and is called the covering number of $X$ (at scale of $\epsilon$).
\end{definition}

The following provides a bound for the cardinality of the $\epsilon$-net that can approximate the points on the unit Euclidean sphere within range of $\epsilon$.

\begin{lemma}\label{lemma:5}
(Covering numbers of the sphere). The unit Euclidean sphere $S^{d-1}$ equipped with the Euclidean metric satisfies for every $\epsilon>0$ that 
\begin{equation*}
    \mathcal{N}(S^{d-1}, \epsilon)\le \left(1+\frac{2}{\epsilon}\right)^d.
\end{equation*}
\end{lemma}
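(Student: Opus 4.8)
The plan is to use the standard volumetric packing argument. The key idea is that a \emph{maximal} $\epsilon$-separated subset of the sphere is automatically an $\epsilon$-net, so its cardinality simultaneously upper bounds the covering number, and that cardinality can in turn be controlled by comparing the volumes of Euclidean balls.

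First I would construct a maximal set $\mathcal{N}_\epsilon=\{y_1,\ldots,y_N\}\subseteq S^{d-1}$ satisfying the separation property $\|y_i-y_j\|_2>\epsilon$ for all $i\neq j$. Such a maximal set exists since the sphere is compact; it can be built greedily. The first observation is that maximality forces $\mathcal{N}_\epsilon$ to be an $\epsilon$-net: if some $x\in S^{d-1}$ had $\|x-y_i\|_2>\epsilon$ for every $i$, then $x$ could be appended to $\mathcal{N}_\epsilon$ without destroying the separation property, contradicting maximality. Hence every point of the sphere lies within $\epsilon$ of some $y_i$, so $\mathcal{N}(S^{d-1},\epsilon)\le N$ and it remains only to bound $N$.

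Second I would set up the volume comparison. Because the centers are pairwise more than $\epsilon$ apart, the open Euclidean balls $B(y_i,\epsilon/2)$ are pairwise disjoint. Moreover each $y_i$ has norm $1$, so by the triangle inequality $B(y_i,\epsilon/2)\subseteq B(0,1+\epsilon/2)$. Writing $V_d(r)$ for the volume of a radius-$r$ ball in $\mathbb{R}^d$ and using the dilation scaling $V_d(r)=r^d V_d(1)$, disjointness together with containment gives
\[
N\cdot(\epsilon/2)^d\,V_d(1)\;\le\;(1+\epsilon/2)^d\,V_d(1).
\]
Cancelling $V_d(1)$ and rearranging yields
\[
N\;\le\;\left(\frac{1+\epsilon/2}{\epsilon/2}\right)^d\;=\;\left(1+\frac{2}{\epsilon}\right)^d,
\]
which is exactly the claimed bound.

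There is no genuine obstacle here; the argument is elementary and self-contained. The only two points requiring a small amount of care are (i) the maximality-implies-covering step, which must be phrased precisely so that the packing set doubles as a covering set, and (ii) the geometric containment $B(y_i,\epsilon/2)\subseteq B(0,1+\epsilon/2)$, which uses only the triangle inequality together with $\|y_i\|_2=1$. Everything else reduces to the scaling of Lebesgue measure under dilation.
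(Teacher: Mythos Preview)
Your argument is correct and is exactly the standard volumetric packing argument. The paper itself does not supply a proof of this lemma; it merely states it as a known result (it is the same bound that appears, with the same proof you give, in Vershynin's notes, which the paper cites for the neighboring Lemmas~\ref{lemma:spectral_net}, \ref{lemma:8}, and \ref{lemma:9}). So there is nothing to compare against beyond noting that your proof is the canonical one the authors are implicitly invoking.
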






Denote $E_J=\mathrm{span}\{e_j: j\in J\}$, then we define $\mathcal{N}_0$ as the $\epsilon$-net for $S^{d-1}\cap (\cup_{|J|=Cs_0} E_J)$. For each subset $J\subseteq [d]$ with $|J|=Cs_0$, the set $S^{d-1}\cap E_J$ can be viewed as a unit sphere in $\mathbb{R}^{Cs_0}$. Then according to Lemma \ref{lemma:5}, we have a bound for the covering number $N$ of $\mathcal{N}_0$ when $\epsilon\leq 1$.
\begin{align}\label{eq:net_card}
    \log N &=\log\left(\binom{d}{Cs_0}\left(1+\frac{2}{\epsilon}\right)^{Cs_0}\right) \nonumber \\
    &\leq \log\left(\left(\frac{ed}{Cs_0}\right)^{Cs_0}\left(\frac{3}{\epsilon}\right)^{Cs_0}\right) \nonumber \\
    &\lesssim Cs_0[\log d + \log(\frac{1}{\epsilon})].
\end{align}

According to Assumption \ref{Assump:anti}, when $h>0$ is small, i.e. $h=\mathcal{O}(\frac{1}{K})$, we have for $\forall v\in\mathbb{R}^d$ that:
\begin{equation*}
    P(\min_a\langle X_{a,t},v\rangle^2 \le h\|v\|_2^2)\le \zeta Kh.
\end{equation*}

From now on we fix a sparse vector $v$ with $\|v\|_2=1$ and $\|v\|_0\le Cs_0$. Define 
\begin{equation*}
    N_t=\sum_{s=1}^t\mathbbm{1}_{\{\min_a\langle X_{a,s},v\rangle^2 \le h\|v\|_2^2\}}
\end{equation*}
Then $N_t$ is the sum of i.i.d. indicator random variables which have expectation greater than $(1-\zeta Kh)$. Now we define $S_t=\sum_{s=1}^t Z_s$ where
\begin{equation*}
    Z_s=\mathbbm{1}_{\{\min_a\langle X_{a,s},v\rangle^2 \le h\|v\|_2^2\}}-\mathbb{E}[\mathbbm{1}_{\{\min_a\langle X_{a,s},v\rangle^2 \le h\|v\|_2^2\}}],
\end{equation*}
are centered random variables. Then by Hoeffding's Lemma, we have for $\forall t\in \mathbb{N}_+$ and $\delta\in \mathbb{R}$,
\begin{equation*}
    P(\frac{1}{t}S_t\le \delta)\le \exp(-2\delta^2t).
\end{equation*}
Let $\delta=-\frac{1}{2}(1-\zeta Kh)$, we have
\begin{equation*}
    P\left(\frac{1}{t}N_t\le \frac{1}{2}(1-\zeta Kh)\right)\le \exp(-\frac{1}{2}(1-\zeta Kh)^2t)=\exp(-c_0t)
\end{equation*}
where $c_0=\frac{1}{2}(1-\zeta Kh)^2$. Then taking the union of the event over all vectors in $\mathcal{N}_0$, we can obtain
\begin{align}
    & P\left(\forall v\in \mathcal{N}_0: \frac{1}{t}\sum_{s=1}^t \langle X_{a_s,s},v\rangle^2 \le \frac{1}{2}h(1-\zeta  Kh)\right) \nonumber\\
    \le & N\exp(-c_0t) \leq \exp(-c_0t+Cs_0\log d+Cs_0\log(\frac{1}{\epsilon})).\label{eq:12}
\end{align}

In the above inequalities, we have proved the minimum eigenvalue condition for all vectors in $\epsilon$-net $\mathcal{N}_0$. In the following, we will show that the restricted eigenvalue condition for $Cs_0$-sparse vectors can also be implied by the eigenvalue condition on $\mathcal{N}_0$. Before that, we state a result on the spectral norm of symmetric matrices.

\begin{lemma}[Computing the spectral norm on a net]\label{lemma:spectral_net}
Let $A$ be a symmetric $d\times d$ matrix, and let $\mathcal{N}_\epsilon$ be an $\epsilon$-net of $S^{d-1}$ for some $\epsilon\in [0,1)$. The spectral norm of $A$ can be computed via the associated quadratic form $\|A\|=\sup _{x \in S^{d-1}}|\langle A x, x\rangle|$. Then
\begin{equation*}
    \|A\|=\sup _{x \in S^{d-1}}|\langle A x, x\rangle| \leq(1-2 \epsilon)^{-1} \sup _{x \in \mathcal{N}_{\epsilon}}|\langle A x, x\rangle|.
\end{equation*}
\end{lemma}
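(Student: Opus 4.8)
The plan is to run the standard net-comparison argument, exploiting the variational characterization of the spectral norm that holds for symmetric matrices. I take as given the stated identity $\|A\|=\sup_{x\in S^{d-1}}|\langle Ax,x\rangle|$, which is valid precisely because $A$ is symmetric, so it suffices to compare the supremum of the quadratic form over the whole sphere with its supremum $M:=\sup_{x\in\mathcal{N}_\epsilon}|\langle Ax,x\rangle|$ over the net. The target inequality is then $\|A\|\le(1-2\epsilon)^{-1}M$.

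First I would fix an arbitrary $x\in S^{d-1}$ and, using the defining property of the $\epsilon$-net, pick $y\in\mathcal{N}_\epsilon$ with $\|x-y\|_2\le\epsilon$. To compare the quadratic form at $x$ with its value at $y$ I would use the telescoping identity
$$\langle Ax,x\rangle-\langle Ay,y\rangle=\langle Ax,x-y\rangle+\langle A(x-y),y\rangle,$$
which follows from a direct expansion and does not itself require symmetry. Bounding each term by Cauchy--Schwarz together with the operator-norm inequality $\|Au\|_2\le\|A\|\,\|u\|_2$, and using $\|x\|_2=\|y\|_2=1$ and $\|x-y\|_2\le\epsilon$, yields $|\langle Ax,x\rangle-\langle Ay,y\rangle|\le 2\epsilon\|A\|$. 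Consequently $|\langle Ax,x\rangle|\le M+2\epsilon\|A\|$ for every $x\in S^{d-1}$.

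Next I would take the supremum over $x\in S^{d-1}$ on the left-hand side. Here the symmetry of $A$ re-enters, since by the characterization above this supremum equals $\|A\|$, producing the self-referential bound $\|A\|\le M+2\epsilon\|A\|$. Rearranging gives $(1-2\epsilon)\|A\|\le M$, and dividing by the positive quantity $1-2\epsilon$ delivers the claimed inequality.

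The argument is otherwise routine; the one feature deserving attention is exactly this self-reference: because the operator norm $\|A\|$ appears on both sides of the per-point estimate (through the bound $\|Au\|_2\le\|A\|\,\|u\|_2$ applied to the difference terms), the conclusion cannot be read off directly and must be extracted by the final rearrangement rather than term-by-term. This is also the step that requires $1-2\epsilon>0$, so that the factor $(1-2\epsilon)^{-1}$ is positive and the resulting bound is meaningful.
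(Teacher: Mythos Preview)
Your argument is correct and is precisely the standard proof from Vershynin (Lemma~5.4), which the paper cites rather than reproving. Your observation that the rearrangement step needs $1-2\epsilon>0$, i.e.\ $\epsilon<1/2$, is also apt; the stated range $\epsilon\in[0,1)$ in the lemma is slightly loose, but this does not affect the paper's application, where $\epsilon\le 1/4$.
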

The proof of Lemma \ref{lemma:spectral_net} can be found in Lemma 5.4 of \citet{vershynin}.

According to the definition of $\mathcal{N}_0$, for any $Cs_0$-sparse vector $v$ and symmetric $d\times d$ matrix $A$, there is a vector $u\in\mathcal{N}_0$, such that $\|u-v\|_2\leq \epsilon$ and $\supp(v)\subseteq\supp(u)$. Then
\begin{align}
    v^T A v&=u^T A u+(u-v)^TA(u-v)+2u^TA(v-u) \nonumber\\
        &\geq u^T A u-2\|u-v\|_2\|A\|_{op} \|u\|_2,\label{eq:A9}
\end{align}
where $\|A\|_{op}$ is the spectral norm of $A$ constrained on sparse vectors with $\ell_0$-norm not greater than $Cs_0$, i.e. $\|A\|_{op}=\underset{v\in S^{d-1},\|v\|_0\leq Cs_0}{\max} \|Av\|_2.$ 



We have proved a uniform lower bound of $u^T\hat{\Sigma}_tu$ for $u\in \mathcal{N}_0$ and $t\ge T_1$. Then to prove a lower bound of $v^T\hat{\Sigma}_tv$ for $v\in S^{d-1}$ and $\|v\|_0\leq Cs_0$, based on \eqref{eq:A9} and the fact that there exit $u\in \mathcal{N}_0$ such that $\|u-v\|_2\leq \epsilon$, it suffices to show that $\|\hat{\Sigma}_t\|_{op}$ is upper bounded. And by Lemma \ref{lemma:spectral_net}, we only need to bound the spectral norm of $\hat\Sigma_t$ on $\mathcal{N}_0$.

Here we have for a fixed vector $v\in \mathcal{N}_0$ and $\forall t\in\mathbb{N}_+$,
\begin{align*}
    \frac{1}{t}\sum_{s=1}^t\langle X_{a_s,s},v\rangle^2&\le \frac{1}{t}\sum_{s=1}^t\max_{a\in [K]}\langle X_{a,s},v\rangle^2.
\end{align*}

Define $Z_s^a=\langle X_{a,s},v\rangle^2-v^T\Sigma_a v$, $S_t^a=\sum_{s=1}^tZ_s^a$ and $S_t=\sum_{a=1}^K S_t^a$ for $a\in [K]$. Then $Z_s^a$ is centered sub-exponential random variables. Before proving the upper bound of $\|\hat{\Sigma}_t\|_{op}$, we will state some properties of sub-exponential random variables. Firstly, we introduce some parameters of sub-guassian and sub-exponential random variables.

\begin{definition}
 The sub-guassian norm of a sub-guassian random variable $X$, denoted $\|X\|_{\psi_2}$, is defined as
 \begin{equation*}
     \|X\|_{\psi_{2}}=\sup _{p \geq 1} p^{-1 / 2}\left(\mathbb{E}|X|^{p}\right)^{1 / p}.
 \end{equation*}
 Similarly, the sub-exponential norm of a sub-exponential random variable $X$, denoted $\|X\|_{\psi_1}$, is defined as
 \begin{equation*}
     \|X\|_{\psi_{1}}=\sup _{p \geq 1} p^{-1}\left(\mathbb{E}|X|^{p}\right)^{1 / p}.
 \end{equation*}
\end{definition}
The details of the definition and related properties can be found in Section 5.2.3 and Section 5.2.4 in \citet{vershynin}. The following is a relationship between sub-guassian random variables and sub-exponential random variables.

\begin{lemma}\label{lemma:8}
(Sub-exponential is sub-guassian squared). A random variable $X$ is sub-gaussian if and only if $X^2$ is sub-exponential. Moreover, 
\begin{equation*}
    \|X\|_{\psi_2}^2\le \|X^2\|_{\psi_1}\le 2\|X\|_{\psi_2}^2.
\end{equation*}
\end{lemma}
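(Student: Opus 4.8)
The plan is to reduce the claimed two-sided norm bound to an elementary comparison of suprema of moment functionals, after which the stated equivalence follows immediately from finiteness of the norms. First I would rewrite both quantities in terms of the absolute moments of $X$. From the definitions, $\|X\|_{\psi_2}^2 = \sup_{p\ge 1} p^{-1}(\mathbb{E}|X|^p)^{2/p}$, and the sub-exponential norm of the square is $\|X^2\|_{\psi_1} = \sup_{p\ge 1}p^{-1}(\mathbb{E}|X|^{2p})^{1/p}$. Introducing the auxiliary functional $f(p) := p^{-1}(\mathbb{E}|X|^p)^{2/p}$, the first quantity is exactly $\sup_{p\ge 1}f(p)$, while the substitution $q=2p$ turns the second into $\|X^2\|_{\psi_1} = 2\sup_{q\ge 2}f(q)$, where the lower endpoint of the index range has shifted from $1$ up to $2$.

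With this reformulation the two inequalities become comparisons between $\sup_{p\ge 1}f(p)$ and $\sup_{q\ge2}f(q)$. The upper bound $\|X^2\|_{\psi_1}\le 2\|X\|_{\psi_2}^2$ is immediate, since the supremum of $f$ over the smaller index set $[2,\infty)$ cannot exceed its supremum over $[1,\infty)$. The lower bound $\|X\|_{\psi_2}^2\le\|X^2\|_{\psi_1}$ is the only step requiring a genuine argument, because it demands controlling the values $f(p)$ for $p\in[1,2)$, which the change of variables $q=2p$ does not reach.

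For this step I would invoke the Lyapunov (monotonicity) inequality for $L^p$-norms under a probability measure, namely that $(\mathbb{E}|X|^p)^{1/p}$ is nondecreasing in $p$. Hence for $p\in[1,2]$ we have $(\mathbb{E}|X|^p)^{2/p}\le\mathbb{E}|X|^2$, so that $f(p)=p^{-1}(\mathbb{E}|X|^p)^{2/p}\le\mathbb{E}|X|^2 = 2f(2)\le 2\sup_{q\ge2}f(q)$, using $f(2)=\tfrac12\mathbb{E}|X|^2$ together with $p\ge 1$. For $p\ge 2$ one trivially has $f(p)\le\sup_{q\ge2}f(q)\le 2\sup_{q\ge 2}f(q)$. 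Taking the supremum over all $p\ge1$ therefore yields $\|X\|_{\psi_2}^2=\sup_{p\ge1}f(p)\le 2\sup_{q\ge2}f(q)=\|X^2\|_{\psi_1}$, which is the desired bound.

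Finally, the stated equivalence is a direct consequence: if $\|X\|_{\psi_2}<\infty$ then $\|X^2\|_{\psi_1}\le 2\|X\|_{\psi_2}^2<\infty$ shows $X^2$ is sub-exponential, and conversely $\|X\|_{\psi_2}^2\le\|X^2\|_{\psi_1}<\infty$ shows $X$ is sub-Gaussian. The only real obstacle is the $p\in[1,2)$ range in the lower bound, and the monotonicity of moments dispatches it cleanly; everything else is bookkeeping with the substitution $q=2p$.
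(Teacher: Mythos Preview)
Your proof is correct. The paper does not actually prove this lemma itself but defers to Lemma 5.14 of Vershynin, and your argument via the substitution $q=2p$ together with the Lyapunov monotonicity of $L^p$-norms to handle the range $p\in[1,2)$ is precisely the standard proof given there.
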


Proof of Lemma \ref{lemma:8} can be found in Lemma 5.14 of \citet{vershynin}.

\begin{lemma}\label{lemma:9}
(MGF of sub-exponential random variables). Let $X$ be a centered sub-exponential random variable. Then for $t$ such that $|t|\le c_1/\|X\|_{\psi_1}$, one has
\begin{equation*}
    \mathbb{E}\exp(tX)\le \exp(c_2t^2\|X\|_{\psi_1}^2),
\end{equation*}
where $c_1,c_2>0$ are absolute constants.
\end{lemma}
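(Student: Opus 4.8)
The plan is to prove the bound by a direct Taylor-series expansion of the moment generating function, using the sub-exponential norm to control the moments of $X$. Write $K=\|X\|_{\psi_1}$. The definition of the sub-exponential norm gives $p^{-1}(\mathbb{E}|X|^p)^{1/p}\le K$ for every $p\ge 1$, i.e.\ $\mathbb{E}|X|^p\le (Kp)^p$; this moment bound is the only structural fact about $X$ that I would use, apart from centering.

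First I would expand $\mathbb{E}\exp(tX)=\sum_{p=0}^{\infty}\frac{t^p\,\mathbb{E}[X^p]}{p!}$. The $p=0$ term equals $1$ and, crucially, the $p=1$ term vanishes because $X$ is centered, so $\mathbb{E}[X]=0$. Hence $\mathbb{E}\exp(tX)=1+\sum_{p=2}^{\infty}\frac{t^p\,\mathbb{E}[X^p]}{p!}$, and it suffices to control the tail sum starting at $p=2$; this is exactly what produces the quadratic $t^2$ scaling in the exponent rather than a spurious linear term.

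Next I would bound each term. Using $|\mathbb{E}[X^p]|\le\mathbb{E}|X|^p\le(Kp)^p$ together with the Stirling-type inequality $p!\ge(p/e)^p$, each summand obeys $\frac{|t|^p\,\mathbb{E}|X|^p}{p!}\le (eK|t|)^p$. Thus for $|t|$ small enough that $eK|t|\le 1/2$ — which holds whenever $|t|\le c_1/K$ with $c_1=1/(2e)$ — the tail is dominated by a convergent geometric series, giving $\sum_{p=2}^{\infty}(eK|t|)^p=\frac{(eK|t|)^2}{1-eK|t|}\le 2e^2K^2t^2$. Finally, applying $1+x\le e^x$ yields $\mathbb{E}\exp(tX)\le 1+2e^2K^2t^2\le\exp(2e^2K^2t^2)$, i.e.\ the claim with $c_2=2e^2$.

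I do not expect a genuinely hard step here, since the result is standard (cf.\ Lemma 5.15 of \citet{vershynin}). The only points requiring a little care are the absolute convergence needed to justify interchanging expectation and the infinite sum — which itself follows from the moment bound together with the restriction $eK|t|\le 1/2$ — and the bookkeeping of constants so that the threshold $|t|\le c_1/\|X\|_{\psi_1}$ and the final constant $c_2$ are mutually consistent. One could tighten these constants by optimizing the point at which the geometric series is truncated, but any fixed choice such as $eK|t|\le 1/2$ suffices for the stated form.
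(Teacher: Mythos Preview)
Your proposal is correct and follows exactly the standard Taylor-expansion argument from Lemma~5.15 of \citet{vershynin}, which is precisely the reference the paper defers to rather than giving its own proof. The bookkeeping of constants ($c_1=1/(2e)$, $c_2=2e^2$) and the justification for interchanging sum and expectation are handled properly.
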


Proof of Lemma \ref{lemma:9} can be found in Lemma 5.15 of \citet{vershynin}.

By the result in Lemma \ref{lemma:8}, we have for sub-exponential random variable $Z_s^a$,
\begin{equation*}
    \kappa_a=\|Z_s^a\|_{\psi_1}\le 2\|\langle X_{a,s},v\rangle^2\|_{\psi_1}\le 4\|\langle X_{a,s},v\rangle\|^2_{\psi_2}\le 4\lambda_{\max}^2(\Sigma_a).
\end{equation*}

Then for any $\mu>0$ and $\delta>0$, we have 
\begin{align}
    P\left(\frac{1}{t}\sum_{s=1}^t\max_{a\in [K]}\langle X_{a,s},v\rangle^2-\Lambda_0^2\ge \delta\right)&=P\left(\sum_{s=1}^t[\max_a\langle X_{a,s},v\rangle^2-\Lambda_0^2]\ge \delta t\right) \nonumber\\
    &\le e^{-\mu \delta t}\mathbb{E}[\exp(\mu\sum_{s=1}^t[\max_a\langle X_{a,s},v\rangle^2-\Lambda_0^2] )] \nonumber\\
    &=e^{-\mu \delta t}\prod_{s=1}^t\mathbb{E}[\exp(\mu[\max_a\langle X_{a,s},v\rangle^2-\Lambda_0^2] )]\label{eq:13}
\end{align}

According to Lemma \ref{lemma:9}, for $\mu$ such that $|\mu|\le \frac{c_1}{\max_a \kappa_a}$, we have
\begin{align*}
    \mathbb{E}[\exp(\mu[\max_a\langle X_{a,s},v\rangle^2-\Lambda^2] )]&\le \sum_{a=1}^K \mathbb{E}[\exp\{\mu(\langle X_{a,s},v\rangle^2-\mathbb{E}[\langle X_{a,s},v\rangle^2])\}]\\
    &\le \sum_{a=1}^K e^{c_2\mu^2\kappa^2_a}\le e^{c_2\mu^2\max_a\kappa_a^2+\log(K)}.
\end{align*}

Applying the above inequality to inequality \eqref{eq:13}, we have for $\mu$ such that $|\mu|\le \frac{c_1}{\max_a \kappa_a}$,
\begin{equation}\label{eq:14}
    P\left(\frac{1}{t}\sum_{s=1}^t\max_{a\in[K]}\langle X_{a,s},v\rangle^2-\Lambda_0^2\ge \delta\right)\le \exp(-\mu \delta t+t c_2\mu^2\max_a\kappa_a^2+t\log K).
\end{equation}
The right hand side of the above inequality achieves the minimum value at $\mu=\frac{\delta}{2c_2\max_a\kappa_a^2}$, where the minimum value is $\exp(-\frac{\delta^2t}{4c_2\max_a\kappa_a^2}+t\log K)$.

Now if $\frac{\delta}{2c_2\max_a\kappa_a^2}>\frac{c_1}{\max_a \kappa_a}$, the right hand side of the equation \eqref{eq:14} obtain the minimum value at $\mu=\frac{c_1}{\max_a \kappa_a}$ and 
\begin{align*}
    &P\left(\frac{1}{t}\sum_{s=1}^t\max_{a\in[K]}\langle X_{a,s},v\rangle^2-\Lambda_0^2\ge \delta\right) \\
    \leq & \exp\left(-\frac{c_1\delta t}{\max_a\kappa_a}+tc_2\frac{c_1^2}{\max_a\kappa_a^2}\max_a\kappa_a^2+t\log K\right).\label{eq:15}
\end{align*}

Moreover, if $\frac{\delta}{2c_2\max_a\kappa_a^2}>\frac{c_1}{\max_a\kappa_a}$, we apply this to the above inequality and obtain
\begin{equation*}
    P\left(\frac{1}{t}\sum_{s=1}^t\max_a\langle X_{a,s},v\rangle^2-\Lambda_0^2\ge \delta\right)\le \exp\left(-\frac{c_1\delta t}{2\max_a\kappa_a}+t\log K\right).
\end{equation*}

Now we have for $\forall \delta>0$ and $t\in \mathbb{N}_+$
\begin{align*}
    & P\left(\frac{1}{t}\sum_{s=1}^t\max_a\langle X_{a,s},v\rangle^2-\Lambda_0^2\ge \delta\right) \\
    \leq & \exp\left(-\min\{\frac{\delta^2}{4c_2\max_a\kappa_a^2},\frac{c_1\delta}{2\max_a\kappa_a}\}t+t\log K\right).
\end{align*}

If we set $\delta=c_3\log (K)\Lambda_0^2$ where $c_3$ is a sufficient large positive constant, then we have for a fixed vector $v\in \mathcal{N}_0$
\begin{equation*}
    P\left(\frac{1}{t}\sum_{s=1}^t\max_{a\in[K]}\langle X_{a,s},v\rangle^2-\Lambda_0^2\ge \delta\right) \le \exp\left(-\tilde{C}\log(K)t \right),
\end{equation*}
where $\tilde{C}=\min\{\frac{c_3^4\log K}{64c_2}, \frac{c_1c_3}{8}\}-1\leq \min\{\frac{c_3^2\Lambda_0^4\log K}{4c_2\max_a\kappa_a^2},\frac{c_1c_3\Lambda_0^2}{2\max_a\kappa_a}\} - 1$ is a positive constant independent on $K$ and $t$. Taking the union of the probability over all vectors in $\mathcal{N}_0$, we have
\begin{align}
     & P\left(\forall v\in \mathcal{N}_0: \langle v, \hat{\Sigma}_tv\rangle\ge \tilde{c}\log(K)\Lambda_0^2\right) \nonumber \\
     \leq & N\exp\left(-\tilde{C}\log(K)t \right)\nonumber \\
     \lesssim & \exp\left(-\tilde{C}\log(K)t+Cs_0\log d+Cs_0\log(\frac{1}{\epsilon})\right).\label{eq:A13}
\end{align}
Now we provide the proof of Proposition \ref{Prop:3}.
\begin{proof}
We define events $\mathcal{G}_1$ and $\mathcal{G}_2$ as 
\begin{align*}
    \mathcal{G}_1&=\{\forall u\in \mathcal{N}_0: \frac{1}{t}\sum_{s=1}^t \langle X_{\hat{a}_s,s}, u\rangle \ge \frac{1}{2}h(1-\zeta Kh)\};\\
    \mathcal{G}_2&=\{\forall u\in \mathcal{N}_0: \langle u, \hat{\Sigma}_t u\rangle\le \tilde{c}\log(K)\Lambda_0^2\}.
\end{align*}
If event $\mathcal{G}_2$ holds, then according to Lemma \ref{lemma:spectral_net} and definition of $\|\cdot\|_{op}$ in \eqref{eq:A9}, we have 
\begin{align}
    \|\hat\Sigma_t\|_{op} \leq (1-2\epsilon)^{-1} \max_{v\in \mathcal{N}_0} \langle v, \hat\Sigma_tv\rangle \leq (1-2\epsilon)^{-1} \tilde{c}\log(K)\Lambda_0^2.
\end{align}
If both event $\mathcal{G}_1$ and $\mathcal{G}_2$ hold, based on \eqref{eq:A9}, we can prove the restricted eigenvalue condition for all $Cs_0$-sparse vectors in unit sphere $S^{d-1}$, i.e. for $\forall v\in S^{d-1}$ and $\|v\|_0\leq Cs_0$ that
\begin{align}
    \langle v,\hat\Sigma_t v\rangle & \geq \frac{1}{2}h(1-\zeta Kh) - 2\epsilon \|\hat\Sigma_t\|_{op} \nonumber\\
    &\geq \frac{1}{2}h(1-\zeta Kh) - \frac{2\epsilon}{1-2\epsilon} \tilde{c} \log (K) \Lambda_0^2.
\end{align}
Now if we set $h=\frac{1}{2\zeta K}=\mathcal{O}(\frac{1}{K})$ and $\epsilon=\min\{\frac{1}{4}, \frac{1}{64\tilde{c}\log (K) \Lambda_0^2}\}$, then for $\forall v\in S^{d-1}$ and $\|v\|_0\leq Cs_0$, we have
\begin{equation*}
    \langle v, \hat{\Sigma}_t v\rangle \geq \frac{1}{4}h(1-\zeta  Kh)=\frac{1}{16\zeta  K}.
\end{equation*}

Now we proved part (i) and (ii) in Proposition \ref{Prop:3}, i.e. the minimum eigenvalue condition for all $Cs_0$-sparse vectors in $S^{d-1}$. To finished the proof, we need to apply Lemma \ref{lemma:transfer} to prove part (iii).

To apply Lemma \ref{lemma:transfer}, we set $\Sigma = \frac{1}{8\zeta K}I_{d\times d}$ as a $d\times d$ diagonal matrix, $m=Cs_0$ and $\eta=\frac{1}{2}$. Moreover, we set $D_{j,j}=[\hat\Sigma_t]_{j,j}$ to satisfy the condition on diagonal matrix $D$. By greedy method for constructing $\mathcal{N}_0$, we can let $e_j\in\mathcal{N}_0$ for $\forall j\in[d]$, where $e_j$ is unit vector in $\mathbb{R}^d$ having exactly one entry equal to $1$ and $0$ otherwise. Then under event $\mathcal{G}_2$, we have 
\begin{align*}
    D_{j,j} = [\hat\Sigma_t]_{j,j} = \langle e_j, \hat\Sigma_t e_j\rangle \leq \tilde{c}\log(K)\Lambda_0^2.
\end{align*}

Now according to \eqref{eq:transfer}, if both events $\mathcal{G}_1$ and $\mathcal{G}_2$ hold, we have for $\forall x \in S^{d-1}$ and $x\in\mathbb{C}_3(S)$ that
\begin{align}
    x^T\hat\Sigma_t x & \geq \frac{1}{2} x^T\Sigma x - \frac{\|D^{1/2}x\|^2_1}{Cs_0 - 1} \nonumber\\
    & \geq \frac{1}{16\zeta K} - \frac{\tilde{c}\log(K)\Lambda_0^2 \|x\|_1^2}{Cs_0 - 1}\nonumber\\
    & \geq \frac{1}{16\zeta K} - \frac{\tilde{c}\log(K)\Lambda_0^2 \cdot 16s_0}{Cs_0 - 1},
\end{align}
where the last inequality above is because for $x\in S^{d-1}$ and $x\in\mathbb{C}_3(S)$, we have
$$\|x\|_1\leq (1+3)\|x_S\|_1\leq 4\sqrt{s_0}\|x_S\|_2\leq 4\sqrt{s_0}.$$

Then, if we set $C = 2\cdot 16^2\tilde{c}\log(K)\Lambda_0^2\zeta  K +\frac{1}{s_0}=\Theta(\zeta \Lambda_0^2 K\log K)$, we can have $x^T\hat\Sigma_t x \geq \frac{1}{32\zeta K}$ for unit vector $x\in\mathbb{C}_3(S)$, which implies compatibility condition.


Now, we prove the uniform bound on the compatibility condition in Proposition \ref{Prop:3} by showing that events $\mathcal{G}_1$ and $\mathcal{G}_2$ hold with high probability. 

At first, for any $T_1\in \mathbb{N}_+$ we have
\begin{align*}
    &P\left(\forall t\ge T_1,\forall v\in \mathcal{N}_0: \frac{1}{t}\sum_{s=1}^t \langle X_{a_s,s},v\rangle^2\le \frac{1}{2}h(1-\zeta  Kh)\right)\nonumber \nonumber\\
    \leq & N\sum_{t=T_1}^{\infty}e^{-c_0t}= \frac{1}{1-e^{-c_0}}\exp(-c_0T_1+\log N)\\
    \lesssim & \exp(-c_0T_1+Cs_0\log d + Cs_0\log(\frac{1}{\epsilon}))=\delta_2,
\end{align*}
where $c_0=\frac{1}{2}(1-\zeta Kh)^2=\frac{1}{8}$. If we set $\delta_2=\frac{1}{T}$, then with the choice of $\epsilon=\min\{\frac{1}{4}, \frac{1}{64\tilde{c}\log (K) \Lambda_0^2}\}$ and $C$ from above, we have
\begin{align}
    T_1 &=\mathcal{O}(\log T+Cs_0[\log d + \log(\frac{1}{\epsilon})]) \nonumber \\
    &=\mathcal{O}(\log T + s_0\zeta \Lambda_0^2K(\log K)\log d + \log(\log K) + \log\Lambda^2_0) \nonumber \\
    &=\mathcal{O}(\log T + s_0K (\log K)\log d).
\end{align}
The above inequality provides a uniform lower bound for the eigenvalue of sample covariance matrix $\hat{\Sigma}_t$ over $\mathcal{N}_0$ for $t\ge T_1$.

Moreover, we can obtain an union probability bound for all vectors in $\mathcal{N}_0$ and some $T_2\in\mathbb{N}_+$ in \eqref{eq:A13}.
\begin{align*}
    & P\left(\forall t\ge T_2, \forall v\in \mathcal{N}_0: \langle v, \hat{\Sigma}_tv\rangle\geq \tilde{c}\log(K)\Lambda_0^2\right)\\
    \leq & N\sum_{t=T_2}^{\infty} \exp(-\tilde{C}\log(K)t) \\
    \lesssim & \exp\left(-T_2\tilde{C}\log(K)+Cs_0\log d+ Cs_0\log(\frac{1}{\epsilon})\right)=\delta_3.
\end{align*}
If we set $\delta_3=\frac{1}{T}$, then we have
\begin{equation*}
    T_2=\frac{\log(T)+Cs_0\log d+Cs_0\log(\frac{1}{\epsilon})}{\tilde{C}\log(K)}=\mathcal{O}\left(\log T+ Cs_0\log d+Cs_0\log(\frac{1}{\epsilon})\right).
\end{equation*}

Now, we can show that compatibility condition holds on $\mathbb{C}_3(S)$ uniformly for all $t\ge \max(T_1,T_2)=\mathcal{O}(\log T+ s_0K\log K\log d)$ with high probability, i.e.
\begin{equation*}
    P\left(\forall t\ge \max(T_1,T_2), v\in \mathbb{C}_3(S): v^T\hat{\Sigma}_t v\ge \frac{1}{32\zeta K} \right)\ge 1-\delta_2-\delta_3= 1-\frac{2}{T}. 
\end{equation*}
\end{proof}

With the crude compatibility condition from Proposition \ref{Prop:3}, we can prove Proposition \ref{Prop:2}.
Firstly, we prove that the optimal arm will be pulled a positive fraction of time by the $\ell_1$-confidence ball based algorithm after some time point. 

\begin{lemma}\label{lemma:10}
Suppose we construct the confidence set $\mathcal{C}_t$ in Proposition \ref{Prop:2} with $\phi=\frac{1}{\sqrt{32\zeta K}}$, then when time horizon $t$ exceeds a certain threshold (i.e. $t\ge\mathcal{O}(\log T+s_0K\log K\log d+\frac{s_0^2\zeta ^2K^2\log d}{\Delta_*^2})$), decision-makers will pull the optimal arms a positive fraction of time.
\end{lemma}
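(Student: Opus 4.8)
The plan is to show that once the sample covariance matrix enjoys the crude restricted eigenvalue condition of Proposition~\ref{Prop:3}, the exploration bonus $\tau_{t-1}\|x\|_\infty$ appearing in the selection rule~\eqref{eq:3} shrinks enough that, whenever the optimal arm has the full gap $\Delta_\ast$, it is \emph{forced} to be selected; I would then count the rounds that carry this gap and argue that they form a positive fraction. First I would record the only probabilistic input needed from the reward side: evaluating Assumption~\ref{Assump:reward}\ref{Assump:margin} at $h=\Delta_\ast$ gives $P(\langle X_{a^\ast_t,t},\beta_\ast\rangle\le\max_{b\neq a^\ast_t}\langle X_{b,t},\beta_\ast\rangle+\Delta_\ast)\le\tfrac12$, hence $P(\Gamma_t)\ge\tfrac12$ for every $t$. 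Moreover the events $\{\Gamma_t\}_t$ are independent across time, since each $\Gamma_t$ is a deterministic function of the covariate set $\mathcal{D}_t$ and $\beta_\ast$, and the $\mathcal{D}_t$ are independent across $t$.

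The technical heart, which I expect to be the main obstacle, is the deterministic selection argument. On the event that $\hat\Sigma_{t-1}\in\mathcal{C}(S,\phi)$ with $\phi=1/\sqrt{32\zeta K}$, Proposition~\ref{Prop:1} yields $\|\hat\beta_{t-1}-\beta_\ast\|_1\le\tau_{t-1}$ with $\tau_{t-1}=O\big(s_0\sigma x_{\max}\zeta K\sqrt{(\log t+\log d)/t}\big)$. By H\"older's inequality $|\langle X_{a,t},\hat\beta_{t-1}-\beta_\ast\rangle|\le\|X_{a,t}\|_\infty\|\hat\beta_{t-1}-\beta_\ast\|_1\le x_{\max}\tau_{t-1}$ for each arm $a$, and $0\le\tau_{t-1}\|X_{a,t}\|_\infty\le x_{\max}\tau_{t-1}$. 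Writing $U_a:=\langle X_{a,t},\hat\beta_{t-1}\rangle+\tau_{t-1}\|X_{a,t}\|_\infty$ for the index maximized in~\eqref{eq:3}, these bounds give $U_{a^\ast_t}\ge\langle X_{a^\ast_t,t},\beta_\ast\rangle-x_{\max}\tau_{t-1}$ and $U_b\le\langle X_{b,t},\beta_\ast\rangle+2x_{\max}\tau_{t-1}$ for every $b$. Therefore, under $\Gamma_t$, $U_{a^\ast_t}-U_b\ge\Delta_\ast-3x_{\max}\tau_{t-1}$, which is strictly positive once $3x_{\max}\tau_{t-1}<\Delta_\ast$; solving this inequality for $t$ reproduces exactly the burn-in threshold $t\ge O\big(s_0^2\zeta^2 K^2\log d/\Delta_\ast^2\big)$ in the statement. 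Hence, beyond this threshold, the crude eigenvalue event together with $\Gamma_t$ forces $\hat a_t=a^\ast_t$.

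Finally I would assemble the pieces. Let $T''$ be the maximum of the threshold above and the threshold $T'=O(\log T+s_0K\log K\log d)$ from Proposition~\ref{Prop:3}, and let $\mathcal{E}$ denote the uniform good event of Proposition~\ref{Prop:3}, on which $\hat\Sigma_{s-1}\in\mathcal{C}(S,\phi)$ for all $s\ge T'$ and $P(\mathcal{E})\ge1-2/T$. On $\mathcal{E}$ the selection argument gives $\mathbbm{1}\{\hat a_s=a^\ast_s\}\ge\mathbbm{1}\{\Gamma_s\}$ simultaneously for all $s\ge T''$, so that $\sum_{s=T''}^t\mathbbm{1}\{\hat a_s=a^\ast_s\}\ge\sum_{s=T''}^t\mathbbm{1}\{\Gamma_s\}$. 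Since the summands on the right are i.i.d.\ Bernoulli with mean at least $\tfrac12$, a Hoeffding bound gives $\sum_{s=T''}^t\mathbbm{1}\{\Gamma_s\}\ge\tfrac14(t-T'')$ with probability at least $1-e^{-c(t-T'')}$. Intersecting the two events shows that, for $t$ exceeding the stated threshold by a constant factor, the optimal arm is pulled at least a constant fraction of the rounds with high probability. The crux of the argument is the clean separation of the two sources of randomness: the estimation accuracy must be controlled \emph{uniformly} over all $s\ge T''$ so that the forced-selection conclusion holds simultaneously, while the gap events $\Gamma_s$ remain i.i.d.; this is precisely what lets the final count reduce to a sum of independent indicators even though the selected arms $\hat a_s$ are history-dependent.
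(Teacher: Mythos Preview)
Your proposal is correct and follows essentially the same route as the paper: invoke the crude restricted eigenvalue guarantee of Proposition~\ref{Prop:3} to control $\|\hat\beta_{t-1}-\beta_\ast\|_1$, compare the UCB indices on $\Gamma_t$ to force $\hat a_t=a^\ast_t$ once $\tau_{t-1}=O(\Delta_\ast/x_{\max})$, and then Hoeffding-count the i.i.d.\ events $\{\Gamma_s\}$. The only cosmetic differences are that the paper phrases the comparison via $\max_{\beta\in\mathcal{C}_t}\langle X_{a,t},\beta\rangle$ (yielding a constant $2$ where you get $3$) and defers the Hoeffding step to the proof of Proposition~\ref{Prop:2}; note also that Proposition~\ref{Prop:1} carries an extra $2/t$ failure probability from the noise event $\mathcal F(\lambda_0)$ beyond the eigenvalue event, which both you and the paper absorb informally into the high-probability statement.
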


\begin{proof}
Before the proof, we first define the event in Proposition \ref{Prop:1} with $\phi^2=\frac{1}{32\zeta K}$,
\begin{equation*}
  \mathcal{G}_{3,t+1}=\left\{\|\hat{\beta}_t-\beta_\ast\|_1\le  \frac{6s_0 \sigma x_{\max}}{1/(32\zeta K)}\sqrt{\frac{2\log t+2\log d}{t}}\right\}.
\end{equation*}
According to Proposition \ref{Prop:3}, $\mathcal{G}_{3,t}$ will hold uniformly over $t\ge T'$ with high probability.  

From Assumption~\ref{Assump:stochastic}\ref{Assump:eigenvalue}, we have event $\Gamma_t$ such that $P(\Gamma_t)\ge \frac{1}{2}$. Then with sufficient large $\phi\geq \frac{1}{\sqrt{32\zeta K}}$ in Proposition \ref{Prop:2}, if both $\Gamma_t$ and $\mathcal{G}_{3,t}$ hold, we can only select the arm $b\in[K]$ instead of the optimal arm $a_t^*$ when
\begin{align*}
   &  \max_{\beta\in \mathcal{C}_t} \langle X_{b,t}, \beta \rangle \ge  \max_{\beta\in \mathcal{C}_t} \langle X_{a^*_t,t}, \beta \rangle\\
 \Longrightarrow \;  & \langle X_{b,t}, \beta_\ast \rangle +2\|X_{b,t}\|_{\infty}\frac{6s_0 \sigma x_{\max}}{1/(32\zeta K)}\sqrt{\frac{2\log (t-1)+2\log d}{t-1}}\ge \langle X_{a^*_t,t}, \beta_\ast \rangle.\\
\end{align*}
The second inequality is because for all $\beta\in\mathcal{C}_t$, by triangle inequality we have
\begin{align*}
    \|\beta-\beta_\ast\|_1 & \leq \|\beta -\hat\beta_t\|_1 +\|\hat\beta_t-\beta_\ast\|_1 \\
    &\leq (\frac{1}{\phi^2}+32\zeta K)(6s_0 \sigma x_{\max})\sqrt{\frac{2\log (t-1)+2\log d}{t-1}}\nonumber \\
    &\leq 384\zeta Ks_0\sigma x_{\max}\sqrt{\frac{2\log (t-1)+2\log d}{t-1}}
\end{align*}
Then we will always select the optimal arm $a^*_t$ if 
\begin{align*}
    \Delta_*  & \ge 384s_0 \sigma x^2_{\max}\zeta K\sqrt{\frac{2\log (t-1)+2\log d}{t-1}}\\
    & \ge 2\|X_{b,t}\|_{\infty}\frac{6s_0 \sigma x_{\max}}{1/(32\zeta K)}\sqrt{\frac{2\log (t-1)+2\log d}{t-1}}.
\end{align*}
By solving the above inequality, we have that for $t\ge\mathcal{O}(\frac{\sigma^2 x_{\max}^4 s_0^2\zeta ^2K^2\log d}{\Delta_*^2})$, we will always select the optimal arm $a^*_t$ when both $\Gamma_t$ and $\mathcal{G}_{3,t}$ hold.
\end{proof}

\begin{proof}(Proposition~\ref{Prop:2})
Now we consider time such that $t\ge T''=\mathcal{O}(\log T+s_0K \log K\log d+\frac{\sigma^2 x_{\max}^4 s_0^2\zeta ^2K^2\log d}{\Delta_*^2})$. From the above proof, we know that we will select the optimal arm with high probability.

Suppose $t\ge 2T''$ and $v\in \mathbb{R}^d$, then we have
\begin{align}
    \frac{1}{t}\sum_{s=1}^t \langle X_{\hat{a}_s,s},v\rangle^2 & \ge \frac{1}{t} \sum_{s=T''+1}^t \langle X_{\hat{a}_s,s},v\rangle^2 \nonumber\\
    & \geq \frac{1}{t}\sum_{s=T''+1}^t \langle X_{\hat{a}_s,s},v\rangle^2 \mathbbm{1}_{\{\Gamma_s\cap \mathcal{G}_{3,t}\}} \nonumber\\ &=\frac{1}{t}\sum_{s=T''+1}^t \langle X_{a_s^*,s},v\rangle^2 \mathbbm{1}_{\{\Gamma_s\}}\mathbbm{1}_{\{\mathcal{G}_{3,t}\}}\label{eq:17}
\end{align}
Since event $\mathcal{G}_{3,t}$ only depends on the history up to time $t-1$, the random variables $\{X_{a^*_t,t}|\mathcal{G}_{3,t}\}$ are i.i.d. Moreover, since $\{\mathbbm{1}_{\{\Gamma_s\}}: s\ge T''\}$ are i.i.d Bernoulli random variables, then according the Hoeffding's Lemma, we have 
\begin{equation}\label{eq:18}
    \sum_{s=T''+1}^t \mathbbm{1}_{\{\Gamma_s\}} \le \frac{1}{2} \cdot \frac{1}{2}(t-T'') \quad \text{w.p.} \, \le \mathcal{O}(e^{-c_4t}),
\end{equation}
where $c_4$ is some constant.

Now we consider the sum of $\langle X_{a_s^*,s},v\rangle^2$ over times when event $\Gamma_t$ holds
\begin{equation*}
\sum_{s:\Gamma_s \,\text{holds}} \langle X_{a_s^*,s},v\rangle^2-\mathbb{E}[\langle X_{a_s^*,s},v\rangle^2\big| \Gamma_t].
\end{equation*}

Define $Z_s=\langle X_{a_s^*,s},v\rangle^2-\mathbb{E}[\langle X_{a_s^*,s},v\rangle^2\big| \Gamma_t]$, then $Z_s$ are i.i.d. sub-exponential random variables. Moreover, $\mathbb{E}[\langle X_{a_s^*,s},v\rangle^2\big| \Gamma_t]\ge \phi_0^2 \|v\|_2^2$ according to Assumption \ref{Assump:eigenvalue}, and from Lemma \ref{lemma:8}, we have condition on event $\Gamma_t$
\begin{equation*}
        \|Z_s\|_{\psi_1}\le 2\|\langle X_{a_s^*,s},v\rangle^2\|_{\psi_1}\le 4\|\langle X_{a_s^*,s},v\rangle\|_{\psi_2}^2= 4v^T\mathbb{E}[X_{a^*_t}X_{a^*_t}^T|\Gamma_t]v\le 4\Lambda_1^2.
\end{equation*}
Then applying Bernstein inequality for sub-exponential random variables, we can have a lower bound for a fixed vector $v$ and $\epsilon=\frac{\phi^2_0}{2}$.
\begin{align}
    &P\left(|\sum_{s:\Gamma_s \,\text{holds}} Z_s|\ge \epsilon m \bigg| \sum_{s=T''+1}^t \mathbbm{1}_{\{\Gamma_s\}}=m\right) \nonumber \\
    \leq &  2\exp\left(-\tilde{c}_5\min(\frac{\epsilon^2}{\|Z_s\|_{\psi_1}^2}, \frac{\epsilon}{\|Z_s\|_{\psi_1}}) m\right) \nonumber\\
    =& 2\exp\left(-\tilde{c}_5\min(\frac{\phi_0^4}{4\|Z_s\|_{\psi_1}^2}, \frac{\phi_0^2}{2\|Z_s\|_{\psi_1}}) m\right) =\mathcal{O}(e^{-c_5m}),\label{eq:19}
\end{align}
where $\tilde{c}_5, c_5$ are constants.

Now combining inequalities \eqref{eq:17}--\eqref{eq:19}, we have for any $v$ in $\epsilon'$-net $\mathcal{N}_1$ for $S^{d-1}\cap (\cup_{|J|=C's} E_J)$, where $\epsilon$ and $C'$ need to be specified, and $t\ge 2T''$ that 
\begin{align*}
    \frac{1}{t}\sum_{s=1}^t \langle X_{\hat{a}_s,s}, v\rangle^2 &\ge 
    \frac{1}{t}\sum_{s=T''+1}^t \langle X_{a_s^*,s},v\rangle^2 \mathbbm{1}_{\{\Gamma_s\}} \ge  \frac{1}{t}\frac{\phi_0^2}{2}\frac{\frac{1}{2}\cdot(t-T'')}{2}\|v\|_2^2\ge \frac{\phi^2_0}{16},
\end{align*}
with probability greater than $1-\mathcal{O}( e^{-c_4(t-T'')}+e^{-c_5\cdot\frac{1}{4}(t-T'')})$.

Similar as before, we can extend the above inequality to all vectors in $\mathbb{C}_3(S)$ by using the approximation of $\mathcal{N}_1$, where $C=\mathcal{O}(\log K)$ and $\epsilon=\mathcal{O}(\frac{1}{\log K})$. Then for any $v\in \mathbb{C}_3(S)$ and $t\ge 2T''$, we have 
\begin{equation*}
    \hat{\Sigma}_t\in \mathcal{C}(supp(\beta_\ast),\frac{\phi_0}{8})
\end{equation*}
with probability greater than $1-\mathcal{O}(e^{s_0 \log K \log d}[ e^{-c_4(t-T'')}+e^{-c_5\cdot\frac{1}{4}(t-T'')}])=1-\mathcal{O}(e^{-t+s_0\log K \log d})$.

Then combine Proposition \ref{Prop:1} on $\mathcal{G}_{3,t}$ and the above inequality, we can prove the Proposition \ref{Prop:2}. 
\end{proof}

\subsection{Proof of Theorem \ref{Thm:2}}

Define $T''$ as the threshold in Proposition \ref{Prop:2}, then we divide the cumulative regret into three groups:
\begin{enumerate}[label=(\alph*)]
    \item Initialization when $t\le T''=\mathcal{O}(\log T+s_0K\log K\log d+\frac{\sigma^2 x_{\max}^4 s_0^2\zeta ^2K^2\log d}{\Delta_*^2})$.
    \item Times $t>T''$ when $\hat{\Sigma}_t\notin \mathcal{C}(supp(\beta_\ast),\phi_\ast)$, where $\phi_\ast=\frac{\phi_0}{8}$.
    \item Times $t>T''$ when $\hat{\Sigma}_t\in \mathcal{C}(supp(\beta_\ast),\phi_\ast)$, where $\phi_\ast=\frac{\phi_0}{8}$.
\end{enumerate}

The cumulative regret $J_1$ from time periods in group (a) at time $T$ is bounded by at most $2bx_{\max}T''=\mathcal{O}(2bx_{\max}[\log T+s_0K\log K\log d+\frac{\sigma^2 x_{\max}^4 s_0^2\zeta ^2K^2\log d}{\Delta_*^2}])$.

According to Proposition \ref{Prop:2}, we can bound the cumulative regret in group (b) at time $T$.
\begin{align*}
    J_2=& \sum_{t=T''}^T \mathbb{E}[r_t\mathbbm{1}_{\{\hat{\Sigma}_t\notin \mathcal{C}(supp(\beta_\ast),\phi_\ast)\}}]
    \le \sum_{t=T''}^T 2bx_{\max}P\left(\hat{\Sigma}_t\notin \mathcal{C}(supp(\beta_\ast),\phi_\ast)\right)\\
    \lesssim & \sum_{t=T''}^T 2bx_{\max} \left[\frac{1}{t}+e^{-t+s_0\log k\log d}\right]
    = \mathcal{O}(x_{\max}b\log T).
\end{align*}
To prove the bound of the cumulative regret in group (c), we first define events:
\begin{align*}
    \Gamma_{\xi_t}=\{\omega\in \Omega: \langle X_{a^*_t,t}, \theta_*\rangle\ge \langle X_{b,t},\theta_*\rangle +\xi_t, \; \forall b\neq a^*_t\}.
\end{align*}
In addition, we define $A_t:=\{\hat{\Sigma}_t\in \mathcal{C}(supp(\beta_\ast),\phi_\ast)\}$ for simplicity. Then the cumulative regret at time $T$ in group (c) can be writen as:
\begin{align*}
    J_3 =\sum_{t=T''}^T \mathbb{E}[r_t\mathbbm{1}_{\{A_t\cap \Gamma_{\xi_t} \}}]+\sum_{t=T''}^T \mathbb{E}[r_t\mathbbm{1}_{\{A_t\cap \Gamma_{\xi_t}^c \}}].
\end{align*}
Now we set $\xi_t=2x_{\max}\tau_{t-1}=2x_{\max}\tau_0\sqrt{\frac{\log d+\log (t-1)}{t-1}}$ with $\tau_0=\frac{384\sqrt{2}s_0 \sigma x_{\max}}{\phi_0^2}$, then when both $\Gamma_{\xi_t}$ and $A_t$ hold, we have for any $k\neq a^*_t$ that
\begin{align*}
    & \max_{\beta\in\mathcal{C}_t}\langle X_{a^*_t,t}, \beta\rangle - \max_{\beta\in\mathcal{C}_t}\langle X_{k,t}, \beta\rangle \\
    \ge & \langle X_{a^*_t,t},\beta_\ast\rangle -2x_{\max} \tau_{t-1}-\langle X_{k,t}, \beta_\ast\rangle\ge 0.
\end{align*}
So we will always select the optimal arm $a^*_t$ under the event $A_t\cap \Gamma_{\xi_t}$, and the first part of the cumulative regret in $J_3$ will be zero.

Now we consider the time when both $A_t$ and $\Gamma_{\xi_t}^c$ hold. Then if a sub-optimal arm $k\neq a^*_t$ is selected, the regret incurred will be
\begin{equation*}
     \langle X_{a^*_t,t}-X_{k,t}, \beta_\ast\rangle
    \le  \max_{\beta\in\mathcal{C}_t} \langle X_{a^*_t,t}, \beta\rangle -\max_{\beta\in\mathcal{C}_t} \langle X_{k,t}, \beta\rangle + 2x_{\max}\tau_{t-1}
    \le  2x_{\max}\tau_{t-1} = \xi_t.
\end{equation*}
Moreover, from Assumption~\ref{Assump:reward}\ref{Assump:margin}, we have 
\begin{equation*}
    P\left(\langle X_{a^*_t,t}, \beta_\ast\rangle - \max_{b\neq a^*_t}\langle X_{b,t},\beta_\ast\rangle\le \xi_t\right)\le \frac{1}{2}(\frac{\xi_t}{\Delta_\ast})^{
    \alpha}.
\end{equation*}
(i) For the $\alpha\in [0, 1]$ case, we can bound the second term in $J_3$ as
\begin{align*}
    & \sum_{t=T''}^T \mathbb{E}[r_t\mathbbm{1}_{\{A_t\cap \Gamma_{\xi_t}^c \}}]
    \le  \sum_{t=T''}^T \mathbb{E}[r_t\mathbbm{1}_{\{\text{select arm}\; i\neq s^*_t\}}]\\
    \le & \sum_{t=T''}^T \xi_t P\left(\langle X_{a^*_t,t}, \beta_\ast\rangle - \max_{b\neq a^*_t}\langle X_{b,t},\beta_\ast\rangle\le \xi_t\right)\\
    \le & \sum_{t=T''}^T \frac{1}{2\Delta_\ast^{\alpha}}\xi_t^{\alpha+1} \\
    \le & \sum_{t=1}^T \frac{2^\alpha(x_{\max}\tau_0)^{\alpha+1}}{\Delta_\ast^\alpha} \left(\frac{\log d +\log t}{t}\right)^{\frac{\alpha+1}{2}} \\
    = &
    \left\{\begin{array}{ll}
        C_2\frac{s_0^{\alpha+1}\sigma^{\alpha+1}x_{\max}^{2(\alpha+1)}}{\Delta_\ast^{\alpha}\phi_0^{2(\alpha+1)}}(\log d)^{\frac{\alpha+1}{2}} T^{\frac{1-\alpha}{2}}, & \text{when}~~\alpha\in[0,1), \\
        C_2\frac{s_0^2\sigma^2x_{\max}^4}{\Delta_\ast\phi_0^4}[\log d+\log T]\log T, &  \text{when}~~\alpha=1, 
    \end{array}\right.
\end{align*}
(ii) For the $\alpha\in (1, +\infty)$ case, the second term in $J_3$ can be bounded as
\begin{align*}
    & \sum_{t=T''}^T \mathbb{E}[r_t\mathbbm{1}_{\{A_t\cap \Gamma_{\xi_t}^c \}}] \\
    \leq & \sum_{t=T''}^T \xi_t P\left(\langle X_{a^*_t,t}, \beta_\ast\rangle - \max_{b\neq a^*_t}\langle X_{b,t},\beta_\ast\rangle\le \xi_t\right)\\
    \leq & \sum_{t=T''}^T \xi_t \cdot \min\left(1, \frac{\xi_t^{\alpha}}{2\Delta_{\ast}^{\alpha}}\right).
\end{align*}
Moreover, according to the definition of $\xi_t$, we have that
\begin{align*}
    \frac{\xi_t^{\alpha}}{2\Delta_{\ast}^{\alpha}} \geq 1 
    \iff t \lesssim T''' := \frac{4^{1-1/\alpha}x_{\max}^2\tau_0^2 \log d}{\Delta_{\ast}^2 }.
\end{align*}
Then, the above summation can be decomposed into two parts:
\begin{align}
\label{eq:42}
    & \sum_{t=T''}^T \xi_t \cdot \min\left(1, \frac{\xi_t^{\alpha}}{2\Delta_{\ast}^{\alpha}}\right) \nonumber \\
    \leq & \sum_{t=T''}^{T'''} \xi_t + \sum_{t=T'''}^T \frac{1}{2\Delta_\ast^{\alpha}}\xi_t^{\alpha+1} \nonumber \\
    \leq & x_{\max}bT''' + \frac{2^\alpha(x_{\max}\tau_0)^{\alpha+1}}{\Delta_\ast^\alpha} \sum_{t=T'''}^T \left(\frac{\log d +\log t}{t}\right)^{\frac{\alpha+1}{2}}.
\end{align}
Here for $t\geq T'''$, we have that 
\begin{align*}
    \frac{\log d + \log t}{t} \leq \frac{\log d + \log T'''}{T'''} = \mathcal{O}\left(\frac{\Delta_{\ast}^2}{4^{1-1/\alpha}x_{\max}^2\tau_0^2}\right).
\end{align*}
We define the constant $M:=\frac{\Delta_{\ast}^2}{4^{1-1/\alpha}x_{\max}^2\tau_0^2}$. Then, the second term in the right-hand side of inequality \eqref{eq:42} can be bounded by 
\begin{align*}
    & \sum_{t=T'''}^T \left(\frac{\log d +\log t}{t}\right)^{\frac{\alpha+1}{2}} \\
    \leq & \sum_{t=1}^T \left(\frac{\log d +\log t}{t}\right)^{\frac{\alpha+1}{2}} \mathbbm{1}_{\left\{\frac{\log d + \log t}{t}\leq M\right\}} \\
    \lesssim & \sum_{t=1}^T \left(\frac{\log d}{t}\right)^{\frac{\alpha+1}{2}} \mathbbm{1}_{\left\{\frac{\log d}{t}\leq M\right\}} \\
    \lesssim & \int_1^T \left(\frac{\log d}{t}\right)^{\frac{\alpha+1}{2}} \mathbbm{1}_{\left\{\frac{\log d}{t}\leq M\right\}} ~\mathrm{d}t \\
    = & \int_{\log d/T}^{\min(\log d, M)} \log d \cdot u^{\frac{\alpha+1}{2} - 2}~\mathrm{d}u \\
    \leq & \frac{2}{\alpha-1}\log d \cdot M^{\frac{\alpha-1}{2}} \\
    = & \frac{2\Delta_{\ast}^{\alpha-1}\log d}{(\alpha-1)4^{(\alpha-1)^2/2\alpha} x_{\max}^{\alpha-1}\tau_0^{\alpha-1}}.
\end{align*}
The equality in the above is due to the change of variable by taking $u=\frac{\log d}{t}$. Therefore, the second term in $J_3$ can be bounded for $\alpha \in (1, +\infty)$ as
\begin{align*}
    & \sum_{t=T''}^T \mathbb{E}[r_t\mathbbm{1}_{\{A_t\cap \Gamma_{\xi_t}^c \}}] \\
    \leq & x_{\max}bT''' + \frac{2^\alpha(x_{\max}\tau_0)^{\alpha+1}}{\Delta_\ast^\alpha} \cdot \frac{2\Delta_{\ast}^{\alpha-1}\log d}{(\alpha-1)4^{(\alpha-1)^2/2\alpha} x_{\max}^{\alpha-1}\tau_0^{\alpha-1}}\\
    \leq & x_{\max}bT''' + \frac{2^{3-1/\alpha}x_{\max}^2\tau_0^2\log d}{(\alpha-1)\Delta_{\ast}} \\
    \leq & C_2\left[\frac{bs_0^2\sigma^2 x_{\max}^5\log d}{\Delta_{\ast}^2\phi_0^4} + \frac{s_0^2\sigma^2x_{\max}^4\log d}{(\alpha-1)\Delta_{\ast}\phi_0^4}\right].
\end{align*}
(iii) For the $\alpha=+\infty$ case, Assumption~\ref{Assump:reward}\ref{Assump:margin} implies that for $\xi_t < \Delta_{\ast}$, the following inequality holds.
\begin{align*}
    P\left(\langle X_{a^{\ast}_t,t}, \beta_\ast\rangle - \max_{b\neq a^{\ast}_t}\langle X_{b,t},\beta_\ast\rangle\le \xi_t\right) = 0.
\end{align*}
Therefore, the second term in $J_3$ can be bounded as 
\begin{align*}
   \sum_{t=T''}^T \mathbb{E}[r_t\mathbbm{1}_{\{A_t\cap \Gamma_{\xi_t}^c \}}]
    & \leq  \sum_{t=T''}^T \mathbb{E}[r_t\mathbbm{1}_{\{\text{select arm}\; i\neq s^*_t\}}]\\
    & \leq  \sum_{t=T''}^T \xi_t P\left(\langle X_{a^*_t,t}, \beta_\ast\rangle - \max_{b\neq a^*_t}\langle X_{b,t},\beta_\ast\rangle\le \xi_t\right)\\
    & \leq \sum_{t=T''}^T \xi_t\mathbbm{1}_{\{\xi_t \geq \Delta_\ast\}} \\
    & \overset{(a)}{\leq} \sum_{t=T''}^T \frac{\xi_t^{\gamma+1}}{\Delta_{\ast}^{\gamma}}\\
    & = \sum_{t=T''}^T \frac{2^{\gamma+1}(x_{\max}\tau_0)^{\gamma+1}}{\Delta_\ast^\gamma} \left(\frac{\log d +\log t}{t}\right)^{\frac{\gamma+1}{2}}
\end{align*}
where constant $\gamma \geq 0$. The inequality (a) above is due to $\xi_t/\Delta_{\ast} \geq 1$. By letting $\gamma \to 1_{+}$, we can obtain an upper bound for the $\alpha=\infty $ case, i.e.,
\begin{align*}
    \sum_{t=T''}^T \mathbb{E}[r_t\mathbbm{1}_{\{A_t\cap \Gamma_{\xi_t}^c \}}] \leq C_2\frac{s_0^{2}\sigma^{2}x_{\max}^{4}\log d}{\Delta_\ast \phi_0^{4}}.
\end{align*}
By summing the cumulative regret in three groups, we can obtain the upper bound for the total expected cumulative regret of our $\ell_1$-confidence ball based method up to time $T$.


\bibliographystyle{plainnat} 
\bibliography{refs}       


\end{document}